\definecolor{green}{rgb}{0,1,0}
\definecolor{red}{rgb}{1,0,0}
\definecolor{black}{rgb}{0,0,0}
\newcommand{\gooditem}{~~\llap{$\mathbf{+}$}~~}
\newcommand{\baditem}{~~\llap{$\mathbf{-}$}~~}
\newcommand{\Moby}[0]{\software{Moby}\xspace}
\newenvironment{tcfigure}[1]
{\stepcounter{figure}%
\tcbset{enhanced,attach boxed title to top center={yshift=-3mm,yshifttext=-1mm},
  colback=white,colbacktitle=black,fonttitle=\bfseries,
  boxed title style={size=small,colframe=black},title=\figurename~\thefigure: #1}
\begin{tcolorbox}}
{\end{tcolorbox}}
\newenvironment{tcfigure*}[1]
{\begin{figure*}\stepcounter{figure}%
\tcbset{enhanced,attach boxed title to top center={yshift=-3mm,yshifttext=-1mm},
  colback=white,colbacktitle=black,fonttitle=\bfseries,
  boxed title style={size=small,colframe=black},title=\figurename~\thefigure: #1}
\begin{tcolorbox}}
{\end{tcolorbox}\end{figure*}}
\newenvironment{tceqn}[1]
{\tcbset{lefttitle=2mm,righttitle=1mm,left=2mm,title=#1}
\begin{tcolorbox}
\vspace{-5mm}}
{
\end{tcolorbox}}
\newtcolorbox{redbox}{colframe=red,enhanced,breakable,hbox}
\newtcolorbox{bluebox}{colframe=blue,enhanced,breakable}
\newcommand{\software}[1]{\textsc{#1}}
\providecommand{\before}{\hspace{.25mm}^{^-}\hspace{-.75mm}}
\providecommand{\beforef}{\hspace{.25mm}^{^-}\hspace{-1mm}}
\providecommand{\after}{\hspace{.25mm}^{^+}\hspace{-1mm}}
\newenvironment{smallbmatrix}
{\left[ \begin{smallmatrix}}
{\end{smallmatrix}  \right]}
\newcommand{\specialcell}[2][c]{%
  \begin{tabular}[#1]{@{}c@{}}#2\end{tabular}}
\providecommand{\Moby}{\textsc{Moby}\xspace}
\providecommand{\Pacer}{\textsc{Pacer}\xspace}
\providecommand{\LEMKE}{\texttt{LEMKE}\xspace}
\providecommand{\inv}[1]{{#1}^{\ensuremath{\mathsf{-1}}}} % inverse ...
\providecommand{\tr}[1]{{#1}^{\ensuremath{\mathsf{T}}}} % transpose ...
\providecommand{\norm}[1]{\left\lVert#1\right\rVert}
\providecommand{\vect}[1]{\bm#1}
\providecommand{\mat}[1]{\mathbf#1}
\newcommand*{\figuretitle}[1]{%
    {\centering%   <--------  will only affect the title because of the grouping (by the
    \textbf{#1}%              braces before \centering and behind \medskip). If you remove
    \par}%            these braces the whole body of a {figure} env will be centered.
}
\begin{document}

\title{Inverse Dynamics with Rigid Contact and Friction}

\author{Samuel Zapolsky         \and
        Evan Drumwright}

\institute{Samuel Zapolsky \at
		George Washington University \\
		Dept. of Computer Science\\
              samzapo@gwu.edu
                         \and
           Evan Drumwright \at
		George Washington University \\
		Dept. of Computer Science \\
	drum@gwu.edu
}

\date{Received: date / Accepted: date}
% The correct dates will be entered by the editor

% !TEX root = AURO.tex
\maketitle

\begin{abstract}

Inverse dynamics is used extensively in robotics and biomechanics applications. In manipulator
and legged robots, it can form the basis of an effective nonlinear control strategy by providing a robot with both accurate positional tracking and active compliance. In biomechanics applications, inverse dynamics control can approximately
determine the net torques applied at anatomical joints that correspond to an observed motion.

In the context of robot control, using inverse dynamics requires knowledge of all contact forces acting on the robot; accurately perceiving external forces applied to the  robot requires filtering and thus significant time delay.  An alternative approach has been suggested in recent literature: predicting contact and actuator forces simultaneously under the assumptions of rigid body dynamics, rigid contact, and friction. Existing such inverse dynamics approaches have used
approximations to the contact models, which permits use of fast numerical linear
algebra algorithms. In contrast, we describe inverse dynamics algorithms  that are 
derived only from first principles and use established phenomenological models like Coulomb friction.
 
	We assess these inverse dynamics algorithms in a control context using two virtual robots: a locomoting quadrupedal robot and a fixed-based manipulator gripping a box while using perfectly accurate sensor data from simulation. The data collected from these experiments gives an upper bound on the performance of such controllers \emph{in situ}.  For points of comparison, we assess performance on the same tasks with both error feedback control and inverse dynamics control with virtual contact force sensing.  
\end{abstract}

\keywords{inverse dynamics, rigid contact, impact, Coulomb friction}

\section{Introduction}
\label{section:intro}
Inverse dynamics can be a highly effective method in multiple contexts including control 
of robots and physically simulated characters and estimating muscle
forces in biomechanics. The inverse dynamics problem, which entails computing actuator (or muscular) forces that yield desired accelerations, requires knowledge of all ``external'' forces acting on the robot, character, or human. Measuring such forces is limited by the ability to instrument surfaces and filter force readings, and such filtering effectively delays force sensing to a degree unacceptable for real-time operation. An alternative is to use contact force predictions,
for which reasonable agreement between models and reality have been observed
(see, \emph{e.g.},~\citealp*{Aukes:2013}). Formulating such approaches is 
technically challenging, however, because the actuator forces are generally coupled to the
contact forces, requiring simultaneous solution. 

  	Inverse dynamics approaches that simultaneously compute contact forces exist in 
literature. Though these approaches were developed without incorporating
all of the established modeling aspects (like complementarity) and addressing all 
of the technical challenges (like inconsistent configurations) of hard contact, 
these 
methods have been demonstrated performing effectively on real robots. 
In contrast, \emph{this article focuses on formulating inverse dynamics with these established 
modeling aspects---which allows forward and inverse dynamics models to match---and addresses the technical challenges, including solvability}.

\subsection{Invertibility of the rigid contact model}
An obstacle to such a formulation has been the claim that the rigid contact model is not invertible~\citep{Todorov:2014}, implying that inverse dynamics is unsolveable for multi-rigid bodies subject to rigid contact. If forces on the multi-body \emph{other than contact forces} at state $\{\ \vect{q}, \dot{\vect{q}}\ \}$ are designated $\vect{x}$ and contact forces are designated $\vect{y}$, then the rigid contact model (to be described in detail in Section~\ref{section:related-work:LCP}) yields the relationship $\vect{y} = f_{\vect{q}, \dot{\vect{q}}}(\vect{x})$. It is then true that there exists no left inverse $g(.)$ of $f$ that provides the mapping $\vect{x} = g_{\vect{q}, \dot{\vect{q}}}(\vect{y})$ for $\vect{y} = f_{\vect{q}, \dot{\vect{q}}}(\vect{x})$. However, this article will show that there does exist a right inverse $h(.)$ of $f$ such that, for $\vect{h}_{\vect{q}, \dot{\vect{q}}}(\vect{y}) = \vect{x}$, $f_{\vect{q}, \dot{\vect{q}}}(\vect{x}) = \vect{y}$, and in Section~\ref{section:ID:Coulomb} we will show that this mapping is computable in expected polynomial time.  This article will use this mapping to formulate inverse dynamics approaches for rigid contact with both no-slip constraints and frictional surface properties.

\subsection{Indeterminacy in the rigid contact model}
The rigid contact model is also known to be susceptible to the problem of contact indeterminacy, the presence of multiple equally valid solutions to the contact force-acceleration mapping.  This indeterminacy is the factor that prevents strict invertibility and which, when used for contact force predictions in the context of inverse dynamics, can result in torque chatter that is potentially destructive for physically situated robots.  We show that computing a mapping from accelerations to contact forces that evolves without harmful torque chatter is no worse than NP-hard in the number of contacts modeled for Coulomb friction and can be calculated in polynomial time for the case of infinite (no-slip) friction.  

This article also describes a computationally tractable approach for mitigating 
torque chatter that is based upon a rigid contact model without complementarity conditions (see Sections \ref{section:related-work:LCP} and \ref{section:related-work:QP}). The model appears to
produce reasonable predictions: \cite{Anitescu:2006,Drumwright:2010b,Todorov:2014} have all used the model within simulation and physical artifacts have
yet to become apparent.

We will assess these inverse dynamics algorithms in the context of controlling a virtual locomoting robot and a fixed-base manipulator robot.  We will examine performance of error feedback and inverse dynamics controllers with virtual contact force sensors for points of
comparison. Performance will consider smoothness of torque commands, trajectory 
tracking accuracy, locomotion performance, and computation time. 

\subsection{Contributions}
 This paper provides the following contributions:

\begin{enumerate}
\item Proof that the coupled problem of computing inverse dynamics-derived torques and contact forces under the rigid body dynamics, non-impacting rigid contact, and Coulomb friction models (with linearized friction cone) is solvable in expected polynomial time.
\item An algorithm that computes inverse dynamics-derived torques without torque chatter under the rigid body dynamics model and the rigid contact model \emph{assuming no slip along the surfaces of contact}, in expected polynomial time.
\item An algorithm that yields inverse dynamics-derived torques without torque chatter under the rigid body dynamics model and the rigid, non-impacting contact model with Coulomb friction in exponential time in the number of points of contact, and hence a proof that this problem is no harder than NP-hard.
\item An algorithm that computes inverse dynamics-derived torques without torque chatter under the rigid body dynamics model and a rigid contact model with Coulomb friction but does not enforce complementarity conditions (again, see Sections \ref{section:related-work:LCP} and \ref{section:related-work:QP}), in expected polynomial time. 
\end{enumerate}

\emph{These algorithms differ in their operating assumptions}. For example, the algorithms that enforce normal complementarity (to be described in Section~\ref{section:rigid-body-contact}) assume that all contacts are non-impacting; similarly, the algorithms that do not enforce complementarity assume that bodies are impacting at \emph{one or more} points of contact. As will be explained in Section~\ref{section:discretized-idyn}, control loop period endpoint times do not necessarily coincide with contact event times, so a single algorithm must deal with both impacting and non-impacting contacts. It is an open problem of the effects of  enforcing complementarity when it should not be enforced, or vice versa. The algorithms also possess various computational strengths. As results of these open problems and varying computational strengths, we present multiple algorithms to the reader as well as a guide (see Appendix~\ref{appendix:score-idyn}) that details task use cases for these controllers.

This work validates controllers based upon these approaches in simulation to determine their performance under a variety of measurable phenomena that would not be accessible on present day physically situated robot hardware. Experiments indicating performance differentials on present day (prototype quality) hardware might occur due to control forces exciting unmodeled effects, for example.  Results derived from simulation using ideal sensing and perfect torque control indicate the limit of performance for control software described in this work.  We also validate one of the more robust presented controllers on a fixed-base manipulator robot grasping task to demonstrate operation under disparate morphological and contact modeling assumptions.  

\subsection{Contents}
%\label{alg:PPM}\label{alg:find-indices}
Section~\ref{section:related-work} describes background in
rigid body dynamics and rigid contact, as well as related work in inverse
dynamics with contact and friction. We then present the implementation of three disparate inverse dynamics formulations in Sections~\ref{section:ID:no-slip}, \ref{section:ID:Coulomb}, and \ref{section:approximate-idyn}.  With each implementation, we seek to: \1 successfully control a robot through its assigned task; \2 mitigate torque chatter from indeterminacy; \3 evenly distribute contact forces between active contacts; \4 speed computation so that the implementation can be run at realtime on standard hardware.  Section~\ref{section:ID:no-slip} presents an inverse dynamics formulation with contact force prediction that utilizes the non-impacting rigid contact model (to be described in Section~\ref{section:rigid-body-contact}) with no-slip frictional constraints. Section~\ref{section:ID:Coulomb}  presents an inverse dynamics formulation with contact force prediction that utilizes the non-impacting rigid contact model with Coulomb friction constraints. We show that the problem of mitigating torque chatter from indeterminate contact configurations is no harder than NP-hard.  Section~\ref{section:approximate-idyn} presents an inverse dynamics formulation that uses a rigid impact model and permits the contact force prediction problem to be convex.  This convexity will allow us to mitigate torque chatter from indeterminacy.
	
	Section~\ref{section:experiments} describes experimental setups for assessing the inverse dynamics formulations in the context of simulated robot control along multiple dimensions: accuracy of trajectory tracking; contact force prediction accuracy; general locomotion stability; and computational speed. Tests  are performed on terrains with varied friction and compliance.  Presented controllers are compared against both PID control and inverse dynamics control with sensed contact and perfectly accurate virtual sensors.  Assessment under both rigid and compliant contact models permits both exact and in-the-limit verification that controllers implementing these inverse dynamics approaches for control work as expected.  These experiments also examine behavior when modeling assumptions break down. Section~\ref{section:results} analyzes the findings from these experiments.

\subsection{The multi-body}
This paper centers around \emph{a multi-body}, which is the system of rigid bodies to which inverse dynamics is applied. The multi-body may come into contact with ``fixed'' parts of the environment (\emph{e.g.}, solid ground) which are sufficiently modeled as non-moving bodies--- this is often the case when simulating locomotion. Alternatively, the multi-body may contact other bodies, in which case effective inverse dynamics will require knowledge of those bodies' kinematic and dynamic properties --- necessary for manipulation tasks. 

  The articulated body approach can be extended to a multi-body to account for physically interacting with movable rigid bodies by appending the six degree-of-freedom velocities ($\vect{v}_{\textrm{cb}}$) and external wrenches ($\vect{f}_{\textrm{cb}}$) of each contacted rigid body to the velocity and external force vectors and by augmenting the generalized inertia matrix ($\mat{M}$) similarly: 
\begin{align}
	\vect{v}  &=  \tr{\begin{bmatrix} \tr{\vect{v}_{\textrm{robot}}} & \tr{\after\vect{v}_{\textrm{cb}}}\end{bmatrix}} \\
 	\vect{f}_{\textrm{ext}}  &= \tr{\begin{bmatrix} \tr{\vect{f}_{\textrm{robot}}} & \tr{\vect{f}_{\textrm{cb}}} \end{bmatrix}} \\
  \mat{M}  &= \begin{bmatrix} \mat{M}_{\textrm{robot}} & 0 \\
                                                			0 & \mat{M}_{\textrm{cb}} \end{bmatrix}
\end{align}

Without loss of generality, our presentation will hereafter consider only a single multi-body in contact with a static environment (excepting an example with a manipulator arm grasping a box in Section~\ref{section:experiments}).

\section{Background and related work}
% !TEX root = AURO.tex

\label{section:related-work}
	This section surveys the independent parts that are combined to formulate algorithms for calculating inverse dynamics torques with simultaneous contact force computation.  Section~\ref{section:complementarity-problems} discusses complementarity problems, a domain outside the purview of typical roboticists. Section~\ref{section:rigid-body-dynamics} introduces the rigid body dynamics model for Newtonian mechanics under generalized coordinates.  Section~\ref{section:rigid-body-contact} covers the rigid contact model, and unilaterally constrained contact.  Sections~\ref{section:modeling-Coulomb} --\ref{section:related-work:LCP} show how to formulate constraints on the rigid contact model to account for Coulomb friction and no-slip constraints. Section~\ref{section:related-work:QP} describe an algebraic impact model that will form the basis of one of the inverse dynamics methods. Section~\ref{section:contact-models:indeterminacy} describes the phenomenon of ``indeterminacy'' in the rigid contact model.  Lastly, Sections~\ref{section:related-work-id-robot}~and~\ref{section:related-work-id-biomech} discusses other work relevant to inverse dynamics with simultaneous contact force computation. 
	
\subsection{Complementarity problems}
\label{section:complementarity-problems}
Complementarity problems are a particular class of mathematical programming problems often used to model hard and rigid contacts. A nonlinear complementarity problem (NCP) is composed of three nonlinear constraints~\citep{Cottle:1992}, which taken together constitute a complementarity condition:
\begin{align}
\vect{x} & \ge \vect{0}\\
f(\vect{x}) & \ge \vect{0}\\
\tr{\vect{x}}f(\vect{x}) & = 0
\end{align}
where $\vect{x} \in \mathbb{R}^n$ and $f : \mathbb{R}^n \to \mathbb{R}^n$. Henceforth, we will use the following shorthand to denote a complementarity constraint:
\begin{equation}
0 \le a \perp b \ge 0
\end{equation}
which signifies that $a \ge 0, b \ge 0,$ and $a \cdot b = 0$.

A LCP, or linear complementarity problem ($\vect{r}, \mat{Q}$), where $\vect{r} \in \mathbb{R}^n$ and $\vect{Q} \in \mathbb{R}^{n \times n}$, is the linear version of this problem:
\begin{align}
\vect{w} & = \mat{Q}\vect{z} + \vect{r} \nonumber \\
\vect{w} & \ge \vect{0} \nonumber \\
\vect{z} & \ge \vect{0} \nonumber \\
\tr{\vect{z}}\vect{w} & = 0 \nonumber
\end{align}
for unknowns $\vect{z} \in \mathbb{R}^n, \vect{w} \in \mathbb{R}^n$.

Theory of LCPs has been established to a greater extent than for NCPs. For
example, theory has indicated certain classes of LCPs that are solvable, which includes both determining when a solution
does not exist and computing a solution, based on properties of the matrix $\mat{Q}$ (above). Such 
classes include positive definite matrices, positive semi-definite matrices, $P$-matrices, and $Z$-matrices, to name only a few; \citep{Murty:1988,Cottle:1992} contain far more information on complementarity problems, including algorithms for solving them. Given that the knowledge of NCPs (including algorithms for solving them) is still relatively thin, this article will relax NCP instances that correspond to contacting bodies to LCPs using a common practice, linearizing the friction cone.

Duality theory in optimization establishes a correspondence between LCPs
and quadratic programs (see~\citealp{Cottle:1992}, Pages 4 and 5) via the Karush-Kuhn-Tucker conditions; for example, positive semi-definite LCPs are equivalent to convex QPs. Algorithms for quadratic programs (QPs) can be used to solve LCPs, and vice versa.
 
\subsection{Rigid body dynamics}
\label{section:rigid-body-dynamics}

The multi rigid body dynamics (generalized Newton) equation governing the dynamics of a robot undergoing contact can be written in its generalized form as:
\begin{equation}
\label{eqn:generalized-newton}
\mat{M} \ddot{\vect{q}} = \vect{f}_C + \tr{\mat{P}}\vect{\tau} + \vect{f}_{\textrm{ext}} 
\end{equation}
Equation~\ref{eqn:generalized-newton} introduces the variables $\vect{f}_{\textrm{ext}} \in \mathbb{R}^m$ (``external'', non-actuated based, forces on the $m$ degree-of-freedom multibody, like gravity and Coriolis forces); $\vect{f}_C \in \mathbb{R}^m$ (contact forces); unknown actuator torques $\vect{\tau} \in \mathbb{R}^{nq}$ ($nq$ is the number of actuated degrees of freedom); and a binary selection matrix $\mat{P} \in \mathbb{R}^{nq \times m}$. If all of the degrees-of-freedom of the system are actuated $\mat{P}$ will be an identity matrix. For, \emph{e.g.}, legged robots, some variables in the system will correspond to unactuated, ``floating base'' degrees-of-freedom (DoF); the corresponding columns of the binary matrix $\mat{P} \in \mathbb{R}^{(m-6) \times m}$ will be zero vectors, while every other column will possess a single ``1''.

\subsection{Rigid contact model}
\label{section:rigid-body-contact}
This section will summarize existing theory of modeling non-impacting rigid contact and draws from~\cite{Stewart:1996,Trinkle:1997,Anitescu:1997}. Let us define a set of gap functions $\phi_i(\vect{x})$ (for $i=1,\ldots,q$), where gap function $i$ gives the signed distance between a link of the robot and another rigid body (part of the environment, another link of the robot, an object to be manipulated, \emph{etc.}) 

Our notation assumes independent coordinates $\vect{x}$ (and velocities $\vect{v}$ and accelerations $\dot{\vect{v}}$), and that generalized forces and inertias are also given in minimal coordinates.

 \begin{figure}[H]
 \centering
 \includegraphics[width=0.5\linewidth]{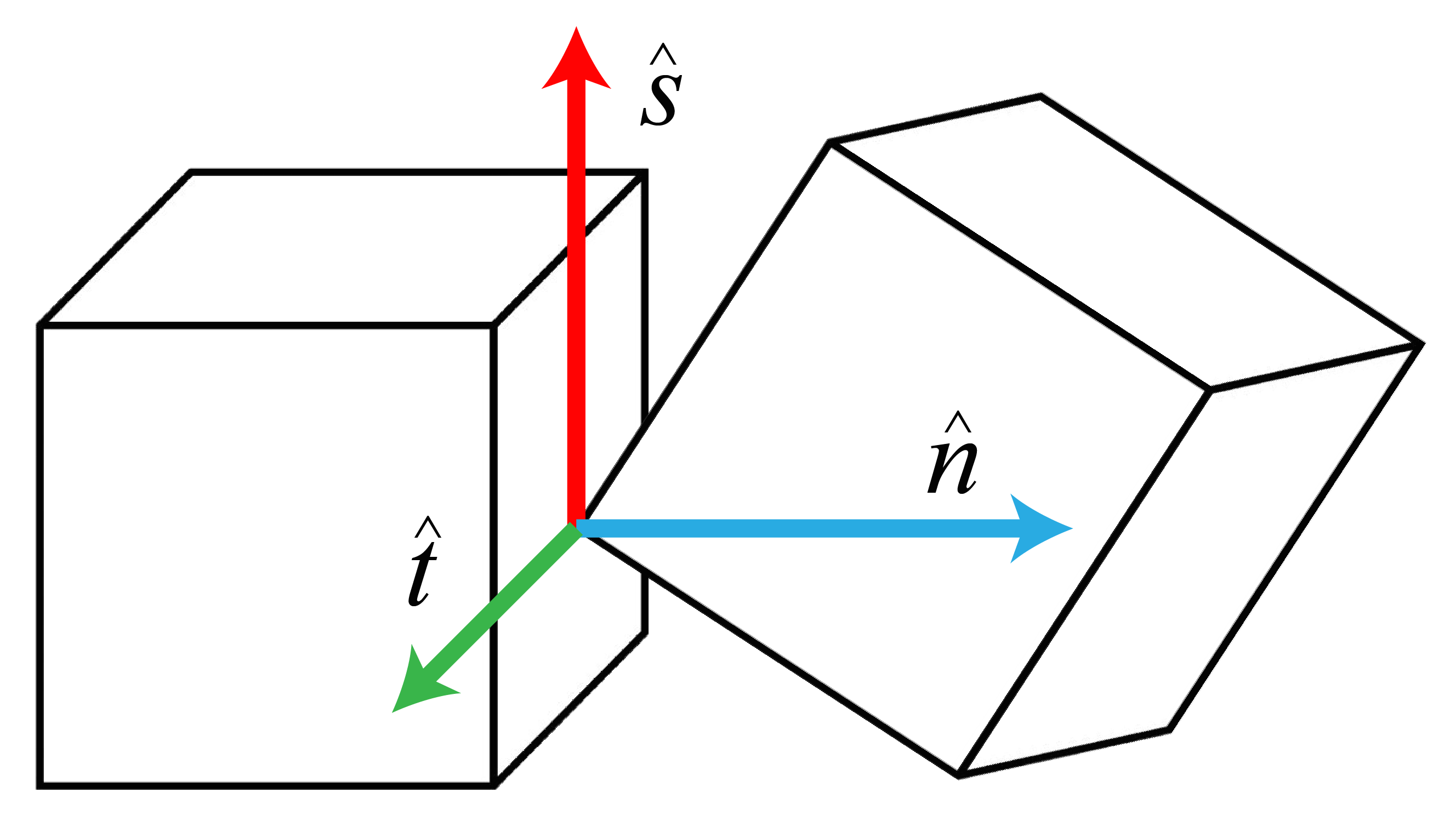}
\caption{The contact frame consisting of $\hat{\vect{n}}$, $\hat{\vect{s}}$, and , $\hat{\vect{t}}$ vectors corresponding to the normal, first tangential, and second tangential directions (for 3D) to the contact surface.}
 \end{figure}

The gap functions return a positive real value if the
bodies are separated, a negative real value if the bodies are geometrically intersecting, and zero if the bodies are in a ``kissing'' configuration. The rigid contact model specifies that bodies never overlap, \emph{i.e.}:
\begin{equation}
\phi_i(\vect{x}) \ge 0 \quad \textrm{ for } i=1,\ldots,q
\end{equation}
One or more points of contact between bodies is determined for two bodies in a kissing configuration ($\phi_i(\vect{x}) = 0$). \emph{For clarity of presentation, we will assume that each gap function corresponds to exactly one point of contact (even for disjoint bodies), so that $n = q$.}  In the absence of friction, the constraints on the gap functions are enforced 
by forces that act along the contact normal. Projecting these forces along
the contact normal yields scalars $f_{N_1}, \ldots, f_{N_n}$. The forces
should be compressive (\emph{i.e.}, forces that
can not pull bodies together), which is denoted by restricting
the sign of these scalars to be non-negative: 
\begin{equation}
f_{N_i} \ge 0 \quad \textrm{ for } i=1,\ldots,n
\end{equation}
  A \emph{complementarity} constraint keeps frictional contacts from doing
work: when the constraint is inactive (\mbox{$\phi_i > 0$}) no force is applied and when force is applied, the constraint must be active ($\phi_i = 0$). This
constraint is expressed mathematically as $\phi_i \cdot f_{N_i} = 0$. 
All three constraints can be combined into one equation using the notation in Section~\ref{section:complementarity-problems}:
\begin{align}
0 \le f_{N_i}\ \bot\ \phi_i (\vect{x}) \ge 0 \quad \textrm{ for } i=1,\ldots,n \label{eqn:normalCC}
\end{align}
These constraints can be differentiated with respect to time to yield velocity-level or acceleration-level constraints suitable for expressing the differential algebraic equations (DAEs), as an index 1 DAE:
\begin{align}
0 & \le f_{N_i}\ \bot\ \dot{\phi_i} (\vect{x}) \ge 0 \ \textrm{ if } \phi_i = 0 \quad \textrm{ for } i=1,\ldots,n \label{eqn:normalCC-vel} \\
0 & \le f_{N_i}\ \bot\ \ddot{\phi_i} (\vect{x}) \ge 0\ \textrm{ if } \phi_i = \dot{\phi}_i = 0 \ \textrm{ for } i=1,\ldots,n
\end{align}

%\TODO{define ``contact normal'' for case $\phi_i(\vect{x}) > 0$} -- see next paragraph

%When a constraint is inactive ($\phi_i > 0$) no ``contact normal'' must be defined, as the magnitude of the contact force is zero.

%For the remainder of the article, desired accelerations fed to the inverse dynamics model will be integrated into the current velocity to form desired velocity vector $\vect{v}^*_q = \vect{v} + h\dot{\vect{v}}$.

\subsubsection{Modeling Coulomb friction}
\label{section:modeling-Coulomb}
Dry friction is often simulated 
using the Coulomb model, a relatively simple, empirically derived model that yields the approximate outcome of sophisticated physical  
interactions. Coulomb friction covers two regimes: sticking/rolling friction
(where the tangent velocity at a point of contact is zero) and sliding friction (nonzero tangent velocity at a point of contact). Rolling friction is distinguished from sticking friction by whether the bodies are moving relative to one another other than at the point of contact. 

There are many phenomena Coulomb friction does not model, including ``drilling friction'' (it is straightforward to augment computational models of Coulomb friction to incorporate this feature, as seen in~\citealp{Leine:2003}), the Stribeck effect~\citep{Stribeck:1902ul}, and viscous friction, among others. This article focuses only on Coulomb friction, because it captures important stick/slip transitions and uses only a single parameter; the LuGRe model~\citep{Do:2007}, for example, is governed by seven parameters, making system identification tedious.

Coulomb friction uses a unitless friction coefficient, commonly denoted $\mu$. If we define the tangent velocities and accelerations in 3D frames (located at the $i^{\textrm{th}}$ point of contact) as $v_{S_i}/v_{T_i}$ and $a_{S_i}/a_{T_i}$, respectively, and the tangent forces as $f_{S_i}$ and $f_{T_i}$, then the sticking/rolling constraints which are applicable exactly when $0 = v_{S_i} = v_{T_i}$, can be expressed via the Fritz-John optimality conditions~\citep{Mangasarian:1967,Trinkle:1997}:
\begin{align}
0 \le \mu_i^2 f_{N_i}^2 - f_{S_i}^2 - f_{T_i}^2\ \bot\ a_{S_i}^2 + a_{T_i}^2 \ge\ & 0 \label{eqn:sticking:fcone} \\
\mu_i f_{N_i} a_{S_i} + f_{S_i} \sqrt{a_{S_i}^2 + a_{T_i}^2} = &\ 0 \label{eqn:sticking:acc1} \\
\mu_i f_{N_i} a_{T_i} + f_{T_i} \sqrt{a_{S_i}^2 + a_{T_i}^2} = &\ 0 \label{eqn:sticking:acc2} 
\end{align}
These conditions ensure that the friction force lies within the friction cone (Equation~\ref{eqn:sticking:fcone}) and that the friction forces push against the tangential acceleration (Equations~\ref{eqn:sticking:acc1}--\ref{eqn:sticking:acc2}).

In the case of sliding at the $i^{\textrm{th}}$ contact ($v_{S_i} \ne 0$ or $v_{T_i} \ne 0$), the
constraints become:
\begin{align}
\mu_i^2 f_{N_i} - f_{S_i}^2 - f_{T_i}^2 \ge & 0 \label{eqn:sliding:fcone} \\
\mu_i f_{N_i} v_{S_i} + f_{S_i} \sqrt{v_{S_i}^2 + v_{T_i}^2} = &\ 0 \label{eqn:sliding:vel1} \\
\mu_i f_{N_i} v_{T_i} + f_{T_i} \sqrt{v_{S_i}^2 + v_{T_i}^2} = &\ 0 \label{eqn:sliding:vel2} 
\end{align}
Note that this case is only applicable if \mbox{$v_{S_i}^2 + v_{T_i}^2 > 0$}, so there is
no need to include such a constraint in Equation~\ref{eqn:sliding:fcone} (as 
was necessary in Equation~\ref{eqn:sticking:fcone}). 

The rigid contact model with Coulomb friction is subject to \emph{inconsistent configurations}~\citep{Stewart:2000a}, exemplified by Painlev\'{e}'s Paradox~\citep{Painleve:1895}, in which impulsive forces may be necessary to satisfy all constraints of the model \emph{even when the bodies are not impacting}. The acceleration-level dynamics can be approximated using finite differences; a first order approximation is often used (see, \emph{e.g.},~\citealp*{Posa:2012}), which moves the problem to the velocity/impulsive force domain.  Such an approach generally uses an \emph{algebraic collision law} (see~\citealp*{Chatterjee:1998}) to model all contacts, both impacting, as inelastic impacts; typical ``time stepping methods''~\citep{Moreau:1983} for simulating dynamics often treat the generalized coordinates as constant over the small timespan of contact/impact (\emph{i.e.}, a first order approximation); see, \emph{e.g.},~\cite{Stewart:2000}. \citeauthor{Stewart:1998} has shown that this approximation converges to the solution of the continuous time formulation as the step size tends to zero~(\citeyear{Stewart:1998}).

Upon moving to the velocity/impulsive force domain, Equations~\ref{eqn:sticking:fcone}--\ref{eqn:sliding:vel2} require a slight transformation to the equations:
\begin{align}
0 \le \mu_i^2 f_{N_i}^2 - f_{S_i}^2 - f_{T_i}^2\ \bot\ v_{S_i}^2 + v_{T_i}^2 \ge\ & 0 \label{eqn:velocity-impulse-Coulomb-start} \\
\mu f_{N_i} v_{S_i} + f_{S_i} \sqrt{v_{S_i}^2 + v_{T_i}^2} = &\ 0 \label{eqn:velocity-impulse-Coulomb-middle} \\
\mu f_{N_i} v_{T_i} + f_{T_i} \sqrt{v_{S_i}^2 + v_{T_i}^2} = &\ 0 \label{eqn:velocity-impulse-Coulomb-end} 
\end{align}
and there is no longer separate consideration of sticking/rolling and slipping contacts.

\subsubsection{No-slip constraints}
\label{section:related-work:no-slip}
If the Coulomb friction constraints are replaced by no-slip constraints, which
is a popular assumption in legged locomotion research, one must also use
the discretization approach; without permitting impulsive forces, slip can 
occur even with infinite friction~\citep{Lynch:1995}. The no-slip constraints
are then simply $v_{S_i} = v_{T_i} = 0$ (replacing Equations~\ref{eqn:velocity-impulse-Coulomb-start}--\ref{eqn:velocity-impulse-Coulomb-end}).

\subsubsection{Model for rigid, non-impacting contact with Coulomb friction}
\label{section:related-work:LCP}
The model of rigid contact with Coulomb friction for two bodies in non-impacting rigid contact at $\vect{p}$ can be summarized by the following equations:
\begin{align}
0 & \leq f_n\ \bot\ a_n \geq 0 \label{eqn:NCP1} \\
0 & \leq \mu^2 f_n^2 - f_s^2 - f_t^2\ \bot\ \sqrt{v_s^2 + v_t^2} \geq 0 \label{eqn:NCP2} \\
0 & = \mu f_n v_s + f_s \sqrt{v_s^2 + v_t^2} \label{eqn:NCP3} \\ 
0 & = \mu f_n v_t + f_t \sqrt{v_s^2 + v_t^2} \label{eqn:NCP4} \\
0 & = \mu f_n a_s + f_s \sqrt{a_s^2 + a_t^2} \label{eqn:NCP5} \\ 
0 & = \mu f_n a_t + f_t \sqrt{a_s^2 + a_t^2} \label{eqn:NCP6} 
\end{align} 
where $f_n$, $f_s$, and $f_t$ are the signed magnitudes of the contact force applied along the normal and two tangent directions, respectively; $a_n$ is the relative acceleration of the bodies normal to the contact surface; and $v_s$ and $v_t$ are the relative velocities of the bodies projected along the two tangent directions. The operator $\bot$ indicates that $\vect{a}\ \cdot\ \vect{b} = 0$, for vectors $\vect{a}$ and $\vect{b}$. Detailed interpretation of these equations can be found in~\cite{Trinkle:1997}; we present a summary below. Equation~\ref{eqn:NCP1} ensures that \1 only compressive forces are applied ($f_n \geq 0$); \2 bodies cannot interpenetrate ($a_n \geq 0$); and \3 no work is done for frictionless contacts ($f_n \cdot a_n = 0$). Equation~\ref{eqn:NCP2} models Coulomb friction: either the velocity in the contact tangent plane is zero---which allows frictional forces to lie within the friction cone---or the contact is slipping and the frictional forces must lie strictly on the edge of the friction cone. Equations~\ref{eqn:NCP3} and \ref{eqn:NCP4}---applicable to sliding contacts (\emph{i.e.}, those for which $v_s \neq 0$ or $v_t \neq 0$)---constrain friction forces to act against the direction of slip, while Equations~\ref{eqn:NCP5} and \ref{eqn:NCP6} constrain frictional forces for rolling or sticking contacts (\emph{i.e.}, those for which $v_s = v_t = 0$) to act against the direction of tangential acceleration.

These equations form a nonlinear complementarity problem~\citep{Cottle:1992}, and this problem 
may not possess a solution with nonimpulsive forces due to the existence of inconsistent configurations like Painlev\'{e}'s Paradox~\citep{Stewart:2000b}. This issue led to the movement to the impulsive force/velocity domain for modeling rigid contact, which can provably model the dynamics of such inconsistent configurations. 

A separate issue with the rigid contact model is that of indeterminacy, where 
multiple sets of contact forces and possibly multiple sets of accelerations---or velocities, if an impulse-velocity model is used---can satisfy the contact model equations. Our inverse dynamics approaches, which use rigid contact models, address inconsistency and, given some additional computation, can address indeterminacy (useful for controlled systems).

\subsubsection{Contacts without complementarity}
\label{section:related-work:QP}
Complementarity along the surface normal arises from Equation~\ref{eqn:normalCC} for contacting rigid bodies that are not impacting. For impacting bodies, complementarity conditions are unrealistic~\citep{Chatterjee:1999}. Though the distinction between impacting and non-impacting may be clear in free body diagrams and symbolic mathematics, the distinction between the two modes is arbitrary in floating point arithmetic. This arbitrary distinction has led researchers in dynamic simulation, for example, to use one model---either with complementarity or without---for both impacting and non-impacting contact.  

\cite{Anitescu:2006} described a contact model without complementarity (Equation~\ref{eqn:normalCC}) used for multi-rigid body simulation. \cite{Drumwright:2010b} and \cite{Todorov:2014}
rediscovered this model (albeit with slight modifications, addition of viscous friction, and guarantees of solution existence and non-negativity energy dissipation in the former); 
\cite{Drumwright:2010b} also motivated acceptability of removing the complementarity
condition based on the work by~\cite{Chatterjee:1999}. When treated as a simultaneous 
impact
model, the model is
consistent with first principles. Additionally, using arguments in~\cite{Smith:2012}, it can be shown that solutions of this model exhibit
symmetry. This impact model, under the assumption of inelastic impact---it is
possible to model partially or fully elastic impact as well, but one must then
consider numerous models of restitution, see, \emph{e.g.},~\cite{Chatterjee:1998}---will form the basis of the
inverse dynamics approach described in Section~\ref{section:approximate-idyn}.

The model is formulated as the convex quadratic program below. For consistency of presentation with the non-impacting rigid model described in the previous section, only a single impact point is considered.

\begin{tceqn}{Complementarity-free impact model (single point of contact)}
\begin{align}
                   &_\mathrm{dissipate\ kinetic\ energy\ maximally:} \notag \\
\minimize_{\after\vect{v},\vect{f}_n,\vect{f}_s,\vect{f}_t} \ & \frac{1}{2}\tr{\after\vect{v}}\mat{M}\after\vect{v}\label{KE:objective} \\
                   &_\mathrm{non-interpenetration:} \notag \\
\textrm{subject to: } & \mat{n}\after\vect{v} \geq \vect{0} \label{TODO}\\
                   &_\mathrm{compressive\ normal\ forces} \notag \\
                   & \vect{f}_n \geq \vect{0} \\
                    &_\mathrm{Coulomb\ friction:} \notag \\
                   & \mu^2 f_{n} \geq f_{s} + f_{t} \\
                   &_\mathrm{first-order\ dynamics:} \notag \\
                   & \after\vect{v} = \before\vect{v} + \inv{\mat{M}}(\tr{\mat{n}}\vect{f}_n + \tr{\mat{s}}\vect{f}_s + \tr{\mat{t}}\vect{f}_t ) \notag \\
\end{align}
\end{tceqn}

%\begin{align}
%\minimize_{\after\vect{v}, f_n, f_s, f_t}\ \ & \frac{1}{2}\tr{\after\vect{v}}\mat{M}\after\vect{v} \label{KE:objective} \\
%\textrm{subject\ to\ } & \after\vect{v} = \before \vect{v} + \inv{\mat{M}}(\vect{n}f_n + \vect{s}f_s + \vect{t}f_t) \\
%& f_n \ge 0 \\
%& \tr{\vect{n}}\after\vect{v} \ge 0 \\
%& \mu^2 f_n^2 \ge f_s^2 + f_t^2 
%\end{align}

where $f_n$, $f_s$, and $f_t$ are the signed magnitudes of the impulsive forces applied along the normal and two tangent directions, respectively; $\before \vect{v} \in \mathbb{R}^m$ and $\after\vect{v} \in \mathbb{R}^m$ are the generalized velocities of the multi-body immediately before and after impact, respectively; $\mat{M} \in \mathbb{R}^{m \times m}$ is the generalized inertia matrix of the $m$ degree of freedom multi-body; and $\vect{n} \in \mathbb{R}^m$, $\vect{s} \in \mathbb{R}^m$, and $\vect{t} \in \mathbb{R}^m$ are generalized wrenches applied along the normal and two tangential directions at the point of contact (see Appendix~\ref{section:generalized-wrenches} for further details on these matrices).

The physical
interpretation of the impact model is straightforward: it selects impact forces that
maximize the rate of kinetic energy dissipation. Finally, we note that 
rigid impact models do not enjoy the same degree of community consensus as the 
non-impacting rigid contact models because 
three types of impact models (algebraic, incremental, and full
deformation) currently exist~\citep{Chatterjee:1998}, because simultaneous 
impacts and impacts between multi-bodies can
be highly sensitive to initial conditions~\citep{Ivanov:1995}, and because 
intuitive physical parameters for
capturing all points of the feasible impulse space do not yet exist~\citep{Chatterjee:1998}, among other issues. These difficulties lead this article to consider
only inelastic impacts, a case for which the feasible impulse space is
constrained.

\subsection{Contact force indeterminacy}
\label{section:contact-models:indeterminacy}
In previous work~\citep{Zapolsky:2013}, we found that indeterminacy in
the rigid contact model can be a significant problem for controlling quadrupedal robots (and, presumably, hexapods, \emph{etc.}) 
by yielding torques that switch rapidly between various actuators (torque chatter). The
problem can occur in bipedal walkers; for example,
\cite{Collins:2001} observed instability from rigid contact indeterminacy in passive 
walkers.  Even manipulators may also experience the phenomenon of rigid contact indeterminacy, indicated by torque chatter.

%\TODO{Add back example?}

%\begin{comment}
Rigid contact configurations can be indeterminate in terms of forces; for the
example of a table with all four legs perfectly touching a ground plane, 
infinite enumerations of force configurations satisfy the contact model (as
discussed in~\citealp*{Mirtich:1996vt}), although the 
accelerations predicted by the model are unique. Other rigid contact configurations 
can be indeterminate in terms of predicting different accelerations/velocities through multiple sets
of valid force configurations.    We present two methods of mitigating indeterminacy in this article (see Sections~\ref{section:no-slip:indeterminacy} and~\ref{section:phaseII}).  Defining a manner by which actuator torques evolve over time, or selecting a preferred distribution of contact forces may remedy the issues resulting from indeterminacy.

\subsection{Contact models for inverse dynamics in the context of robot control}
\label{section:related-work-id-robot}

%%%%%%%%%%%%%%%%%%%%%%%%%%%%%%%%%%%%%

%Genot and Espiau~\cite{Genot:1998} 
% In order to design a control law for biped walking, some proposed to monitor the contact forces while stabilizing a desired trajectory. The contact forces are monitored directly in~\cite{Genot:1998}.

%A survey on this exists in Tedrake's chapter (\cite{Wieber:2014}, pg. 37--38)
%Schaal's group~\cite{Mistry:2010,Righetti:2011,Nakanishi:2007,Righetti:2010}
This section focuses on ``hard'', by which we mean perfectly rigid, models
of contact incorporated into inverse dynamics and whole body control for
robotics. We are unaware of research that has attempted to combine inverse dynamics
with compliant contact (one possible reason for absence of such work is that such compliant models can 
require significant parameter tuning for accuracy and to prevent prediction of large contact forces).

\cite{Mistry:2010} developed a fast inverse dynamics control framework for legged robots in contact with rigid environments under the assumptions that feet do not slip. \cite{Righetti:2013} extended 
this work with a framework that permits quickly optimizing a mixed 
linear/quadratic function of motor torques and contact forces using fast
linear algebra techniques. 
% Siegwart's group~\cite{Hutter:2012}
\cite{Hutter:2012} also uses this 
formulation in an operational space control scheme, simplifying the contact mathematics by assuming contacts are sticking. \citeauthor{Mistry:2010,Righetti:2013,Hutter:2012} demonstrate effective trajectory tracking performance on 
quadrupedal robots.

% Ames~\cite{Ames:2013a}
The inverse dynamics approach of \cite{Ames:2013a} assumes sticking impact upon contact with the ground and immediate switching of support to the new contact, while enforcing a unilateral constraint of the normal forces and predicting no-slip frictional forces.  

%Tedrake's group~\cite{Kuindersma:2014}
%{Kuindersma:2014 does not clearly state how they are constraining their contact model}
\cite{Kuindersma:2014} use a no-slip constraint but allow for bounded violation of that constraint in order to avoid optimizing over an infeasible or inconsistent trajectory.

\cite{Stephens:2010} incorporate a contact model into an inverse dynamics formulation for dynamic balance force control. Their approach uses a quadratic program (QP) to estimate contact forces quickly on a simplified model of a bipedal robot's dynamics.  Newer work by \cite{Feng:2013} builds on this by approximating the friction cone with a circumscribed friction pyramid.

%Ott~\emph{et al.}~\cite{Ott:2011}
\cite{Ott:2011} also use an optimization approach for balance, modeling contact to distribute forces among a set of pre-defined contacts to enact a generalized wrench on a simplified model of a biped; their controller seeks to minimize the Euclidian norm of the predicted contact forces to mitigate slip.
% Saab~\emph{et al.}~\cite{Saab:2013}
% Zapolsky~\cite{Zapolsky:2013,Zapolsky:2014} & Drumwright~\cite{Drumwright:2010b}
In underconstrained cases (where multiple solutions to the inverse dynamics with contact system exist), \cite{Saab:2013} and \cite{Zapolsky:2013} use a multi-phase QP 
formulation for bipeds and quadrupeds, respectively.  \citeauthor{Zapolsky:2013} mitigates the indeterminacy in the rigid contact model by selecting a solution that minimizes total actuator torques, while \citeauthor{Saab:2013} use the rigid
contact model in the context of cascades of QPs to perform
several tasks in parallel (\emph{i.e.}, whole body control).  The latter work primarily considers
contacts without slip, but does describe modifications that would incorporate 
Coulomb friction (inconsistent and indeterminate rigid
contact configurations are not addressed). \cite{Todorov:2014} uses the same contact model (to be described below) but without using a two-stage solution; that approach uses regularization to make the optimization problem strictly convex (yielding a single, globally optimal solution).
None of~\citeauthor{Saab:2013,Zapolsky:2013,Todorov:2014} utilize the \emph{complementarity constraint} (\emph{i.e.}, $f_N\ \bot\ \phi$ 
in Equation~\ref{eqn:normalCC}) in their formulation.
%Todorov~\cite{Todorov:2011}
Zapolsky~\emph{et al.} and Todorov motivate dropping this constraint in favor of maximizing energy dissipation through contact, an assumption that they show performs reasonably in practice~\citep{Drumwright:2010b,Todorov:2011}.  
 
\subsection{Contact models for inverse dynamics in the context of biomechanics}
\label{section:related-work-id-biomech}
Inverse dynamics is commonly applied in biomechanics 
to determine approximate net torques at anatomical joints for observed motion
capture and force plate data. Standard Newton-Euler inverse dynamics
algorithms (as described in~\citealp*{Featherstone:2008}) are applied; 
least squares is required because the problem is overconstrained. Numerous
such approaches are found in biomechanics literature, including~\citep{Kuo:1998,Hatze:2002,Blajer:2007,Bisseling:2006,Yang:2007,Van-Den-Bogert:2008,Sawers:2010}.
These force plate based approaches necessarily limit the environments
in which the inverse dynamics computation can be conducted.

%\TODO{Section: \emph{ab initio} inverse dynamics with contact force prediction.  Outline basic constraints on contact and inverse dynamics and how they are combined to create a controller.  Good place to put \\cite{Dai:2014}: ``Many robots, including most humanoids, have actuators for every internal joint; in that case, for any desired joint acceleration there is always a joint torque to achieve such motion. As a result, if we ignore force/torque limits of the actuators, then we can ignore the internal, joint-level dynamics of the robot.''}

\section{Discretized inverse dynamics}
\label{section:discretized-idyn}
We discretize inverse dynamics because the resolution to rigid contact models both without slip and with Coulomb friction can require impulsive forces even when there are no impacts (see Section~\ref{section:modeling-Coulomb}). This choice will imply that the dynamics are accurate to only first order, but that approximation should limit modeling error considerably for typical control loop rates~\citep{Zapolsky:2015}.

As noted above, dynamics are discretized using a first order approximation to acceleration. Thus, the solution to the equation of motion $\dot{\vect{v}} = \inv{\mat{M}}\vect{f}$ over $[t_0, t_f]$ is approximated by $\after\vect{v} = \before \vect{v} + \Delta t\before \inv{\mat{M}} \beforef \vect{f}$, where $\Delta t = (t_f - t_0)$.  We use the superscript ``$^+$'' to denote that a term is evaluated at $t_f$ and the superscript ``$^-$'' is applied to denote that a term is computed at $t_0$.  For example, the generalized inertia matrix $\mat{M}$ has the superscript ``$^-$''  and the generalized post-contact velocity ($\after\vect{v}$) has the superscript ``$^+$''.  We will hereafter adopt the convention that application of a superscript once will indicate implicit evaluation of that quantity at that time thereafter (unless another superscript is applied). For example, we will continue to treat $\mat{M}$ as evaluated at $t_0$ in the remainder of this paper. 

The remainder of this section describes how contact constraints should be determined for discretized inverse dynamics.

\subsection{Incorporating contact into planned motion}
First, we note that the inverse dynamics controller attempts to realize a planned motion. That planned motion must account for pose data and geometric models of objects in the robot's environment. If planned motion is inconsistent with contact constraints, e.g., the robot attempts to push through a wall, undesirable behavior will clearly result. Obtaining accurate geometric data (at least for novel objects) and pose data are presently challenging problems; additional work in inverse dynamics control with predictive contact is necessary to address contact compliance and sensing uncertainty. 

\subsection{Incorporating contact constraints that do not coincide with control loop period endpoint times}
Contact events---making or breaking contact, transitioning from sticking to sliding or vice versa---do not generally coincide with control loop period endpoint times.  Introducing a contact constraint ``early'', i.e., before the robot comes into contact with an object, will result in a poor estimate of the load on the robot (as the anticipated link-object reaction force will be absent). Introducing a contact constraint ``late'', i.e., after the robot has already contacted the object, implies that an impact occurred; it is also likely that actuators attached to the contacted link and on up the kinematic chain are heavily loaded, resulting in possible damage to the robot, the environment, or both. Figure~\ref{fig:contacts} depicts both of these scenarios for a walking bipedal robot.

\begin{figure}[htbp]
\centering
\includegraphics[width=\linewidth]{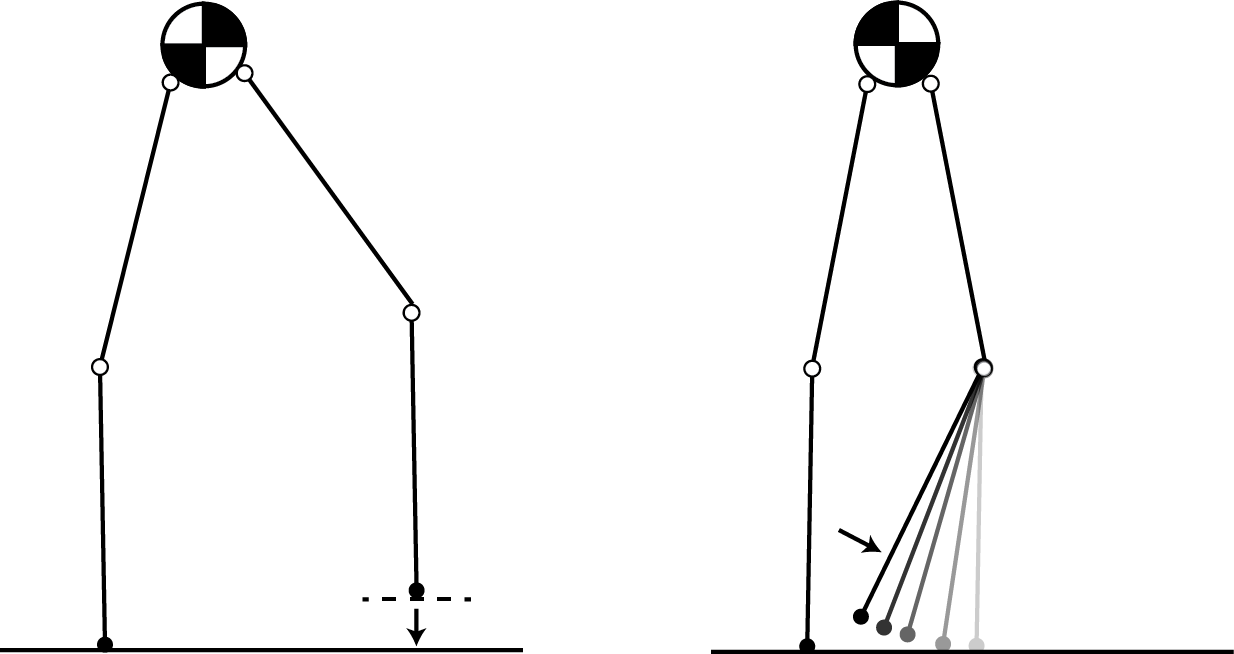}
\caption{If the contact constraint is introduced early (left figure, constraint depicted using dotted line) the anticipated load will be wrong. The biped will pitch forward, possibly falling over in this scenario. If the contact constraint is introduced late, an impact may occur while the actuators are loaded. The biped on the right is moving its right lower leg toward a foot placement; the impact as the foot touches down is prone to damaging the loaded powertrain.}
\label{fig:contacts}
\end{figure}

We address this problem by borrowing a \emph{constraint stabilization}~\citep{Ascher:1995a} approach from~\cite{Anitescu:2004}, which is itself a form of \emph{Baumgarte Stabilization}~\citep{Baumgarte:1972}. Recalling that two bodies are separated by signed distance $\phi(.)$, constraints on velocities are determined such that . 

To realize these constraints mathematically, we first convert Equation~\ref{eqn:normalCC-vel} to a discretized form:
\begin{align}
0 & \le f_{N_i}(t)\ \bot\ \dot{\phi_i} (\vect{x}(t+\Delta t)) \ge 0 \ \textrm{ if } \phi_i(t) = 0  \label{eqn:normalCC-disc}  \\ 
& \qquad \textrm{ for } i=1,\ldots,n \nonumber
\end{align}
This equation specifies that a force is to be found such that applying the force between one of the robot's links and an object, \emph{already in contact at $t$}, over the interval $[t, t+\Delta t]$ yields a relative velocity indicating sustained contact or separation at $t+\Delta t$. We next incorporate the signed distance between the bodies:
\begin{align}
0 & \le f_{N_i}(t)\ \bot\ \dot{\phi_i} (\vect{x}(t+\Delta t)) \ge -\frac{\phi(\vect{x}(t))}{\Delta t} \label{eqn:normalCC-disc}  \\ 
& \qquad \textrm{ for } i=1,\ldots,n \nonumber
\end{align}
The removal of the conditional makes the constraint always active. Introducing a constraint of this form means that forces may be applied in some scenarios when they should not be (see Figure~\ref{fig:contact-spring} for an example). Alternatively, constraints introduced before bodies contact can be contradictory, making the problem infeasible. Existing proofs for time stepping simulation approaches indicate that such issues disappear for sufficiently small integration steps (or, in the context of inverse dynamics, sufficiently high frequency control loops); see~\cite{Anitescu:2004}, which proves that such errors are uniformly bounded in terms of the size of the time step and the current value of the velocity. 

\begin{figure}[htbp]
\centering
\includegraphics[width=.7\linewidth]{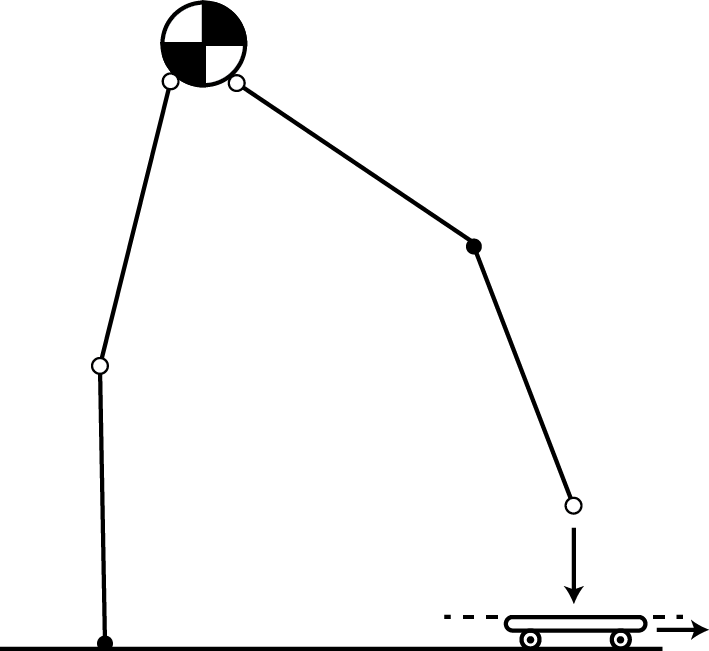}
\caption{An example of a contact constraint that overly constrains the motion between two disjoint bodies (the right foot and the skateboard). The contact constraint will keep the foot from moving below the dotted line. However, if the foot moves quickly downward and the skateboard moves quickly to the right (in $\Delta t$ time), inverse dynamics may predict, incorrectly, that a contact force will be applied to the foot. It should be apparent that these problems disappear as $\Delta t \to 0$, i.e., as the control loop frequency becomes sufficiently high.}
\label{fig:contact-spring}
\end{figure}

\subsection{Computing points of contact between geometries}
Given object poses data and geometric models, points of contact between robot links and environment objects can be computed using closest features. The particular algorithm used for computing closest features is dependent upon both the representation (e.g., implicit surface, polyhedron, constructive solid geometry) and the shape (e.g., sphere, cylinder, box). Algorithms and code can be found in sources like~\cite{Ericson:2005} and \mbox{\url{http://geometrictools.com}}. Figure~\ref{fig:contact-data} depicts the procedure for determining contact points and normals for two examples: a sphere vs. a half-space and for a sphere vs. a sphere.

For disjoint bodies like those depicted in Figure~\ref{fig:contact-data}, the contact point can be placed anywhere along the line segment connecting the closest features on the two bodies. Although the illustration depicts the contact point as placed at the midpoint of this line segment, this selection is arbitrary. Whether the contact point is located on the first body, on the second body, or midway between the two bodies, no formula is ``correct'' while the bodies are separated and every formula yields the same result when the bodies are touching.   

\begin{figure}[htbp]
\centering
\includegraphics[width=.4\linewidth]{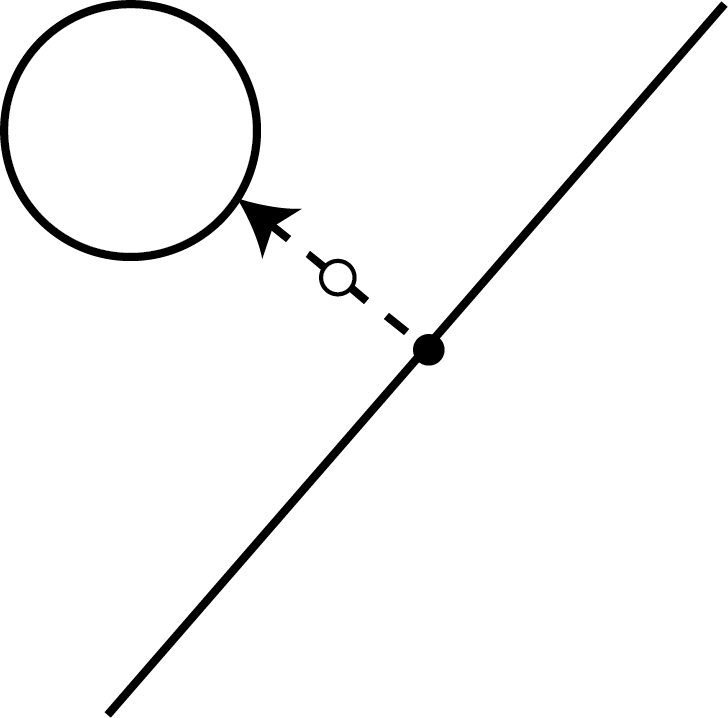} \hspace{.5cm}
\includegraphics[width=.4\linewidth]{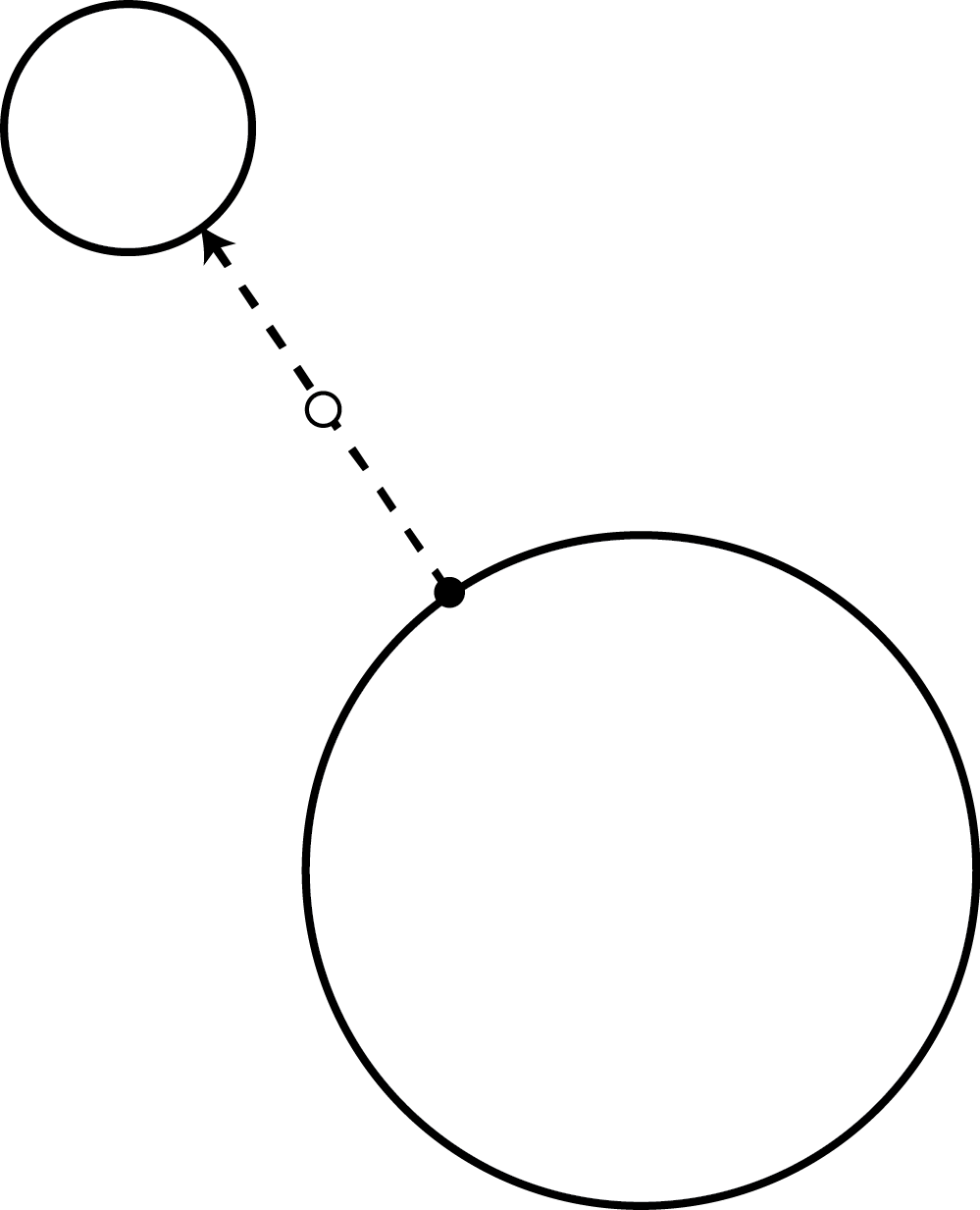}
\caption{Depiction of the manner of selecting points of contact and surface normals for disjoint bodies with spherical/half-space (left) and spherical/spherical geometries (right). Closest points on the objects are connected by dotted line segments. Surface normals are depicted with an arrow. Contact points are drawn as white circles with black outlines.}
\label{fig:contact-data}
\end{figure}

 \section{Inverse dynamics with no-slip constraints}
 \label{section:ID:no-slip}
 
Some contexts where inverse dynamics may be used (biomechanics, legged locomotion) may assume absence of slip (see, \emph{e.g.},~\citealp*{Righetti:2011, Zhao:2014}).  This section describes an inverse dynamics approach that computes reaction forces from contact using the non-impacting rigid contact model with no-slip constraints. Using no-slip constraints results in a symmetric, positive semidefinite LCP.  Such problems are equivalent to convex QPs by duality theory in optimization (see \citealp{Cottle:1992}), which implies polynomial time solvability. Convexity also permits mitigating indeterminate contact configurations, as will be seen in Section~\ref{section:no-slip:indeterminacy}.  This formulation inverts the rigid contact problem in a practical sense and is derived from first principles.

We present two algorithms in this section: Algorithm~\ref{alg:find-indices} ensures the no-slip constraints on the contact model are non-singular and thus guarantees that the inverse dynamics problem with contact is invertible; Algorithm~\ref{alg:PPM} presents a method of mitigating torque chatter from indeterminate contact (for contexts of inverse dynamics based control) by \emph{warm-starting}~\citep{Nocedal:2006} the solution for the LCP solver with the last solution.

%\TODO{Somewhere we need to address the effect of the first order discretization}

\subsection{Normal contact constraints}
\label{sec:jacobian-evaluation}
The  equation below extends Equation~\ref{eqn:normalCC-vel} to multiple points
of contact (via the relationship $\dot{\vect{\phi}} = \mat{N}\vect{v}$), where $\mat{N} \in \mathbb{R}^n$ is the matrix of generalized wrenches along the contact normals (see Appendix~\ref{section:generalized-wrenches}): 
\begin{equation}
-\frac{\before \vect{\phi}}{\Delta t} \le \before \mat{N}\after \vect{v}\ \bot\ f_N \ge \vect{0} \label{eqn:normalCC:A-P}
\end{equation}
Because $\vect{\phi}$ is clearly time-dependent and the control
loop executes at discrete points in time, $\mat{N}$ must be treated as constant
over a time interval. 
$\before \mat{N}$ indicates that points of contact
are drawn from the current configuration of the environment and multi-body.
Analogous to time-stepping approaches for rigid body simulations with rigid
contact, \emph{all possible points of contact between rigid bodies over the interval $t_0$ and $t_f$ can be
incorporated into $\mat{N}$ as well}: as in time stepping approaches for simulation, it may 
not be necessary to apply forces at all of these points (the approaches implicitly can 
treat unnecessary points of contact as inactive, though additional computation will be necessary). \cite{Stewart:1998} showed that such an 
approach will converge
to the solution of the continuous time dynamics as $\Delta t = (t_f - t_0) \to 0$.  Given a sufficiently high control rate, $\Delta t$ will be small
and errors from assuming constant $\mat{N}$ over this interval should become
negligible.
 
\subsection{Discretized rigid body dynamics equation}
The discretized version of Equation~\ref{eqn:generalized-newton}, now separating contact forces into normal $(\before \mat{N}$) and tangential wrenches ($\before \mat{S}$ and $\before \mat{T}$ are matrices of generalized wrenches along the first and second contact tangent directions for all contacts) is:
\begin{align}
\label{eqn:discretized-generalized-newton}
\mat{M}(\after \vect{v} - \before \vect{v}) =& \tr{\mat{N}}\vect{f}_N + \tr{\mat{S}}\vect{f}_S + \tr{\mat{T}}\vect{f}_T - \ldots \\
& \tr{\mat{P}}\vect{\tau} + \Delta t \before \vect{f}_{\textrm{ext}} \notag 
\end{align}
Treating inverse dynamics at the velocity level is necessary to avoid the
inconsistent configurations that can arise in the rigid contact model when forces are required to be
non-impulsive (\citealp*{Stewart:2000a}, as also noted in Section~\ref{section:modeling-Coulomb}). As noted above, Stewart has shown that for sufficiently small $\Delta t$, $\after\vect{v}$ converges to the solution of the continuous time dynamics and contact equations~(\citeyear{Stewart:1998}).

\subsection{Inverse dynamics constraint}
The inverse dynamics constraint is used to specify the desired velocities only at actuated joints:
\begin{equation}
\mat{P}\after \vect{v} = \dot{\vect{q}}_{\textrm{des}} \label{eqn:invdyn}
\end{equation}
Desired velocities $\dot{\vect{q}}_{\textrm{des}}$ are calculated as:
\begin{equation}
\dot{\vect{q}}_{\textrm{des}} \equiv \dot{\vect{q}} + \Delta t \ddot{\vect{q}}_{\textrm{des}} 
\end{equation}
\subsection{No-slip (infinite friction) constraints}
\label{section:no-slip}

	Utilizing the first-order discretization (revisit Section~\ref{section:related-work:no-slip} to see why this is necessary), preventing tangential slip
at a contact is accomplished by using the constraints:
\begin{align}
\mat{S}\after \vect{v} & = \vect{0} \label{eqn:no-slip1} \\
\mat{T}\after \vect{v} & = \vect{0} \label{eqn:no-slip2} 
\end{align}
These constraints indicate that the velocity in the tangent plane is zero at 
time $t_f$; we will find the matrix representation to be more convenient for
expression as quadratic programs and linear complementarity problems than:
\begin{align*}
v_{s_i}\after = v_{t_i}\after = 0 \textrm{ for } i=1,\ldots,n,\\
\end{align*}
\emph{i.e.}, the notation used in Section~\ref{section:modeling-Coulomb}.
All presented equations are compiled below: 

\begin{tceqn}{Complementarity-based inverse dynamics without slip}
\begin{align}
                   &_\mathrm{non-interpenetration,\ compressive\ force,\ and} \notag \\
& _\mathrm{normal\ complementarity\ constraints:} \notag \\
                   & \vect{0} \le \vect{f}_N\ \bot\ \mat{N}\after\vect{v} \ge -\frac{\before \vect{\phi}}{\Delta t} \notag \\
                   &_\mathrm{no-slip\ constraints:} \notag \\
                   & \mat{S}\after\vect{v} = \vect{0} \notag \\
                   & \mat{T}\after\vect{v} = \vect{0} \notag \\
                   &_\mathrm{inverse\ dynamics:} \notag \\
                   & \mat{P}\after\vect{v} = \dot{\vect{q}}_{\textrm{des}} \notag \\
                   &_\mathrm{first-order\ dynamics:} \notag \\
                   & \after\vect{v} = \before\vect{v} + \inv{\mat{M}}(\tr{\mat{N}}\vect{f}_N + \tr{\mat{S}}\vect{f}_S + \ldots \notag \\
                   & \qquad \quad \tr{\mat{T}}\vect{f}_T - \tr{\mat{P}}\vect{\tau} + \Delta t \vect{f}_{ext}) \notag 
\end{align}
\end{tceqn}

Combining Equations~\ref{eqn:normalCC:A-P}--\ref{eqn:invdyn}, \ref{eqn:no-slip1}, and \ref{eqn:no-slip2} into a \emph{mixed linear complementarity problem} (MLCP, see Appendix~\ref{section:LCPs}) yields:  
\footnotesize
\begin{align}
&\hspace{-1.5mm}\begin{bmatrix}
\mat{M} & -\tr{\mat{P}} & -\tr{\mat{S}} & -\tr{\mat{T}} & -\tr{\mat{N}} \\
\mat{P} & \mat{0} & \mat{0} & \mat{0} & \mat{0} \\
\mat{S} & \mat{0} & \mat{0} & \mat{0} & \mat{0} \\
\mat{T} & \mat{0} & \mat{0} & \mat{0} & \mat{0} \\
\mat{N} & \mat{0} & \mat{0} & \mat{0} & \mat{0} \\
\end{bmatrix}
\begin{bmatrix}
\after \vect{v} \\
\vect{\tau} \\
\vect{f}_S \\
\vect{f}_T \\
\vect{f}_N \\
\end{bmatrix}
\hspace{-1mm} + \hspace{-1mm}
\begin{bmatrix}
\vect{\kappa} \\
-\dot{\vect{q}}_{\textrm{des}} \\
\vect{0} \\
\vect{0} \\
\frac{\before \vect{\phi}}{\Delta t}
\end{bmatrix} \hspace{-1mm}
= \hspace{-1mm}
\begin{bmatrix}
\vect{0} \\
\vect{0} \\
\vect{0} \\
\vect{0} \\
\vect{w}_N
\end{bmatrix} \label{eqn:no-slip-MLCP1} \\
& \vect{f}_N \ge \vect{0}, \vect{w}_N \ge \vect{0}, \tr{\vect{f}}_N \vect{w}_N = 0 \label{eqn:no-slip-MLCP2} 
\end{align}
\normalsize
where $\vect{\kappa} \triangleq -\Delta t \vect{f}_{\textrm{ext}} - \mat{M}\before \vect{v}$.  We define the MLCP block matrices---in the form of Equations~\ref{eqn:MLCP-begin}--\ref{eqn:MLCP-end} from Appendix~\ref{section:LCPs}---and draw from Equations~\ref{eqn:no-slip-MLCP1} and \ref{eqn:no-slip-MLCP2} to yield:
\begin{align*}
\mat{A} & \equiv 
\begin{bmatrix}
 \mat{M} & -\tr{\mat{P}} & -\tr{\mat{S}} &-\tr{\mat{T}} \\
 \mat{P} &   \mat{0}  &   \mat{0}  &   \mat{0}  \\
 \mat{S} &   \mat{0}  &   \mat{0}  &   \mat{0}  \\
 \mat{T} &   \mat{0}  &   \mat{0}  &   \mat{0}  
\end{bmatrix} &
\mat{C} &\equiv 
\begin{bmatrix}
-\tr{\mat{N}} \\
 \mat{0} \\
 \mat{0} \\
 \mat{0} 
\end{bmatrix} \\
\mat{D} &\equiv -\tr{\mat{C}} &
\mat{B} & \equiv \mat{0}  \\
\vect{x} &\equiv
\begin{bmatrix}
   \after \vect{v}  \\ \vect{\tau} \\ \vect{f}_S \\ \vect{f}_T
\end{bmatrix} & \vect{g} &\equiv
\begin{bmatrix}
-\vect{\kappa} \\ \dot{\vect{q}}_\textrm{des} \\ \vect{0} \\ \vect{0}
\end{bmatrix}\\
\vect{y} &\equiv
  \vect{f}_N & \vect{h} &\equiv \frac{\before \vect{\phi}}{\Delta t}
\end{align*}

	Applying Equations~\ref{eqn:MLCP-LCP1} and \ref{eqn:MLCP-LCP2} (again see Appendix~\ref{section:LCPs}), we transform the MLCP to LCP $(\vect{r},\mat{Q})$.  Substituting in variables from the no-slip inverse dynamics model and then simplifying yields:
\begin{align}
\mat{Q} &\equiv \tr{\mat{C}}\inv{\mat{A}}\mat{C} \\
\vect{r} &\equiv \frac{\before \vect{\phi}}{\Delta t} + \tr{\mat{C}}\inv{\mat{A}}\vect{g}
\end{align}
The definition of matrix $\mat{A}$ from above may be singular, which would prevent inversion, and thereby, conversion from the MLCP to an LCP.  We defined $\mat{P}$ as a selection matrix with full row rank, and the generalized inertia ($\mat{M}$) is symmetric, positive definite.  If $\mat{S}$ and $\mat{T}$ have full row rank as well, or we identify the largest subset of row blocks of $\mat{S}$ and $\mat{T}$ such that full row rank is attained, $\mat{A}$ will be invertible as well
(this can be seen by applying blockwise matrix inversion identities). Algorithm~\ref{alg:find-indices} performs the task of ensuring that matrix $\mat{A}$ is invertible. Removing the linearly dependent constraints from the $\mat{A}$ matrix does not affect the solubility of the MLCP, as proved in Appendix~\ref{section:mlcp-indep-constraints}.

From \cite{Bhatia:2007}, a matrix of $\mat{Q}$'s form must be non-negative definite, \emph{i.e.}, either positive-semidefinite (PSD) or positive definite (PD).  $\mat{Q}$ is the right product of $\mat{C}$ with its transpose about a symmetric PD matrix, $\mat{A}$. Therefore, $\mat{Q}$ is symmetric and either PSD or PD.

The singularity check on Lines~\ref{line:XTiMX:1} and~\ref{line:XTiMX:2} of Algorithm~\ref{alg:find-indices} is most quickly performed using Cholesky factorization; if the factorization is successful, the matrix is non-singular. Given that $\mat{M}$ is non-singular (it is symmetric, PD), the maximum size of $\mat{X}$ in Algorithm~\ref{alg:find-indices} is $m \times m$; if $\mat{X}$ were larger, it would be singular. 

The result is that the time complexity of Algorithm~\ref{alg:find-indices} is dominated by Lines~\ref{line:XTiMX:1} and~\ref{line:XTiMX:2}. As $\mat{X}$ changes by at most one row and one column per Cholesky factorization, singularity can be checked by  $O(m^2)$ updates to an initial $O(m^3)$ Cholesky factorization. The overall time complexity is \mbox{$O(m^3 + nm^2)$}.

\begin{algorithm}[H]
\caption{\textsc{Find-Indices}($\mat{M}, \mat{P}, \mat{S}, \mat{T})$, determines the row indices ($\mathcal{S}$, and $\mathcal{T}$) of $\mat{S}$ and $\mat{T}$ such that the matrix $\mat{A}$ (Equation~\ref{eqn:MLCP-begin} in Appendix~\ref{section:LCPs}) is non-singular. \label{alg:find-indices}}
\begin{algorithmic}[1]
\State $\mathcal{S} \leftarrow \emptyset$
\State $\mathcal{T} \leftarrow \emptyset$
\For {$i = 1, \ldots, n$}
\Comment $n$ is the number of contacts
\State $\mathcal{S}^* \leftarrow \mathcal{S} \cup \{ i \}$
\State Set $\mat{X} \leftarrow \begin{bmatrix} \tr{\mat{P}} & \tr{\mat{S}}_{\mathcal{S}^*} & \tr{\mat{T}}_\mathcal{T}\end{bmatrix}$
\If {$\tr{\mat{X}}\inv{\mat{M}}\mat{X}$ not singular} \label{line:XTiMX:1}
\State $\mathcal{S} \leftarrow \mathcal{S}^*$
\EndIf
\State $\mathcal{T}^* \leftarrow \mathcal{T} \cup \{ i \}$
\State Set $\mat{X} \leftarrow \begin{bmatrix} \tr{\mat{P}} & \tr{\mat{S}}_\mathcal{S} & \tr{\mat{T}}_{\mathcal{T}^*} \end{bmatrix}$
\If {$\tr{\mat{X}}\inv{\mat{M}}\mat{X}$ not singular} \label{line:XTiMX:2}
\State $\mathcal{T} \leftarrow \mathcal{T}^*$
\EndIf
\EndFor
\State \Return $\{\mathcal{S}, \mathcal{T} \}$
\end{algorithmic}
\end{algorithm}

\subsection{Retrieving the inverse dynamics forces}
\label{section:retrieving-forces}
Once the contact forces have been determined, one solves Equations~\ref{eqn:discretized-generalized-newton} and~\ref{eqn:invdyn} for $\{ \after \vect{v}, \vect{\tau} \}$, thereby obtaining the inverse dynamics forces. While the LCP
is solvable, it is possible that the desired accelerations
are inconsistent. As an example, consider a legged robot standing on a ground
plane without slip (such a case is similar to, but not identical to infinite friction, as noted in Section~\ref{section:related-work:no-slip}), and attempting to splay its legs outward while remaining in contact with the ground. Such cases can be readily identified by verifying that $\mat{N}\after \vect{v} \ge -\frac{\before \vect{\phi}}{\Delta t}$. If this constraint is not met, consistent
desired accelerations can be determined \emph{without re-solving the LCP}. 
For example, one could determine accelerations that deviate minimally from
the desired accelerations by solving a quadratic program:
\begin{align}
\minimize_{\after \vect{v}, \vect{\tau}}\ & ||\mat{P}\after \vect{v} - \dot{\vect{q}}_{\textrm{des}}|| \\
\textrm{subject\ to:\ } & \mat{N}\after \vect{v} \ge -\frac{\before \vect{\phi}}{\Delta t} \\
&\mat{S}\after \vect{v} = \vect{0} \\
&\mat{T}\after \vect{v} = \vect{0} \\
                        & \mat{M}\after \vect{v} = \mat{M} \before \vect{v} + \tr{\mat{N}}\vect{f}_N + \tr{\mat{S}}\vect{f}_S + \ldots \\
& \qquad \qquad \tr{\mat{T}}\vect{f}_T + \tr{\mat{P}}\vect{\tau} + \Delta t \vect{f}_{\textrm{ext}}
\end{align} 
This QP is always feasible: $\vect{\tau} = \vect{0}$ ensures that\\ $\mat{N}\after \vect{v} \ge -\frac{\before \vect{\phi}}{\Delta t}$, $\mat{S}\after \vect{v} = \vect{0}$, and $
\mat{T}\after \vect{v} = \vect{0}$.

\subsection{Indeterminacy mitigation}
\label{section:no-slip:indeterminacy}
We warm start a pivoting LCP solver (see Appendix~\ref{section:pivoting}) to bias the solver toward applying forces at the same points of contact (see Figure~\ref{fig:warm-starting})---tracking points of contact using the rigid body equations of motion---as were active on the previous inverse dynamics call (see Algorithm~\ref{alg:PPM}). \cite{Kuindersma:2014} also use warm starting to solve a different QP resulting from contact force prediction. However, we use warm starting to address indeterminacy in the rigid contact model while \citeauthor{Kuindersma:2014} use it to generally speed the solution process. 
 
 \begin{tcfigure}{Warm-Starting Example}
 \centering
 \begin{tabular}{ccc}
 Iteration $i$ & Iteration $i+1$ & Iteration $i+2$ \\
 \includegraphics[width=0.3\linewidth]{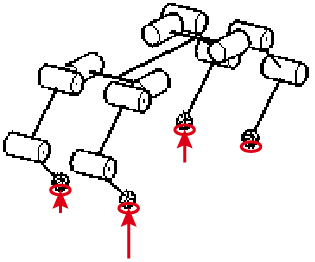}& \includegraphics[width=0.3\linewidth]{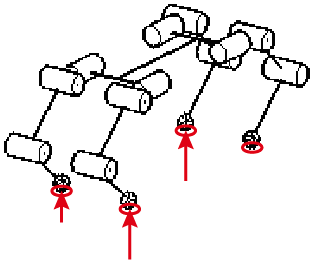} &\includegraphics[width=0.3\linewidth]{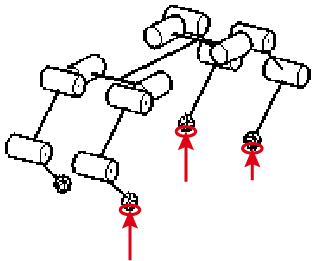}
\end{tabular}
\small
\begin{flushleft}
\emph{Left} (``cold start''): with four active contacts, the pivoting solver chooses three arbitrary \emph{non-basic} indices (in $\overline{\beta}$, see Appendix~\ref{section:pivoting}) to solve the LCP and then returns the solution. The solution applies the majority of upward force to two feet and applies a smaller amount of force to the third.  
\newline \emph{Center} (``warm start''):  With four active contacts, the pivoting solver chooses the \emph{same} three non-basic indices as the last solution to attempt to solve the LCP.  The warm-started solution will distribute upward forces similarly to the last solution, tending to provide consecutive solves with continuity over time.  
\newline \emph{Right} (``cold start''): one foot of the robot has broken contact with the ground; there are now three active contacts.  The solver returns a new solution, applying the majority of normal force to two legs, and applying a small amount of force to the third.
\addtocounter{figure}{-1}\refstepcounter{figure}\label{fig:warm-starting}
 \end{flushleft}
 \end{tcfigure}
 
Using warm starting, Algorithm~\ref{alg:PPM} will find a solution that predicts contact forces applied at the exact same points on the last iteration \emph{assuming that such a solution exists}. Such solutions do not exist \1 when the numbers and relative samples from the contact manifold change or \2 as contacts transition from active ($\dot{\phi_i}(\vect{x}, \vect{v}) \leq 0$) to inactive ($\dot{\phi}_i(\vect{x}, \vect{v}) > 0$), or vice versa. Case \2 implies immediate precedence or subsequence of case \1, which means that discontinuities in actuator torques will occur for at most two control loop iterations around a contact change (one discontinuity will generally occur due to the contact change itself).    
%\TODO{cite Pratt}

\subsection{Scaling inverse dynamics runtime linearly in number of contacts}
\label{section:ID:scaling-linearly}
The multi-body's number of generalized coordinates ($m$) are expected to remain constant. The number of contact points, $n$, depends
on the multi-body's link geometries, the environment, and whether the inverse dynamics approach should anticipate \emph{potential} contacts in $[t_0, t_f]$ (as discussed in Section~\ref{sec:jacobian-evaluation}). This
section describes a method to solve inverse dynamics problems with simultaneous
contact force computation that scales linearly with additional contacts. This
method will be applicable to all inverse dynamics approaches presented in
this paper except that described in Section~\ref{section:ID:Coulomb}: that problem results
in a copositive LCP~\citep{Cottle:1992} that the algorithm we
describe cannot generally solve. 

To this point in the article presentation, time complexity has been dominated by the $O(n^3)$ expected time solution to the LCPs. However, a system with $m$ degrees-of-freedom requires no more than $m$ positive force magnitudes applied along the contact normals to satisfy the constraints for the no-slip contact model. %We now prove this statement.
Proof is provided in Appendix~\ref{thm:maxcard}. Below, we describe how that proof can be leveraged to generally decrease the expected time complexity.

\paragraph{Modified PPM~I Algorithm}
We now describe a modification to the Principal Pivoting Method~I \citep{Cottle:1992} (PPM) for solving LCPs (see description of this algorithm in Appendix~\ref{section:pivoting}) that leverages the proof in Appendix~\ref{thm:maxcard} to attain expected $O(m^3 + nm^2)$ time complexity. A brief explanation of the mechanics of pivoting algorithms is provided in Appendix~\ref{section:pivoting}; we use the common notation of $\beta$ as the set of \emph{basic variables} and $\overline{\beta}$ as the set of \emph{non-basic variables}.

The PPM requires few modifications toward our purpose. These modifications are presented in Algorithm~\ref{alg:PPM}. First, the full matrix $\mat{N} \inv{\mat{M}} \tr{\mat{N}}$ is never constructed, because the construction is unnecessary and would require $O(n^3)$ time. Instead, Line~11 of the algorithm constructs a maximum $m \times m$ system; thus, that operation requires only $O(m^3)$ operations. Similarly, Lines 12 and 13 also leverage the proof from Appendix~\ref{thm:maxcard} to compute $\vect{w}^\dag$ and $\vect{a}^\dag$ efficiently (though these operations do not affect the asymptotic time complexity).
Expecting that the number of iterations for a pivoting algorithm is $O(n)$ in the size of the input~\citep{Cottle:1992} and assuming that each iteration requires at most two pivot operations (each rank-1 update operation to a matrix factorization will exhibit $O(m^2)$ time complexity), the asymptotic complexity of the modified PPM~I algorithm is $O(m^3 + m^2n)$. The termination conditions for the algorithm are not affected by our modifications.

Finally, we note that \cite{Baraff:1994} has proven that LCPs of the form $(\mat{H}\vect{w}, \mat{H}\inv{\mat{Q}}\tr{\mat{H}})$, where $\mat{H} \in \mathbb{R}^{p \times q}, \mat{Q} \in \mathbb{R}^{q \times q}, \mat{w} \in \mathbb{R}^q$, and $\mat{Q}$ is symmetric, PD are solvable. Thus, the inverse dynamics model will always
possess a solution.

\begin{algorithm}[H]
\footnotesize
\begin{algorithmic}[1]
\caption{\label{alg:PPM}$\{ \vect{z}, \vect{w}, \overline{\mathcal{B}} \} = $\textsc{ PPM}$(\vect{N}, \vect{M}, \vect{f}^*, \vect{z}^-)$ Solves the LCP $(\mat{N}\inv{\mat{M}}\vect{f}^*, \mat{N}\inv{\mat{M}}\tr{\mat{N}})$ resulting from convex, rigid contact models (the no-slip model and the complementarity-free model with Coulomb friction) . $\overline{\mathcal{B}}^*$ are the set of non-basic indices returned from the last call to \textsc{PPM}.}
\State $n \leftarrow $ rows($\mat{N}$)
\State $\vect{r} \leftarrow \mat{N}\cdot\vect{f}^*$
\State $i \leftarrow \argmin_i r_i$
\Comment {Check for trivial solution}
\If{$r_i \ge 0$}
\State \textbf{return}\ $\{ \vect{0}, \vect{r} \}$
\EndIf
\State $\overline{\mathcal{B}} \leftarrow \overline{\mathcal{B}}^*$
\If{$\overline{\mathcal{B}} = \emptyset$}
\State $\overline{\mathcal{B}} \leftarrow \{ i \}$
\Comment {Establish initial nonbasic indices}
\EndIf
\State $\mathcal{B} \leftarrow \{ 1, \ldots, n \} - \overline{\mathcal{B}}$
\Comment {Establish basic indices}
\While{\emph{true}}
\State $\mat{A} \leftarrow \mat{N}_{\overline{\mathcal{B}}} \cdot \inv{\mat{M}} \cdot \tr{\mat{N}_{\overline{\mathcal{B}}}}$ 
\State $\vect{b} \leftarrow \mat{N}_{\overline{\mathcal{B}}} \cdot \vect{f}^*$
\State $\vect{z}^\dag \leftarrow \inv{\mat{A}} \cdot -\vect{b}$ 
\Comment {Compute $\vect{z}$ non-basic components}
\State $\vect{a}^\dag \leftarrow \inv{\mat{M}} \cdot \tr{\mat{N}_{\overline{\mathcal{B}}}} \vect{z}^\dag + \vect{f}^*$
\State $\vect{w}^\dag \leftarrow \mat{N} \cdot \vect{a}^\dag$
\State $i \leftarrow \argmin_i w^\dag_i$
\Comment {Search for index to move into non-basic set}
\If{$w^\dag_i \ge 0$} 
  \State $j \leftarrow \argmin_i \vect{z}^\dag_i$
  \Comment {No index to move into the non-basic set; search for index to move into the basic set}
  \If{$z^\dag_j < 0$}
    \State $k \leftarrow \overline{\mathcal{B}}(j)$
    \State $\mathcal{B} \leftarrow \mathcal{B} \cup \{ k \}$
    \Comment {Move index $k$ into basic set}
    \State $\overline{\mathcal{B}} \leftarrow \overline{\mathcal{B}} - \{ k \}$
    \State \textbf{continue}
  \Else
    \State $\vect{z} \leftarrow \vect{0}$
    \State $\vect{z}_{\overline{\mathcal{B}}} \leftarrow \vect{z}^\dag$
    \State $\vect{w} \leftarrow \vect{0}$
    \State $\vect{w}_{\mathcal{B}} \leftarrow \vect{w}^\dag$
    \State \textbf{return} $\{ \vect{z}, \vect{w} \}$
  \EndIf
\Else
  \State $\overline{\mathcal{B}} \leftarrow \overline{\mathcal{B}} \cup \{ i \}$
  \Comment{Move index $i$ into non-basic set}
  \State $\mathcal{B} \leftarrow \mathcal{B} - \{ i \}$
  \State $j \leftarrow \argmin_i \vect{z}^\dag_i$
  \Comment {Try to find an index to move into the basic set}
  \If{$z^\dag_j < 0$}
    \State $k \leftarrow \overline{\mathcal{B}}(j)$
    \State $\mathcal{B} \leftarrow \mathcal{B} \cup \{ k \}$
    \Comment {Move index $k$ into basic set}
    \State $\overline{\mathcal{B}} \leftarrow \overline{\mathcal{B}} - \{ k \}$
  \EndIf
\EndIf
\EndWhile
\end{algorithmic}
\end{algorithm}

\section{Inverse dynamics with Coulomb friction} 
\label{section:ID:Coulomb}

	Though control with the absence of slip may facilitate grip and traction, the assumption is often not the case in reality.  Foot and manipulator geometries, and planned trajectories must be specially planned to prevent sliding contact, and assuming sticking contact may lead to disastrous results (see discussion on experimental results in Section~\ref{section:verificaton}).  Implementations of controllers that limit actuator forces to keep contact forces within the bounds of a friction constraint have been suggested to reduce the occurrence of unintentional slip in walking robots~\citep{Righetti:2013}. These methods also limit the reachable space of accelerative trajectories that the robot may follow, as all movements yielding sliding contact would be infeasible.  The model we present in this section permits sliding contact.  We formulate inverse dynamics using the computational model of rigid contact with Coulomb friction developed by~\cite{Stewart:1996} and \cite{Anitescu:1997}; the equations in this section closely follow those in~\cite{Anitescu:1997}. 

\subsection{Coulomb friction constraints}
Still utilizing the first-order discretization of the rigid body dynamics (Equation~\ref{eqn:discretized-generalized-newton}), we reproduce the linearized Coulomb friction constraints from~\cite{Anitescu:1997} without explanation (identical except for slight notational differences):
\begin{align}
& \vect{0} \le \mat{E}\vect{\lambda} + \before \mat{F}\after \vect{v}\ \bot\ \vect{f}_F \ge \vect{0} \label{eqn:A-P:Coulomb1} 
\\
& \vect{0} \le \boldsymbol{\mu} f_N - \tr{\mat{E}}\vect{f}_F\ \bot\ \vect{\lambda} \ge \vect{0} \label{eqn:A-P:Coulomb2} 
\end{align}
where $\mat{E} \in \mathbb{R}^{n \times nk}$ ($k$ is the number of edges in the polygonal approximation to the friction cone) retains its definition from~\cite{Anitescu:1997} as a sparse selection matrix containing blocks of ``ones'' vectors, $\boldsymbol{\mu} \in \mathbb{R}^{n \times n}$ is a diagonal matrix with elements corresponding to the coefficients of friction at the $n$ contacts, $\vect{\lambda}$ is a variable
roughly equivalent to magnitude of tangent velocities after contact forces
are applied, and $\mat{F} \in \mathbb{R}^{nk \times m}$ (equivalent to $\mat{D}$
in~\citealp*{Anitescu:1997}) is the matrix of wrenches of
frictional forces at $k$ tangents to each contact point. If the friction cone
is approximated by a pyramid (an assumption we make in the remainder of the
article), then:
\begin{align}
	\mat{F} &\equiv \tr{\begin{bmatrix}\tr{\mat{S}}\ &\ -\tr{\mat{S}}\ &\ \tr{\mat{T}} &\ -\tr{\mat{T}} \end{bmatrix}} \notag \\
	\vect{f}_F &\equiv \tr{\begin{bmatrix} \tr{\vect{f}_S^+} & \tr{\vect{f}_S^-}  & \tr{\vect{f}_T^+} & \tr{\vect{f}_T^-} \end{bmatrix}}  \notag
\end{align}
	where $\vect{f}_S = \vect{f}_S^+ - \vect{f}_S^-$ and $\vect{f}_T = \vect{f}_T^+ - \vect{f}_T^-$.  Given these substitutions, the contact model with inverse dynamics becomes:

\begin{tceqn}{Complementarity-based inverse dynamics}
\begin{align}
                   &_\mathrm{non-interpenetration,\ compressive\ force,\ and} \notag \\
& _\mathrm{normal\ complementarity\ constraints:} \notag \\
                   & \vect{0} \le \vect{f}_N\ \bot\ \mat{N}\after\vect{v} \ge -\frac{\before \vect{\phi}}{\Delta t} \\
                   &_\mathrm{Coulomb\ friction\ constraints:} \notag \\
& \vect{0} \le \lambda \vect{e} + \mat{F}\after \vect{v}\ \bot\ \vect{f}_F \ge \vect{0} \\ 
& \vect{0} \le \mu f_N - \tr{\vect{e}}\vect{f}_F\ \bot\ \boldsymbol{\lambda} \ge \vect{0} \\
                   &_\mathrm{inverse\ dynamics:} \notag \\
                   & \mat{P}\after\vect{v} = \dot{\vect{q}}_{\textrm{des}}\\
                   &_\mathrm{first-order\ dynamics:} \notag \\
                   & \after\vect{v} = \before\vect{v} + \inv{\mat{M}}(\tr{\mat{N}}\vect{f}_N + \ldots \\
                   & \qquad \quad \tr{\mat{F}}\vect{f}_F + \Delta t\vect{f}_{ext} - \tr{\mat{P}}\vect{\tau}) \notag 
\end{align}
\end{tceqn}

\subsection{Resulting MLCP}
Combining Equations~\ref{eqn:normalCC:A-P}--\ref{eqn:invdyn} and~\ref{eqn:A-P:Coulomb1}--\ref{eqn:A-P:Coulomb2} results in the MLCP:
\footnotesize
\begin{align}
&\begin{bmatrix}
\mat{M} & -\tr{\mat{P}} & -\tr{\mat{N}} & -\tr{\mat{F}} & \mat{0} \\
\mat{P} & \mat{0} & \mat{0} & \mat{0} & \mat{0} \\
\mat{N} & \mat{0} & \mat{0} & \mat{0} & \mat{0} \\
\mat{F} & \mat{0} & \mat{0} & \mat{0} & \mat{E} \\
\mat{0} & \mat{0} & \mat{\mu} & -\tr{\mat{E}} & \mat{0} 
\end{bmatrix}
\begin{bmatrix}
\after \vect{v} \\
\vect{\tau} \\
\vect{f}_N \\
\vect{f}_F \\
\vect{\lambda}
\end{bmatrix}
\hspace{-1mm} + \hspace{-1mm}
\begin{bmatrix}
-\vect{\kappa} \\
-\dot{\vect{q}}_{\textrm{des}} \\
\frac{\before \vect{\phi}}{\Delta t} \\
\vect{0} \\
\vect{0}
\end{bmatrix}
\hspace{-1mm} = \hspace{-1mm}
\begin{bmatrix}
\vect{0} \\
\vect{0} \\
\vect{w}_N \\
\vect{w}_F \\
\vect{w}_{\lambda}
\end{bmatrix} \label{eqn:A-P:MLCP-start} \\
& \vect{f}_N \ge \vect{0}, \vect{w}_N \ge \vect{0}, \tr{\vect{f}}_N \vect{w}_N = 0\\
& \vect{f}_F \ge \vect{0}, \vect{w}_F \ge \vect{0}, \tr{\vect{f}}_F \vect{w}_F = 0\\
& \vect{\lambda} \ge \vect{0}, \vect{w}_\lambda \ge \vect{0}, \tr{\vect{\lambda}} \vect{w}_\lambda = 0 \label{eqn:A-P:MLCP-end}
\end{align}
\normalsize
 Vectors $\vect{w}_N$ and $\vect{w}_F$ correspond to the normal and tangential velocities after impulsive forces have been applied.

\subsubsection{Transformation to LCP and proof of solution existence}
The MLCP can be transformed to a LCP as described by~\cite{Cottle:1992} by
solving for the unconstrained variables $\after \vect{v}$ and $\vect{\tau}$. This transformation is possible because the matrix:
\begin{equation}
\mat{X} \equiv
\begin{bmatrix}
\mat{M} & \tr{\mat{P}} \\
\mat{P} & \mat{0}
\end{bmatrix} 
\end{equation} 
is non-singular.  Proof comes from blockwise invertibility of this matrix, which requires only invertibility of $\mat{M}$ (guaranteed because generalized inertia matrices are positive definite) and $\mat{P}\inv{\mat{M}}\tr{\mat{P}}$. This latter matrix selects exactly those rows and columns corresponding to the joint space inertia matrix~\citep{Featherstone:1987}, which is also positive definite. After eliminating the unconstrained variables $\after \vect{v}$ and $\vect{\tau}$, the following LCP results:

\footnotesize

\begin{align}
& \begin{smallbmatrix}
\mat{N}\inv{\mat{M}}\tr{\mat{N}} & \mat{N}\inv{\mat{M}}\tr{\mat{F}} & \mat{E} \\
\mat{F}\inv{\mat{M}}\tr{\mat{N}} & \mat{F}\inv{\mat{M}}\tr{\mat{F}} & \mat{0} \\
-\tr{\mat{E}} & \boldsymbol{\mu} & \mat{0}
\end{smallbmatrix}
\begin{smallbmatrix}
\vect{f}_N \\
\vect{f}_F \\
\vect{\lambda}
\end{smallbmatrix}
\hspace{-1mm} + \hspace{-1mm}
\begin{smallbmatrix}
\frac{\before \vect{\phi}}{\Delta t} - \mat{N}\inv{\mat{M}}\vect{\kappa} \\
-\mat{F}\inv{\mat{M}}\vect{\kappa} \\
\vect{0}
\end{smallbmatrix}
\hspace{-1mm} = \hspace{-1mm}
\begin{smallbmatrix}
\vect{w}_N \\
\vect{w}_F \\
\vect{w}_\lambda
\end{smallbmatrix} \label{eqn:LCP-matrix} \\
& \vect{f}_N \ge \vect{0}, \vect{w}_N \ge \vect{0}, \tr{\vect{f}}_N \vect{w}_N = 0 \label{eqn:LCP-start} \\
& \vect{f}_F \ge \vect{0}, \vect{w}_F \ge \vect{0}, \tr{\vect{f}}_F \vect{w}_F = 0\\
& \vect{\lambda} \ge \vect{0}, \vect{w}_\lambda \ge \vect{0}, \tr{\vect{\lambda}} \vect{w}_\lambda = 0 \label{eqn:LCP-end}
\end{align} 
\normalsize

The discussion in~\cite{Stewart:1996} can be used to show that this LCP matrix is \emph{copositive} (see~\citealp*{Cottle:1992}, Definition 3.8.1), since for any vector $\vect{z} = \tr{\begin{bmatrix} \tr{\vect{f}_N} & \tr{\vect{f}_F} & \tr{\vect{\lambda}} \end{bmatrix}} \geq \vect{0}$,

\begin{align}
\tr{\begin{bmatrix} \vect{f}_N \\ \vect{f}_F \\ \vect{\lambda} \end{bmatrix}}  
\begin{bmatrix}
\mat{N}\inv{\mat{M}}\tr{\mat{N}} & \mat{N}\inv{\mat{M}}\tr{\mat{F}} & \mat{E} \\
\mat{F}\inv{\mat{M}}\tr{\mat{N}} & \mat{F}\inv{\mat{M}}\tr{\mat{F}} & \mat{0} \\
 \boldsymbol{\mu} & -\tr{\mat{E}}& \mat{0}
\end{bmatrix}
\begin{bmatrix} \vect{f}_N \\ \vect{f}_F \\ \vect{\lambda} \end{bmatrix} = \notag \\\tr{(\tr{\mat{N}}\vect{f}_N + \tr{\mat{F}}\vect{f}_N)}\inv{\mat{M}}(\tr{\mat{N}}\vect{f}_N + \tr{\mat{F}}\vect{f}_N) + \tr{\vect{f}_N}\boldsymbol{\mu}\vect{\lambda} \geq 0
\end{align}

because $\inv{\mat{M}}$ is positive definite and $\boldsymbol{\mu}$ is a diagonal matrix with non-negative elements. The transformation from the MLCP to the LCP yields $(k+2)n$ LCP variables (the per-contact allocation is: one for the normal contact magnitude, $k$ for the frictional force components, and one for an element of $\vect{\lambda}$) and at most $2m$ unconstrained variables.

As noted by~\cite{Anitescu:1997}, Lemke's Algorithm can provably solve such copositive LCPs~\citep{Cottle:1992} if
precautions are taken to prevent cycling through indices. After solving the LCP, joint torques can be retrieved exactly as in Section~\ref{section:retrieving-forces}. \emph{Thus, we have shown---using elementary extensions to the work in~\cite{Anitescu:1997}---that a right inverse of the non-impacting rigid contact model
exists} (as first broached in Section~\ref{section:intro}). Additionally, the
expected running time of Lemke's Algorithm is cubic in the number of variables,
so this inverse can be computed in expected polynomial time.

\subsection{Contact indeterminacy}
	Though the approach presented in this section produces a solution to the inverse dynamics problem with simultaneous contact force computation, the approach can converge to a vastly different, but equally valid solution at each controller iteration.  However, unlike the no-slip model, it is unclear how to bias the solution to the LCP, because a means for warm starting Lemke's Algorithm is currently unknown (our experience confirms the common wisdom that using the basis corresponding to the last solution usually leads to excessive pivoting).
	
	%We suggest an algorithm that yields smooth inverse dynamics-derived motor torques under the rigid body dynamics model and rigid contact model with Coulomb friction in exponential time in the number of points of contact, and hence a proof that this problem is no harder than NP-hard.

	Generating a torque profile that would evolve without generating torque chatter requires checking all possible solutions of the LCP if this approach is to be used for robot control.  Generating all solutions requires a linear system solve for each combination of basic and non-basic indices among all problem variables.  Enumerating all possible solutions yields exponential time complexity, the same as the worst case complexity of Lemke's Algorithm~\citep{Lemke:1965}.
	% is $2^{O(n)}$~\citep{Cottle:1992}, essentially performing all possible pivots to the system of linear equations before returning a solution.  
	After all solutions have been enumerated, torque chatter would be 
eliminated by using the solution that differs minimally from the last solution.
Algorithm~\ref{alg:MINDIFF} presents this approach.
%\if false
\begin{algorithm}[htpb]
\footnotesize
\begin{algorithmic}[1]
\caption{\label{alg:MINDIFF}$\{ \vect{x}, \epsilon \} = $\textsc{MINDIFF}$(\mat{A},\mat{B},\mat{C},\mat{D}, \vect{g},\vect{h}, \overline{\mathcal{B}}, \vect{x}_{0},n)$ Computes the solution to the LCP \mbox{$(\vect{h} - \mat{D}\inv{\mat{A}}\vect{g}, \mat{B} - \mat{D}\inv{\mat{A}}\mat{C})$} that is closest (by Euclidean norm) to vector $\vect{x}_0$ using a recursive approach.  $n$ is always initialized as rows($\mat{B}$).}
\If{$n > 0$}
	\State $\{ \vect{x}_{1}, \epsilon_1 \} = $\textsc{MINDIFF}$(\mat{A},\mat{B},\mat{C},\mat{D}, \vect{g},\vect{h}, \overline{\mathcal{B}}, \vect{x}_{0}, n-1)$
	\State $\overline{\mathcal{B}} \leftarrow \{\overline{\mathcal{B}}, n \}$
	\Comment {Establish nonbasic indices}
	\State $\{ \vect{x}_{2}, \epsilon_2 \} = $\textsc{MINDIFF}$(\mat{A},\mat{B},\mat{C},\mat{D}, \vect{g},\vect{h}, \overline{\mathcal{B}}, \vect{x}_{0}, n-1)$
	\If{ $\epsilon_1 < \epsilon_2$ } \textbf{return} $\{ \vect{x}_{1}, \epsilon_1 \} $
	\Else\quad \textbf{return} $\{ \vect{x}_{2}, \epsilon_2 \} $
	\EndIf
\Else
	\State $\{\vect{z},\vect{w}\} = \mathrm{LCP}(\vect{h}_{nb} - \mat{D}_{nb}\inv{\mat{A}}\vect{g}, \mat{B}_{nb} - \mat{D}_{nb}\inv{\mat{A}}\mat{C}_{nb})$
	\State $\vect{x} \leftarrow \inv{\mat{A}}(\mat{C}_{nb}\vect{z}_{nb} + \vect{g})$
	\State \textbf{return} $\{\vect{x}, \norm{\vect{x} - \vect{x}_{0}}\}$
\EndIf
\end{algorithmic}
\end{algorithm}
%\fi

The fact that we can enumerate all solutions to the problem in exponential time proves that solving the problem is at worst NP-hard, though following an enumerative approach is not practical.    

%%%%%%%%%%%%%%%%%%%%%%%%%%%%%%%%%

\section{Convex inverse dynamics without normal complementarity}
\label{section:approximate-idyn}

This section describes an approach for inverse dynamics that mitigates
indeterminacy in rigid contact using the impact model described in Section~\ref{section:related-work:QP}. The approach is almost identical to the ``standard''
rigid contact model described in Section~\ref{section:rigid-body-contact}, but for the absence of the normal complementarity
constraint. 

The approach
works by determining contact and actual forces in a first step and then
solving within the nullspace of the objective function (Equation~\ref{KE:objective}) such
that joint forces are minimized. The resulting problem is strictly convex,
and thus torques are continuous in time (and more likely safe for a robot to
apply) if the underlying dynamics are smooth.
This latter assumption is violated only when a discontinuity occurs from one
control loop iteration to the next, as it would when contact is broken, 
bodies newly come into contact, the contact surface changes, or contact 
between two bodies switches between slipping and sticking. 

Torque chatter due to contact indeterminacy can be avoided by ensuring that contact forces are not cycled rapidly between points of contact under potentially indeterminate contact configurations across successive controller iterations. \cite{Zapolsky:2014} eliminate torque chatter using a QP stage that searches over the optimal set of contact forces (using a convex relaxation to the rigid contact model) for forces that minimize the $\ell_2$-norm of joint torques. 

\subsection{Two-stage vs. one-stage approaches}
An alternative,
one-stage approach is described by \cite{Todorov:2014}, who regularizes the
quadratic objective matrix to attain the requisite strict convexity. Another
one-stage approach (which we test in Section~\ref{section:experiments}) uses the warm starting-based 
solution technique described in Section~\ref{section:ID:no-slip} to mitigate
contact indeterminacy. The two-stage approach
described below confers the following advantages over  
one-stage approaches: \1 no regularization factor need be chosen---there has yet to be a physical interpretation behind regularization factors, and computing a minimal regularization factor would be computationally expensive; and \2 the two-stage approach allows the particular solution to be selected using an arbitrary 
objective criterion---minimizing actuator torques is particularly relevant for
robotics applications. Two stage approaches are somewhat slower, though we have demonstrated performance suitably
fast for real-time control loops on quadrupedal robots in \cite{Zapolsky:2014}. We present the two stage approach without further comment, as the reader can realize the one stage approach, if desired, by regularizing the Hessian matrix in the quadratic program. 

% !TEX root = AURO.tex

\subsection{Computing inverse dynamics and contact forces simultaneously (Stage I)}
\label{section:inverse-dynamics-model}
\label{sec:idyn-method}

For simplicity of presentation, we will assume that the number of edges in the approximation of the friction cone for each contact is four; in other words, we will use a friction pyramid in place of a cone. The inverse dynamics problem
is formulated as follows: 

\begin{tceqn}{Complementarity-free inverse dynamics: Stage I}
\begin{align}
                   &_\mathrm{dissipate\ kinetic\ energy\ maximally:} \notag \\
\minimize_{\vect{f}_N, \vect{f}_F, \after \vect{v}, \vect{\tau}}\ & \frac{1}{2}\tr{\after\vect{v}}\mat{M}\after\vect{v}\label{eqn:objective1} \\ & \notag  \\
\textrm{subject to: }   &_\mathrm{non-interpenetration\ constraint:} \notag \\ 
& \mat{N}\after\vect{v} \geq -\frac{\before \vect{\phi}}{\Delta t} \label{const:noninterpen}\\
                   &_\mathrm{variable\ non-negativity} \notag \\
                   &_\mathrm{(for\ formulation\ convenience):} \notag \\
                   & \vect{f}_N \geq \vect{0}, \quad \vect{f}_F \geq \vect{0} \\
                    &_\mathrm{Coulomb\ friction:} \notag \\
                   & \mu f_{N_i} \geq \tr{\vect{1}}\vect{f}_{F_i} \label{eqn:frictpoly}\\
                   &_\mathrm{first-order\ velocity\ relationship:} \notag \\
                   & \after\vect{v} = \before\vect{v} + \inv{\mat{M}}(\tr{\mat{N}}\vect{f}_N + \ldots \\
                   & \qquad \quad \tr{\mat{F}}\vect{f}_F + \Delta t\vect{f}_{ext} - \tr{\mat{P}}\vect{\tau}) \notag \\
                   &_\mathrm{inverse\ dynamics:} \notag \\
                   & \mat{P}\after\vect{v} = \dot{\vect{q}}_{\textrm{des}}\label{eqn:invdyn2}
%                   & \after\vect{v}_\textrm{robot.q} = \dot{\vect{q}} + \Delta t\ddot{\vect{q}} 
\end{align}
\end{tceqn}

\begin{comment}
This formulation of the rigid body contact model incorporates desired joint accelerations due to inverse dynamics: 
\begin{align}
   \mat{M}(\vect{v}^* - \vect{v}) &= \mat{N}\vect{f}_N + \mat{F}\vect{f}_F + h\ (\vect{f}_{\textrm{ext}} + \begin{bmatrix}\vect{0} \\ \vect{\tau} \end{bmatrix})  \label{const:idyn}\\
 \vect{v}^*  &=  \tr{\begin{bmatrix} \tr{\vect{v}^*_{\textrm{robot.fb}}} & \tr{\vect{v}^*_{\textrm{robot.q}}} \end{bmatrix}} \\
 \vect{f}_{\textrm{ext}}  &= \tr{\begin{bmatrix} \tr{\vect{f}_{\textrm{fb}}} & \tr{\vect{\tau}} \end{bmatrix}} \label{eqn:idyn-stageI-end} \\
\end{align}

\end{comment}

As discussed in Section~\ref{section:related-work:QP}, we have shown that the
contact model always has a solution (\emph{i.e.}, the QP is always feasible) and
that the contact forces will not do positive work~\citep{Drumwright:2010b}.
The addition of the inverse dynamics constraint (Equation~\ref{eqn:invdyn2}) will not change this result---the frictionless version of this QP is identical to an LCP of the form that Baraff has proven solvable (see Section~\ref{section:ID:scaling-linearly}), which means that the QP is feasible.  As in the inverse dynamics approach in Section~\ref{section:ID:Coulomb}, the first order 
approximation to acceleration avoids inconsistent configurations that can occur 
in rigid contact with Coulomb friction.  The worst-case time complexity of solving this convex model is polynomial in the number of contact features~\citep{Boyd:2004}. High frequency control loops limit $n$ to approximately four contacts given present hardware and using fast active-set methods.

\subsubsection{Removing equality constraints}
The optimization in this section is a convex quadratic program with inequality and equality constraints.  We remove the equality constraints through substitution.
This reduces the size of the optimization problem; removing linear equality constraints also eliminates significant variables if transforming the QP to a LCP via optimization duality theory~\citep{Cottle:1992}.\footnote{We use such a transformation in our work, which allows us to apply \LEMKE~\citep{LEMKE}, which is
freely available, numerically robust (using Tikhonov regularization), and relatively fast.}

The resulting QP takes the form:
\begin{align}
\minimize_{\vect{f}_N, \vect{f}_F} & \tr{\begin{bmatrix} \vect{f}_N \\ \vect{f}_F \end{bmatrix}}\left(\begin{bmatrix} \mat{N}\inv{\mat{X}}\tr{\mat{N}} & \mat{N}\inv{\mat{X}}\tr{\mat{F}} \\ \mat{F}\inv{\mat{X}}\tr{\mat{N}} & \mat{F}\inv{\mat{X}}\tr{\mat{F}} \end{bmatrix} \begin{bmatrix} \vect{f}_N \\ \vect{f}_F \end{bmatrix} + \ldots \right. \notag \\
& \left. \qquad \qquad \qquad \begin{bmatrix} -\mat{N}\vect{\kappa} \\ -\mat{F}\vect{\kappa} \end{bmatrix}\right) \\
\textrm{subject\ to: } & \begin{bmatrix} \mat{N}\inv{\mat{X}}\tr{\mat{N}} & \mat{N}\inv{\mat{X}}\tr{\mat{F}} \end{bmatrix} \begin{bmatrix}\vect{f}_N \\ \vect{f}_F \end{bmatrix} - \mat{N}\vect{\kappa} \ge \vect{0} \\
& \vect{f}_N \ge \vect{0}, \vect{f}_F \ge \vect{0} \\
& \mu f_{N_i} \ge \tr{\vect{1}}\vect{f}_{F_i}
\end{align}
Once $\vect{f}_N$ and $\vect{f}_F$ have been determined, the inverse dynamics
forces are computed using:
\begin{align}
\begin{bmatrix}\after \vect{v} \\ \vect{\tau} \end{bmatrix} = \inv{\mat{X}}\begin{bmatrix} -\vect{\kappa} + \tr{\mat{N}}\vect{f}_N + \tr{\mat{F}}\vect{f}_F \\ \dot{\vect{q}}_{\textrm{des}} \end{bmatrix}
\end{align}
As in Section~\ref{section:retrieving-forces}, consistency 
in the desired accelerations can be verified and modified without re-solving the
QP if found to be inconsistent.
 
\subsubsection{Minimizing floating point computations}
\label{section:phaseI}
Because inverse dynamics may be used within real-time control loops, this
section describes an approach that can minimize floating point computations
over the formulation described above. 

Assume that we first solve for the joint forces $\vect{f}_{ID}$ necessary to solve the inverse dynamics problem under no contact constraints.
The new velocity $\after\vect{v}$ is now defined as:
\begin{equation}
\after\vect{v} = \before\vect{v} + \inv{\mat{M}}(\tr{\mat{N}}\vect{f}_N + \tr{\mat{F}}\vect{f}_F + \Delta t\vect{f}_{ext} + \begin{bmatrix} \vect{0} \\ \Delta t(\vect{f}_{ID}+ \vect{x})  \end{bmatrix})
\end{equation}
where we define $\vect{x}$ to be the actuator forces that are added to $\vect{f}_{ID}$ to counteract contact forces. To simplify our derivations, we will define the following vectors and matrices:
\begin{align}
\mat{R} &\equiv \begin{bmatrix} \tr{\mat{N}} & \tr{\mat{F}} \end{bmatrix} \\
\vect{z} &\equiv \tr{\begin{bmatrix} \vect{f}_N & \vect{f}_F \end{bmatrix}} \\
\mat{M} &\equiv \begin{matrix} &\begin{smallmatrix}\ {nb}\ &\ \ {nq} \end{smallmatrix} \\ \begin{smallmatrix} {nb}\\ \\ {nq} \end{smallmatrix} &\begin{bmatrix} \mat{A} &\mat{B} \\ \tr{\mat{B}} & \mat{C} \end{bmatrix}  \end{matrix}\\
\inv{\mat{M}} &\equiv \begin{matrix} &\begin{smallmatrix}\ {nb}\ &\ \ {nq} \end{smallmatrix} \\ \begin{smallmatrix} {nb}\\ \\ {nq} \end{smallmatrix} &\begin{bmatrix} \mat{D} & \mat{E} \\ \tr{\mat{E}} & \mat{G} \end{bmatrix}  \end{matrix}\\
\vect{j} &\equiv \vect{v}_b + \begin{bmatrix}\mat{D} & \mat{E}\end{bmatrix}(\Delta t \vect{f}_{ext} + \begin{bmatrix} \vect{0} \\ \Delta t\vect{f}_{ID}\end{bmatrix}) \\
\vect{k} &\equiv \vect{v}_q + \begin{bmatrix}\tr{\mat{E}} & \mat{G}\end{bmatrix}(\Delta t \vect{f}_{ext} + \begin{bmatrix} \vect{0} \\ \Delta t \vect{f}_{ID}\end{bmatrix}) \\
\vect{f}_{ID} &\equiv \mat{C}\ddot{\vect{q}} - \vect{f}_{ext,nq}
\end{align}

Where $nq$ is the total joint degrees of freedom of the robot, and $nb$ is the total base degrees of freedom for the robot ($nb = 6$ for floating bases).

The components of $\after\vect{v}$ are then defined as follows:
\begin{align}
\after\vect{v}_b &\equiv \vect{j} + \begin{bmatrix}\mat{D} & \mat{E}\end{bmatrix}(\mat{R}\vect{z} + \begin{bmatrix} \vect{0} \\ \Delta t \vect{x}\end{bmatrix}) \label{eqn:vbstar} \\
\after\vect{v}_q &\equiv \vect{k} + \begin{bmatrix}\tr{\mat{E}} & \mat{G}\end{bmatrix}(\mat{R}\vect{z} + \begin{bmatrix} \vect{0} \\ \Delta t \vect{x}\end{bmatrix}) = \dot{\vect{q}}_{\textrm{des}}
\end{align}
From the latter equation we can solve for $\vect{x}$:
\begin{equation}
\label{eqn:x}
\vect{x} = \frac{\inv{\mat{G}}(\after\vect{v}_q - \vect{k} - \begin{bmatrix}\tr{\mat{E}} & \mat{G}\end{bmatrix}\mat{R}\vect{z})}{\Delta t}
\end{equation}
Equation \ref{eqn:x} indicates that once contact forces are computed, determining the actuator forces for inverse dynamics requires solving only a linear equation.  Substituting the solution for $\vect{x}$ into Eqn. \ref{eqn:vbstar} yields:
\begin{equation}
\after\vect{v}_b = \vect{j} + \begin{bmatrix}\mat{D} & \mat{E}\end{bmatrix}\mat{R}\vect{z} + \mat{E}\inv{\mat{G}}(\after\vect{v}_q - \vect{k} - \begin{bmatrix}\tr{\mat{E}} & \mat{G}\end{bmatrix}\mat{R}\vect{z})
\end{equation} 
To simplify further derivation, we will define a new matrix and a new vector:
\begin{align}
\mat{Z} & \equiv \left( \begin{bmatrix}\mat{D} & \mat{E}\end{bmatrix} - \mat{E}\inv{\mat{G}}\begin{bmatrix}\tr{\mat{E}} & \mat{G}\end{bmatrix} \right) \mat{R} \\
\vect{p} & \equiv \vect{j} + \mat{E}\inv{\mat{G}}(\after\vect{v}_q - \vect{k})
\end{align}
Now, $\after\vect{v}_b$ can be defined simply, and solely in terms of $\vect{z}$, as:
\begin{equation}
\after \vect{v}_b \equiv \mat{Z}\vect{z} + \vect{p}
\end{equation}
We now represent the objective function (Eqn. \ref{eqn:objective1}) in block form as:
\begin{equation}
f(.) \equiv \frac{1}{2}\tr{\begin{bmatrix}\after\vect{v}_b \\ \after\vect{v}_q \end{bmatrix}}\begin{bmatrix}\mat{A} & \mat{B} \\ \tr{\mat{B}} & \mat{C} \end{bmatrix}\begin{bmatrix}\after\vect{v}_b \\ \after\vect{v}_q \end{bmatrix}
\end{equation}
which is identical to:
\begin{equation}
f(.) \equiv \frac{1}{2} \tr{\after\vect{v}_b}\mat{A}\after\vect{v}_b + \after\vect{v}_b\tr{\mat{B}}\after\vect{v}_q + \frac{1}{2} \tr{\after\vect{v}_q}\mat{C}\after\vect{v}_q
\end{equation}
As we will be attempting to minimize $f(.)$ with regard to $\vect{z}$, which the last term of the above equation does not depend on, that term is ignored hereafter.  Expanding remaining terms using Equation~\ref{eqn:vbstar}, the new objective function is:
\begin{align}
f(.) &\equiv \frac{1}{2} \tr{\vect{z}}\tr{\mat{Z}}\mat{A}\mat{Z}\vect{z} + \tr{\vect{z}}\tr{\mat{Z}}\mat{A}\vect{p} + \tr{\vect{z}}\tr{\mat{Z}}\mat{B}\after\vect{v}_q \\
 &\equiv \frac{1}{2} \tr{\vect{z}}\tr{\mat{Z}}\mat{A}\mat{Z}\vect{z} + \tr{\vect{z}}(\tr{\mat{Z}}\mat{A}\vect{p} + \tr{\mat{Z}}\mat{B}\after\vect{v}_q)
\end{align}
subject to the following constraints:
\begin{align}
\tr{\mat{N}}\begin{bmatrix} \mat{Z}\vect{z} + \vect{p} \\ \vect{v}^*_q \end{bmatrix} &\geq -\frac{\before \vect{\phi}}{\Delta t} \\
\vect{f}_{N,i} &\geq 0\qquad (\textrm{for}~i=1\ldots n) \\
\mu\vect{f}_{N,i} &\geq \vect{c}_{S,i} + \vect{c}_{T,i}
\end{align}
Symmetry and positive semi-definiteness of the QP follows from symmetry and positive definiteness of $\mat{A}$.  Once the solution to this QP is determined, the actuator forces $\vect{x} + \vect{f}_{ID}$ determined via inverse dynamics can be recovered.

\subsubsection{Floating point operation counts}
Operation counts for matrix-vector arithmetic and numerical linear algebra
are taken from~\cite{Hunger:2007}.

\paragraph{Before simplifications:} %N*inv(X)*N' N*inv(X)*D' N*inv(X)*kappa
% N is m x n
% D is m x nk
Floating point operations (\emph{flops}) necessary to setup the \textit{Stage I} model as presented initially sum to 77,729 flops, substituting: $m = 18, nq = 16, n = 4, k = 4$.

\begin{comment}
Assume LDL' factorization of X, which is square size $m + nq$: $\frac{m^3}{3}+m^2 \text{nq}+m^2+m \text{nq}^2+2 m \text{nq}-\frac{7
   m}{3}+\frac{\text{nq}^3}{3}+\text{nq}^2-\frac{7 \text{nq}}{3}+1$ flops

inv(X)*N': $m + 2 m^2 n + nq + 4 m n nq + 2 n nq^2$ flops
inv(X)*D': $m + 2 k m^2 n + nq + 4 k m n nq + 2 k n nq^2$ flops
% inv(X)*N' is m x n
% inv(X)*D' is m x nk
N*inv(X)*N': $2mn^2 - mn$ flops
D*inv(X)*N': $2mn^2k - mn$ flops
D*inv(X)*D': $2m(nk)^2 - mnk$ flops
kappa components (assume fext is integrated in): $2m^2 - m$ flops
inv(X)*kappa: $2(m + nq)^2$ flops
N*inv(X)*kappa: $mn - m$ flops
D*inv(X)*kappa: $mnk - m$ flops
number of operations in the contact solve should be the same
Solving for $\vect{\tau}$ (one backsubstitution): $2(m+nq)^2$ flops 

Total flops: $\frac{1}{3} \left(m \left(6 \left(k^2+k+1\right) n^2+3 n (4 (k+1) \text{nq}-1)+3
   \text{nq}^2+30 \text{nq}-10\right)+3 m^2 (2 (k+1) n+\text{nq}+7)+3 \text{nq}^2 (2
   (k+1) n+5)+m^3+\text{nq}^3-\text{nq}+3\right)$
 w/substitutions m = 18, nq = 16, n = 4, k = 4: 77,729 flops
\end{comment}

\begin{table}[H]
\centering
\begin{tabular}{|c|c|}
\hline
operation & \emph{flops} \\
\hline
$LD\tr{L}(\mat{X}) $ \ & $\frac{m^3}{3}+m^2 \text{(nq)}+m^2+m \text{(nq)}^2+2 m \text{(nq)}$\\
&$-\frac{7m}{3}+\frac{\text{(nq)}^3}{3}+\text{(nq)}^2-\frac{7 \text{(nq)}}{3}+1$ \\
$\inv{\mat{X}}\tr{\mat{N}}$&  $m + 2 m^2 n + (nq) + 4 m n (nq) + 2 n (nq)^2$ \\
$\inv{\mat{X}}\tr{\mat{F}}$ & $m + 2 k m^2 n + (nq) + 4 k m n (nq) + 2 k n (nq)^2$ \\
$\mat{N}\inv{\mat{X}}\tr{\mat{N}}$ & $2mn^2 - mn$ \\
$\mat{F}\inv{\mat{X}}\tr{\mat{N}}$ & $2mn^2k - mn$ \\
$\mat{F}\inv{\mat{X}}\tr{\mat{F}}$&  $2mnk^2 - mnk$ \\
$\kappa$  &  $2m^2 - m$ \\
$\inv{\mat{X}}\kappa$ & $2(m + (nq))^2$ \\
$\mat{N}\inv{\mat{X}}\kappa$ & $mn - m$ \\
$\mat{F}\inv{\mat{X}}\kappa$  & $mnk - m$ \\
$\vect{\tau}$ & $2(m+(nq))^2$ \\
\hline
\end{tabular}
\caption{Floating point operations (\emph{flops}) per task without floating point optimizations.}
\end{table}

%\textbf{Total} $\frac{1}{3} \left(m \left(6 \left(k^2+k+1\right) n^2+3 n (4 (k+1) \text{nq}-1)+3 \text{nq}^2+30 \text{nq}-10\right)+3 m^2 (2 (k+1) n+\text{nq}+7)+3 \text{nq}^2 (2(k+1) n+5)+m^3+\text{nq}^3-\text{nq}+3\right)$

\paragraph{After simplifications:} %N*inv(X)*N' N*inv(X)*D' N*inv(X)*kappa
% N is m x n
% D is m x nk
Floating point operations necessary to setup the \textit{Stage I} model after modified to reduce computational costs sum to 73,163 flops when substituting: $m = 18, nq = 16, n = 4, k = 4,nb = m-nq$, a total of $6.24\%$ reduction in computation.  When substituting: $m = 18, nq = 12, n = 4, k = 4,nb = m-nq$, we observed 102,457 flops for this new method and 
62,109 flops before simplification.  Thus, a calculation of the total number of floating point operations should be performed to determine whether the floating point simplification is actually more efficient for a particular robot.

\begin{comment}
%inv(M) = [D E ; E' G]
Assume Cholesky factorization of M: $\frac{1}{3}m^3 + \frac{1}{2}m^2 + \frac{1}{6}m$
Formation of the inverse: $\frac{1}{3}m^3 + \frac{2}{3}m$

%inv(G) = [D E ; E' G]
Cholesky factorization of G:
$\frac{1}{3}(nq)^3 + \frac{1}{2}(nq)^2 + \frac{1}{6}nq$

% Z (nb x n) = ([D E] - E inv(G) [E' G])R
Z calculation is:
$nb\ m + nq\ n(nk+1)(2m - 1) + nb\ m(2nq - 1)  + 2nq^2m$

% p = [j] + E inv(G) (\after\vect{v}_nq - k)
p calculation is (with j, k calc):
$2nb\ nq + 2nq^2 + 3 nq + 2 nb + 2m + 2m\ nq$

%objective
%Z' A Z 
$2nb^2(n(nk+1))-nb\ (n(nk+1)) + 2nb\ (n(nk+1))^2 - (n(nk+1))^2$
%Z' A p 
$(n(nk+1) + nb)(2nb-1)$
%Z' B vq
$(n(nk+1) + nq)(2nq-1)$
1/3
% constraint
%NZ (nb rows of N)
$n^2(nk+1)(2nb -1)$
%N[q;vq*] 
$n(2m-1)$
% Total: 
% 2 k^2 m n^4-2 k^2 n^4 \text{nq}-k^2 n^4+2 k m^2 n^2+6 k m n^3-2 k m n^2 \text{nq}-k m n^2-6 k n^3 \text{nq}-3 k n^3+2 k n^2 \text{nq}^2+2 k n^2 \text{nq}+\frac{2m^3}{3}+2 m^2 n+2 m^2 \text{nq}+\frac{5 m^2}{2}+4 m n^2-2 m n \text{nq}+m n+\frac{23m}{6}-4 n^2 \text{nq}-2 n^2+2 n \text{nq}^2+2 n \text{nq}-n+\frac{\text{nq}^3}{3}+\frac{9 \text{nq}^2}{2}+\frac{7 \text{nq}}{6} 
 w/substitutions m = 18, nq = 16, n = 4, k = 4: 73,163 flops
\end{comment}

\begin{table}[H]
\centering
\begin{tabular}{|c|c|}
\hline
operation & \emph{flops} \\
\hline
$\inv{(L\tr{L}(\mat{M}))} $ &$\frac{2}{3}m^3 + \frac{1}{2}m^2 + \frac{5}{6}m$ \\
$L\tr{L}(\mat{G}) $ & $\frac{1}{3}(nq)^3 + \frac{1}{2}(nq)^2 + \frac{1}{6}nq$\\
$\mat{Z}$ & $nb\ m + nq\ n(nk+1)(2m - 1) +$ \\
&  $nb\ m(2nq - 1)  + 2nq^2m$ \\
$\vect{p}$ & $2nb\ nq + 2nq^2 + 3 nq + 2 nb + 2m + 2m\ nq$ \\
$\tr{\mat{Z}}\mat{A}\mat{Z}$ & $2nb^2(n(nk+1))-nb\ (n(nk+1))$ \\
&$+ 2nb\ (n(nk+1))^2 - (n(nk+1))^2$ \\
$\tr{\mat{Z}}\mat{A}\vect{p}$ & $(n(nk+1) + nb)(2nb-1)$ \\
$\tr{\mat{Z}}\mat{B}\vect{v}^*_q$ & $(n(nk+1) + nq)(2nq-1)$ \\
$\tr{\mat{N}}\mat{Z}$ & $n^2(nk+1)(2nb -1)$ \\
$\tr{\mat{N}}\left[\begin{smallmatrix} \vect{p} \\ \vect{v}^*_q \end{smallmatrix}\right]$& $n(2m-1)$ \\
\hline
%Total 
%&$2 k^2 m n^4-2 k^2 n^4 \text{nq}-k^2 n^4+2 k m^2 n^2 $\\
%&$+6 k m n^3-2 k m n^2 \text{nq}-k m n^2-6 k n^3 \text{nq}-3 k n^3$\\
%&$+2 k n^2 \text{nq}^2+2 k n^2 \text{nq}+\frac{2m^3}{3}+2 m^2 n+2 m^2 \text{nq}$\\
%&$+\frac{5 m^2}{2}+4 m n^2-2 m n \text{nq}+m n+\frac{23m}{6}-4 n^2 \text{nq}$\\
%&$-2 n^2+2 n \text{nq}^2+2 n \text{nq}-n+\frac{\text{nq}^3}{3}$\\
%&$+\frac{9 \text{nq}^2}{2}+\frac{7 \text{nq}}{6} $ \\
%\hline
\end{tabular}
\caption{Floating point operations (\emph{flops}) with floating point optimizations.}
\end{table}

\subsubsection{Recomputing Inverse Dynamics to Stabilize Actuator Torques (Stage II)}
\label{section:phaseII}
In the case that the matrix $\tr{\mat{Z}}\mat{A}\mat{Z}$ is singular, the contact model is only convex rather than strictly convex \citep{Boyd:2004}.  Where a strictly convex system has just one optimal solution, the convex problem may have multiple equally optimal solutions.  Conceptually, contact forces that predict that two legs, three legs, or four legs support a walking robot are all equally valid. A method is thus needed to optimize within the contact model's solution space while favoring solutions that predict contact forces at all contacting links (and thus preventing the rapid torque cycling between arbitrary optimal solutions).  As we mentioned in Section~\ref{section:contact-models:indeterminacy}, defining a manner by which actuator torques evolve over time, or selecting a preferred distribution of contact forces may remedy the issues resulting from indeterminacy.  One such method would select---from the space of optimal solutions to the indeterminate contact problem---the one that minimizes the $\ell_2$-norm of joint torques.  If we
denote the solution that was computed in the last section as $\vect{z}_0$, the following optimization problem will yield the desired result: 

\begin{figure*}
\center
\includegraphics[width=0.75\linewidth]{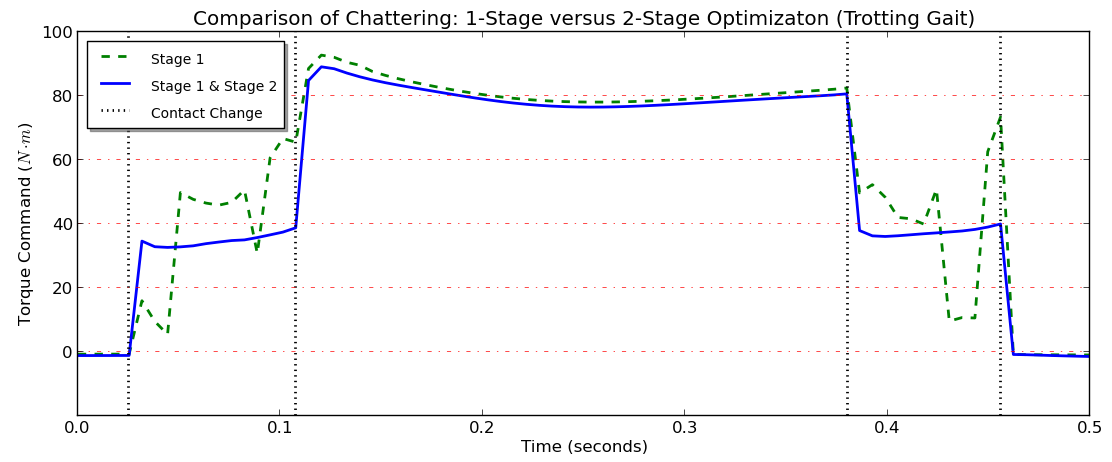} 
\caption{Plot of torque chatter while controlling with inverse dynamics using an indeterminate contact model (Stage 1) versus the smooth torque profile produced by a determinate contact model (Stage 1 \& Stage 2).} 
\label{fig:chatter}
\end{figure*}

\begin{tceqn}{Complementarity-free inverse dynamics: Stage II}
\begin{align}
                   &_\mathrm{find\ minimum\ norm\ motor\ torques:} \notag \\
\minimize_{\vect{f}_N, \vect{f}_F}\ & \frac{1}{2}\tr{\vect{\tau}}\vect{\tau} \label{eqn:objective2} \\
\textrm{subject to: } & \textrm{Equations~\ref{const:noninterpen}--\ref{eqn:invdyn2}} \nonumber \\
                   &_\mathrm{maintain\ Stage\ I\ objective:} \notag \\
			        & \frac{1}{2}\after\tr{\vect{v}}\mat{M}\after\vect{v} \leq f(\vect{z}_0) \label{eqn:null} 
\end{align}
\end{tceqn}
We call the method described in Section~\ref{section:phaseI}~\emph{Stage I} and the optimization problem above \emph{Stage II}. Constraining for a quadratic objective function (Equation~\ref{eqn:objective1}) with a quadratic inequality constraint yields a QCQP that may not be solvable sufficiently quickly for high frequency control loops. We now show how to use the nullspace of $\tr{\mat{Z}}\mat{A}\mat{Z}$ to perform this second optimization \emph{without explicitly considering the quadratic inequality constraint}; thus, a QP problem formulation is retained.  Assume that the matrix $\mat{W}$ gives the nullspace of $\tr{\mat{Z}}\mat{A}\mat{Z}$. The vector of contact forces will now be given as $\vect{z} + \mat{W}\vect{w}$, where $\vect{w}$ will be the optimization vector.

The kinetic energy from applying the contact impulses is:
\begin{align*}
\epsilon_2 & = \frac{1}{2}\tr{(\vect{z} + \mat{W}\vect{w})}\tr{\mat{Z}}\mat{A}\mat{Z}(\vect{z}  + \mat{W}\vect{w})\\
& \qquad + \tr{(\vect{z} + \mat{W}\vect{w})}(\tr{\mat{Z}}\mat{A}\vect{p} + \tr{\mat{Z}}\mat{B}\after\vect{v}_q) \\
& = \frac{1}{2} \tr{\vect{z}}\tr{\mat{Z}}\mat{A}\mat{Z}\vect{z} + \tr{\vect{z}}(\tr{\mat{Z}}\mat{A}\vect{p}
 + \tr{\mat{Z}}\mat{B}\after\vect{v}_q)\\
& \qquad + \tr{\vect{w}}\tr{\mat{W}}(\tr{\mat{Z}}\mat{A}\vect{p} + \tr{\mat{Z}}\mat{B}\after\vect{v}_q)
\end{align*}
The terms $\frac{1}{2}\tr{\vect{w}}\tr{\mat{W}}\tr{\mat{Z}}\mat{A}\mat{Z}\mat{W}\vect{w}$ and $\vect{z}\tr{\mat{Z}}\mat{A}\mat{Z}\mat{W}\vect{w}$ are not included above because both are zero: $\mat{W}$ is in the nullspace of $\tr{\mat{Z}}\mat{A}\mat{Z}$. The energy dissipated in the second stage. $\epsilon_2$, should be equal to the energy dissipated in the first stage, $\epsilon_1$. Thus, we want $\epsilon_2 - \epsilon_1 = 0$. Algebra yields:
\begin{equation}
\tr{\vect{w}}\tr{\mat{W}}(\tr{\mat{Z}}\mat{A}\vect{p} + \tr{\mat{Z}}\mat{B}\after\vect{v}_q) = 0
\end{equation}
We minimize the $\ell_2$-norm of joint torques with respect to contact forces by first defining $\vect{y}$ as:
\begin{equation}
\vect{y} \equiv \frac{\inv{\mat{G}}\Big(\after\vect{v}_q - \vect{k} - \begin{bmatrix}\tr{\mat{E}} & \mat{G}\end{bmatrix}\mat{R}(\vect{z} + \mat{W}\vect{w})\Big)}{\Delta t}
\end{equation}
The resulting objective is:
\begin{equation}
\label{equation:g}
g(\vect{w}) \equiv \frac{1}{2} \tr{\vect{y}}\vect{y}
\end{equation}
From this, the following optimization problem arises:
\begin{align}
\mathop{\mathrm{min}}_{\vect{w}}\ & \frac{1}{2}\tr{\vect{y}}\vect{y} \label{eqn:g} \\
\textrm{subject to: } & (\tr{\vect{p}}\tr{\mat{A}} + \after\tr{\vect{v}_q}\tr{\mat{B}})\mat{Z}\mat{W}\vect{w} = \vect{0} \label{eqn:redundant1} \\ 
& \tr{\mat{N}}\begin{bmatrix} \mat{Z}(\vect{z} + \mat{W}\vect{w}) + \vect{p} \\ \after\vect{v}_q \end{bmatrix} \geq -\frac{\before \vect{\phi}}{\Delta t} \label{eqn:redundant2} \\
& (\vect{z} + \mat{W}\vect{w})_i \geq \vect{0} ,\quad \textrm{ (for } i=1\ldots 5n) \label{eqn:poly} \\
& \mu_i (\vect{z} + \mat{W}\vect{w})_i \geq \mat{X}_{S_i}(\vect{z} + \mat{W}\vect{w})_{S_i} + \ldots \nonumber \\
& \hspace{.95in} \mat{X}_{T_i}(\vect{z} + \mat{W}\vect{w})_{T_i} \nonumber \\
& \hspace{.95in} (\textrm{for } i=1\ldots n) \label{eqn:frict-p2}
\end{align}
Equation~\ref{eqn:poly} constrains the contact force vector to only positive values, accounting for 5 positive directions to which force can be applied at the contact manifold ($\hat{n},\hat{s}, -\hat{s}, \hat{t}, -\hat{t}$).\footnote{We consider negative and positive $\hat{s}$ and $\hat{t}$ because \LEMKE requires all variables to be positive.} We use a proof that $\mat{Z} \cdot \ker(\tr{\mat{Z}}\mat{A}\mat{Z}) = \mat{0}$ \citep{Zapolsky:2014} to render $n+1$ (Equations~\ref{eqn:redundant1}~and~\ref{eqn:redundant2}) of $7n+1$ linear constraints (Equations~\ref{eqn:redundant1}--\ref{eqn:frict-p2}) unnecessary.

Finally, expanding and simplifying Equation \ref{equation:g} (removing terms that do not contain $\vect{w}$, scaling by $\Delta t^2$), and using the identity $\mat{U} \equiv \begin{bmatrix} \tr{\mat{E}} & \mat{G} \end{bmatrix}\mat{R}$ yields:
\begin{align}
g(\vect{w}) \equiv& \frac{1}{2} \tr{\vect{w}}\tr{\mat{W}}\tr{\mat{U}}\tr{\inv{\mat{G}}}\inv{\mat{G}}\mat{U}\mat{W}\vect{w} \\
& + \tr{\vect{z}}\tr{\mat{U}} \tr{\inv{\mat{G}}}\inv{\mat{G}}\mat{U}\mat{W}\vect{w} - \tr{\after\vect{v}_q}\tr{\inv{\mat{G}}}\inv{\mat{G}}\mat{U}\mat{W}\vect{w} \nonumber \\
& + \tr{\vect{k}}\tr{\inv{\mat{G}}}\inv{\mat{G}}\mat{U}\mat{W}\vect{w} \nonumber
\end{align}

Finally, actuator torques for the robot can be retrieved by calculating $\vect{y} + \vect{f}_{ID}$.

\subsubsection{Feasibility and time complexity}
It should be clear that a feasible point ($\vect{w} = \vect{0}$) always exists 
for the optimization problem. The dimensionality (\mbox{$n \times n$} in the number
of contact points) of $\tr{\mat{Z}}\mat{A}\mat{Z}$ yields a nullspace computation of $O(n^3)$ and represents one third of the Stage~II running times in our experiments. For quadrupedal robots with single point contacts, for example, the 
dimensionality of $\vect{w}$ is typically at most two, yielding fewer than 
$6n+2$ total 
optimization variables (each linear constraint 
introduces six KKT dual variables for the simplest friction cone approximation). Timing results given $n$ contacts for this virtual robot are available in Section~\ref{section:timing}.

\section{Experiments}
\label{section:experiments} 
% !TEX root = AURO.tex
\label{sec:experiments}

This section assesses the inverse dynamics controllers under a range of conditions on a virtual, locomoting quadrupedal robot (depicted in Figures~\ref{fig:links}) and a virtual manipulator grasping a box.  For points of comparison, we also provide performance data for three reference controllers (depicted in Figures~\ref{control:PID} and \ref{fig:control1}).  These experiments also serve to illustrate that the inverse dynamics approaches function as expected.  
	
	We explore the effects of possible modeling infidelities and sensing inaccuracies by testing locomotion performance on rigid planar terrain, rigid non-planar terrain, and on compliant planar terrain. The last of these is an
example of modeling infidelity, as the compliant terrain violates
the assumption of rigid contact.   Sensing inaccuracies may be introduced from physical sensor noise or perception error producing, \emph{e.g.}, erroneous friction or contact normal estimates. 
All code and data for experiments conducted in simulation, as well as videos of the virtual robots, are located (and can thus be reproduced) at:
\newline \texttt{\small http://github.com/PositronicsLab/idyn-experiments}.

\begin{tcfigure}{Locomotion planner graph}
\hspace{-0.15in}\includegraphics[width=1.1\linewidth]{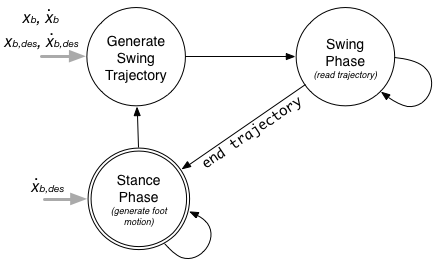}
\addtocounter{figure}{-1}\refstepcounter{figure}\label{fig:planner}
\vspace{-0.2in}
\end{tcfigure}

\subsection{Platforms}

	We evaluate performance of all controllers on a simulated quadruped (see Figure~\ref{fig:links}).  This test platform measures 16 cm tall and has three degree of freedom legs. Its feet are modeled as spherical collision geometries, creating a a single point contact manifold with the terrain.  We use this platform to assess the effectiveness of our inverse dynamics implementation with one to four points of contact.  Results we present from this platform are applicable to biped locomotion, the only differentiating factor being the presence of a more involved planning and balance system driving the quadruped.
	
\begin{figure}[h!]
\center
\includegraphics[width=\linewidth]{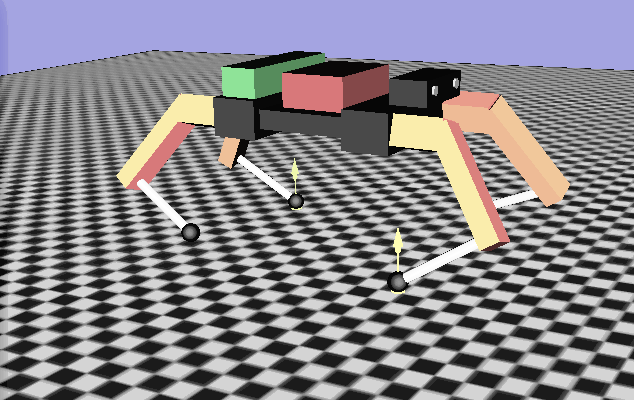} 
\caption{Snapshot of a quadruped robot in the \Moby simulator on planar terrain.}
\label{fig:links}
\end{figure}

Additionally, we demonstrate the adaptability of this approach on a manipulator grasping a box (see Figure~\ref{fig:arm}).  The arm has seven degrees of freedom and each finger has one degree of freedom, totaling eleven actuated degrees of freedom.  The finger tips have spherical collision geometries, creating a a single point contact manifold with a grasped object at each fingertip.  The grasped box has six non-actuated degrees of freedom with a surface friction that is specified in each experiment.

\begin{figure}[h!]
\center
\includegraphics[width=\linewidth]{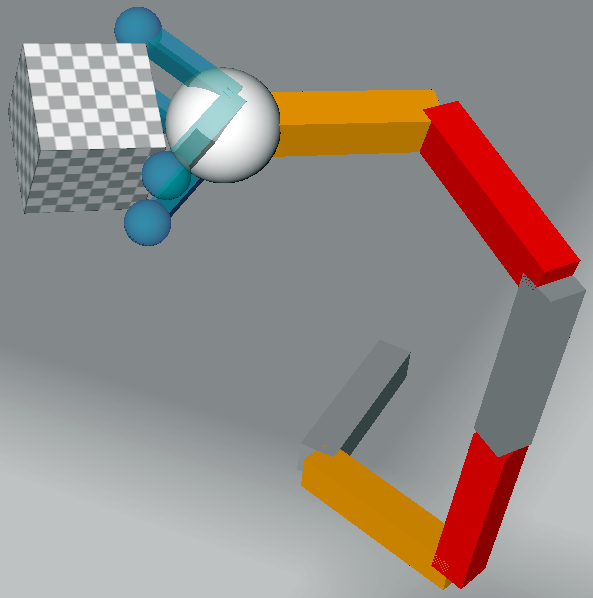} 
\caption{Snapshot of a fixed-base manipulator in the \Moby simulator grasping a box with four spherical fingertips.}
\label{fig:arm}
\end{figure}

\subsection{Source of planned trajectories}

\subsubsection{Locomotion trajectory planner}

	  We assess the performance of the reference controllers (described below) against the inverse dynamics controllers with contact force prediction presented in this work.  The quadruped follows trajectories generated by our open source legged locomotion software \Pacer\footnote{\Pacer is available from: \newline{\tt\small http://github.com/PositronicsLab/Pacer}}. Our software implements balance stabilization, footstep planning, spline-based trajectory planning (described below), and inverse dynamics-based controllers with simultaneous contact force computation that are the focus of this article. The following terms will be used in our description of the planning system: 

\begin{description}
 \item [\emph{gait}] Cyclic pattern of timings determining when to make and break contact with the ground during locomotion.
 \item [\emph{stance phase}] The interval of time where a foot is planned to be in contact with the ground (foot applies force to move robot)
 \item [\emph{swing phase}] The planned interval of time for a foot to be swinging over the ground (to position the foot for the next stance phase)
 \item [\emph{duty-factor}] Planned portion of the gait for a foot to be in the stance phase
 \item [\emph{touchdown}] Time when a foot makes contact with the ground and transitions from swing to stance phase
 \item [\emph{liftoff}] Time when a foot breaks contact with the ground and transitions from stance to swing phase
\end{description}

	The trajectories generated by the planner are defined in operational space.  Swing phase behavior is calculated as a velocity-clamped cubic spline at the start of each step and replanned during that swing phase as necessary.   Stance foot velocities are determined by the desired base velocity and calculated at each controller iteration.  The phase of the gait for each foot is determined by a gait timing pattern and gait duty-factor assigned to each foot.  
	
	The planner (illustrated in Figure~\ref{fig:planner}) takes as input desired planar base velocity ($\dot{\vect{x}}_{b,des} = [\dot{x},\dot{y},\dot{\theta} ]$) and plans touchdown and liftoff locations connected with splined trajectories for the feet to drive the robot across the environment at the desired velocity.  The planner then outputs a trajectory for each foot in the local frame of the robot.  After end effector trajectories have been planned, joint trajectories are determined at each controller cycle using inverse kinematics.

	%In the case of redundant kinematics, more sophisticated methods for inverse kinematics can be employed, but---for the sake of brevity---any kinematic redundancies on the test platforms are employed to keep the end effector orientation constant over the duration of the gait.
	
%\begin{figure*}[htpb]
%\centering
%\includegraphics[width=\linewidth]{move2} \includegraphics[width=\linewidth]{step2}
%\caption{(Left) Non-holonomic planar motion for planning base goal velocity given a waypoint (goal) and the current position of the robot (start). 
%\newline (Right) Planning a swing phase step distance ($\vect{x}_\textrm{step}$) between $\vect{x}_\textrm{liftoff}$ and  $\vect{x}_\textrm{touchdown}$.  This calculation is a function of the planar base command ($\dot{\vect{x}}_{b,des}$), gait duration and duty-factor.  input to \Pacer.}
%\label{fig:plan}
%\end{figure*}

\subsubsection{Arm trajectory planner}

	The fixed-base manipulator is command to follow a simple sinusoidal trajectory parameterized over time.  The arm oscillates through its periodic motion about three times per second.  The four fingers gripping the box during the experiment are commanded to maintain zero velocity and acceleration while gripping the box, and to close further if not contacting the grasped object.
	
\subsection{Evaluated controllers}
\label{sec:controllers}

	We use the same error-feedback in all cases for the purpose of reducing joint tracking error from drift (see baseline controller in Figure~\ref{control:PID}).  The gains used for PID control are identical between all controllers but differ between robots.  The PID error feedback controller is defined in configuration-space on all robots.  Balance and stabilization are handled in this trajectory planning stage, balancing the robot as it performs its task.  The stabilization implementation uses an inverted pendulum model for balance, applying only virtual compressive forces along the contact normal to stabilize the robot~\citep{Sugihara:2003}.  The error-feedback and stabilization modules also accumulate error-feedback from configuration-space errors into the vector of desired acceleration ($\ddot{q}_{des}$) input into the inverse dynamics controllers.  

\begin{tcfigure}{Baseline Controller}
\hspace{-0.2in}\includegraphics[width=1.13\linewidth]{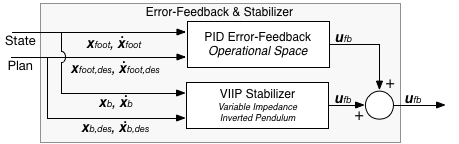}
\small
\textbf{PID}: Reference PID joint error-feedback controller, PD operational space error-feedback controller (quadruped only), and VIIP stabilization (quadruped only).
\addtocounter{figure}{-1}\refstepcounter{figure}\label{control:PID}
\end{tcfigure}

%\begin{figure}[h!]
%\captionsetup{font=scriptsize}
%%\vspace{-0.2in}
%\figuretitle{Baseline Controller}
%\includegraphics[width=\linewidth]{pid-c}\vspace{-0.1in}
%\caption{\label{controller:PID}\textbf{PID}: Reference PID joint error-feedback controller, PD operational space error-feedback controller, and VIIP stabilization.}
%\end{figure}

	%The planner supplies an expectation of when contact will be made with the environment; however we still rely on either accurate estimation of our position on a known terrain to detect contact\footnote{I would expect that such an estimate would never be correct in situ. This raises the question: what is the effect if this estimate is wrong?}, or must rely on bump sensors to indicate contact has been made.  We managed this source of uncertainty in contact state and configuration \emph{in situ} on the HyQ robot  by activating a foot contact once greater than 30N of upward force force was detected on a leg, using the actuator force sensors.  When such an event is detected, a contact normal, point, and sensed force information is sent to the controllers as a source of reaction forces from joint torques and external forces.  Even with this rough estimate of when a robot foot is contacting the environment, performance was not greatly impacted.

	We compare the controllers described in this article, which will hereafter be referred to as \textbf{ID($t_i$)}$_{\texttt{solver},\texttt{friction}}$, where the possible solvers are: $\texttt{solver} =  \{$QP, LCP$\}$ for QP and LCP-based optimization models, respectively; and the possible friction models are: $\texttt{friction} = \{ \mu , \infty\}$ for finite Coulomb friction and no-slip models, respectively.  
	
	We compare the controllers implemented using the methods described in Sections~\ref{section:ID:no-slip}, \ref{section:ID:Coulomb} and \ref{section:approximate-idyn}, see Figure~\ref{controller:IDt}) against the reference controllers (Figure~\ref{fig:control1}), using finite and infinite friction coefficients to permit comparison against no-slip and  Coulomb friction models, respectively.  Time ``$t_i$'' in \textbf{ID($t_i$)} refers to the use of contact forces predicted at the current controller time.  The experimental (presented) controllers include: \textbf{ID($t_{i}$)}$_{LCP,\infty}$ is the \emph{ab initio} controller from Section~\ref{section:ID:no-slip} that uses an LCP model, to predict contact reaction forces with no-slip constraints; \textbf{ID($t_{i}$)}$_{LCP,\mu}$ is the \emph{ab initio} controller from Section~\ref{section:ID:Coulomb} that uses an LCP model, to predict contact reaction forces with Coulomb friction; \textbf{ID($t_{i}$)}$_{QP,\mu}$ is the controller from Section~\ref{section:approximate-idyn} that uses a QP-based optimization approach for contact force prediction; \textbf{ID($t_{i}$)}$_{QP,\infty}$ is the same controller as \textbf{ID($t_{i}$)}$_{QP,\mu}$ from Section~\ref{sec:idyn-method}, but set to allow infinite frictional forces.

\begin{tcfigure}{Experimental Controllers}
\hspace{-0.17in}\includegraphics[width=1.12\linewidth]{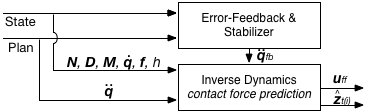}
\small
\addtocounter{figure}{-1}\refstepcounter{figure}\label{controller:IDt}
\textbf{ID($t_{i}$)}: Inverse dynamics controller with predictive contact forces (this work) generates an estimate of contact forces at the current time ($\hat{\vect{z}}(t)$) given contact state and internal forces.
\end{tcfigure}

%\begin{figure}[h!]
%\captionsetup{font=scriptsize}
%%\vspace{-0.2in}
%\figuretitle{Experimental Controllers}
%\includegraphics[width=\linewidth]{idyn-e}\vspace{-0.1in}
%\caption{\label{controller:IDt} \textbf{ID($t_{i}$)}: Inverse dynamics controller with predictive contact forces (this work) generates an estimate of contact forces at the current time ($\hat{\vect{z}}(t)$) given contact state and internal forces.}
%\end{figure}

\begin{tcfigure}{Reference Controllers}
\hspace{-0.17in}\includegraphics[width=1.12\linewidth]{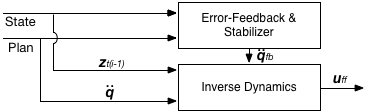}
\small
\addtocounter{figure}{-1}\refstepcounter{figure}\label{fig:control1}
\textbf{ID($t_{i-1}$)}: Reference inverse dynamics controller using exact sensed contact forces from the most recent contact force measurement, $\vect{z}(t-\Delta t)$\\
		\hrule
\vspace{2px}
\hspace{-0.17in}\includegraphics[width=1.12\linewidth]{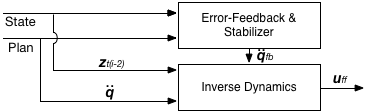}
\textbf{ID($t_{i-2}$)}: Reference inverse dynamics controller using exact sensed contact forces from the second most recent contact force measurement, $\vect{z}(t-2\Delta t)$
\end{tcfigure}

	The reference inverse dynamics controllers use sensed contact forces; the sensed forces are the exact forces applied by the simulator to the robot on the previous simulation step (\emph{i.e.}, there is a sensing lag of $\Delta t$ on these measurements, the simulation step size). We denote the controller using these exact sensed contact forces as \textbf{ID($t_{i-1}$)}.  Controller ``\textbf{ID($t_{i-1}$)}'' uses the exact value of sensed forces from the immediately previous time step in simulation and represents an upper limit on the performance of using sensors to incorporate sensed contact forces into an inverse dynamics model. \emph{In situ} implementation of contact force sensing should result in worse performance than the controller described here, as it would be subject to inaccuracies in sensing and delays of multiple controller cycles as sensor data is filtered to smooth noise; we examine the effect of a second controller cycle delay with \textbf{ID}($t_{i-2}$) (see Figure~\ref{fig:control1}). 
		%To assess the effect of delay in force sensing, we also compare against a controller with small (1ms) sensor delay; we denote this controller as \textbf{ID($t_{i-2}$)} in the experiments, effectively using the contact forces from two time steps previous at a simulation step size of 1ms.  

%\begin{figure}[h!]
%\captionsetup{font=scriptsize}
%%\vspace{-0.2in}
%\figuretitle{Reference Controllers}
%\includegraphics[width=\linewidth]{idyn-c}\vspace{-0.1in}
%\caption{\label{fig:control1}\textbf{ID($t_{i-1}$)}: Reference inverse dynamics controller using exact sensed contact forces from the most recent contact force measurement, $\vect{z}(t-\Delta t)$}
%\includegraphics[width=\linewidth]{idyn-c2}\vspace{-0.1in}
%\caption{\label{fig:control2}\textbf{ID($t_{i-2}$)}: Reference inverse dynamics controller using exact sensed contact forces from the second most recent contact force measurement, $\vect{z}(t-2\Delta t)$}\vspace{-0.1in}
%\label{fig:control2}
%\end{figure}

\subsection{Software and simulation setup}

\software{Pacer} runs alongside the open source simulator \Moby\footnote{Obtained from \url{https://github.com/PositronicsLab/Moby}}, which was used to simulate the legged locomotion scenarios used in the experiments.  \software{Moby} was arbitrarily set to simulate contact with the Stewart-Trinkle / Anitescu-Potra rigid contact models~\citep{Stewart:1996,Anitescu:1997}; therefore, the contact models utilized by the simulator match those used in our reference controllers, \emph{permitting us to compare our contact force predictions directly against those determined by the simulator}. Both simulations and controllers had access to identical data: kinematics (joint locations, link lengths, \emph{etc.}), dynamics (generalized inertia matrices, external forces), and friction coefficients at points of contact.  \Moby provides accurate time of contact calculation for impacting bodies, yielding more accurate contact point and normal information.  Other simulators step past the moment of contact, and approximate contact information based on the intersecting geometries.  The accurate contact information provided by \Moby allows us to test the inverse dynamics controllers under more realistic conditions: contact may break or be formed between control loops.

\subsection{Terrain types for locomotion experiments}
	We evaluate the performance of the baseline, reference, and presented controllers on a planar terrain.  We use four cases to encompass expected surface properties experienced in locomotion.  We model sticky and slippery terrain with frictional properties corresponding to a Coulomb friction of $0.1$ for low (slippery) friction and infinity for high (sticky) friction.  We also model rigid and compliant terrain, using rigid and penalty (spring-damper) based contact model, respectively.  The compliant terrain is modeled to be relatively hard, yielding only 1 mm interpenetration of the foot into the plane while the quadruped is at rest.  Our contact prediction models all assume rigid contact; a compliant terrain will assess whether the inverse dynamics model for predictive contact is viable when the contact model of the environment does not match its internal model.  Successful locomotion using such a controller on a compliant surface would indicate (to some confidence) that the controller is robust to modeling infidelities and thus more robust in an uncertain environment.
	
	We also test the controllers on a rigid height map terrain with non-vertical surface normals and varied surface friction to assess robustness on a more natural terrain. We limited extremes in the variability of the terrain, limiting bumps to 3 cm in height (about one fifth the height of the quadruped see Figure~\ref{fig:links-terrain}).  so that the performance of the foothold planner (not presented in this work) would not bias performance results.  Friction values were selected between the upper and lower limits of Coulomb friction ($\mu \sim \mathcal{U}(0.1,1.5)$) found in various literature on legged locomotion.

\begin{figure}[h!]
\center
\includegraphics[width=\linewidth]{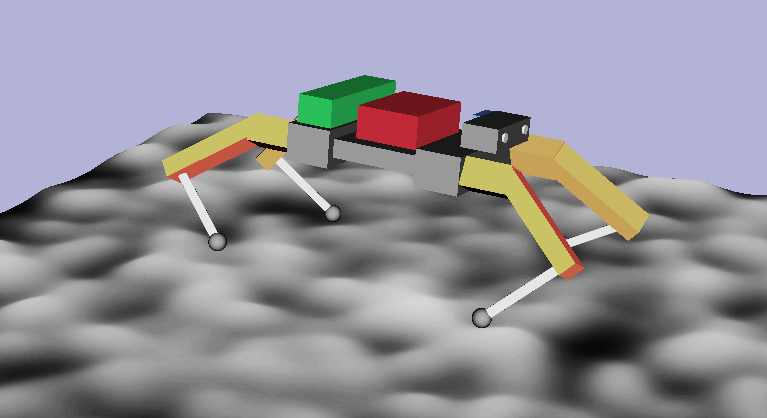} 
\caption{Snapshot of a quadruped robot in the \Moby simulator on rough terrain.}
\label{fig:links-terrain}
\end{figure}

\subsection{Tasks}
	We simulate the quadruped trotting between a set of waypoints on a planar surface for 30 virtual seconds.  The process of trotting to a waypoint and turning toward a new goal stresses some basic abilities needed to locomote successfully: \1 acceleration to and from rest; \2 turning in place, and \3 turning while moving forward. For the trotting gait we assigned gait duration of 0.3 seconds per cycle, duty-factor 75\% of the total gait duration, a step height of 1.5 cm, and touchdown times $\{0.25, 0.75, 0.75, 0.25\}$ for \{left front, right front, left hind, right hind\} feet, respectively.  These values were chosen as a generally functional set of parameters for a trotting gait on a 16 cm tall quadruped. The desired forward velocity of the robot over the test was 20 cm/s.  Over the interval of the experiment, the robots are in both determinate and \emph{possibly} indeterminate contact configurations and, as noted above, undergo numerous contact transitions.  We show (in Section~\ref{sec:trajectory-tracking}) that the controllers we present in Sections~\ref{section:ID:Coulomb} and~\ref{section:ID:no-slip} are feasible for controlling a quadruped robot over a trot; we compare contact force predictions made by all presented controllers to the reaction forces generated by the simulation (Section~\ref{exp:controller-sim}); and we measure running times of all presented controllers given numerous additional contacts in Section~\ref{exp:speed}. 
	
	We simulate the manipulator grasping a box while following a simple, sinusoidal joint trajectory.  During this process the hand is susceptible to making contact transitions as the box slips from the grasp.  We record the divergence from the desired trajectory over the course of the experiments.  We note that the objective of this task is to accurately follow the joint trajectory---predicting joint torques and contact forces with rigid contact constraints---not to hold onto the box firmly.

\section{Results}

% !TEX root = AURO.tex
\label{section:results}
This section quantifies and plots results from the experiments in the previous
section in five ways: \1 joint trajectory tracking; \2 accuracy of contact force prediction; \3 torque command smoothness; \4 center-of-mass behavior over a gait; \5 computation speed.  
 
\subsubsection{Trajectory tracking on planar surfaces:}
\label{sec:trajectory-tracking}

	We analyze tracking performance using the quadruped platform on both rigid planar and compliant planar 
surfaces (see Figure~\ref{fig:error-plot}).  Joint tracking data was collected from the simulated quadruped using the baseline, reference and experimental controllers to locomote on a planar terrain with varying surface properties.  Numerical results for these experiments are presented in Table~\ref{table:error}.  The experimental controllers implementing contact force prediction,\textbf{ID($t_i$)}, either outperformed or matched the performance of the inverse dynamics formulations using sensed contact, \textbf{ID($t_{i-1}$)} and \textbf{ID($t_{i-2}$)}. 

	 As expected, the baseline controller (PID) performed substantially worse than all inverse dynamics systems for positional tracking on a low friction surface. Also, only the inverse dynamics controllers that use predictive contact managed to gracefully locomote with no-slip contact.  
	 
	 The reference inverse dynamics controllers with sensed contact performed the worst on high friction surfaces, only serving to degrade locomotion performance from the baseline controller over the course of the experiment.  We assume that the performance of a well tuned PID error-feedback controller may be due to the control system absorbing some of the error introduced by poor planning, where more accurate tracking of planned trajectories may lead to worse overall locomotion stability.

  \begin{figure}[H]
\figuretitle{Trajectory Tracking: Quadruped}
\center
%\begin{tabular}{cc}
\figuretitle{High friction ($\mu = \infty$), rigid surface}
\includegraphics[width=\linewidth]{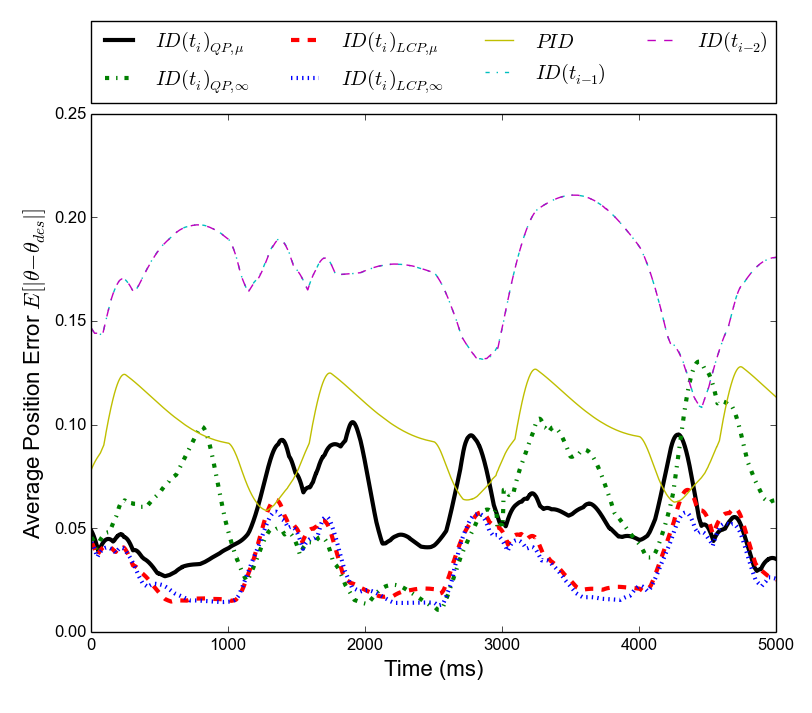}
\figuretitle{Low friction ($\mu = 0.1$), rigid surface}
\includegraphics[width=\linewidth]{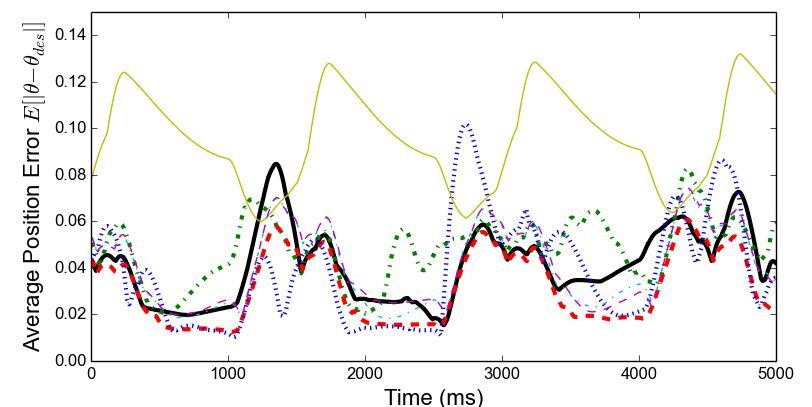} 
\figuretitle{High friction ($\mu = \infty$), compliant surface}
\includegraphics[width=\linewidth]{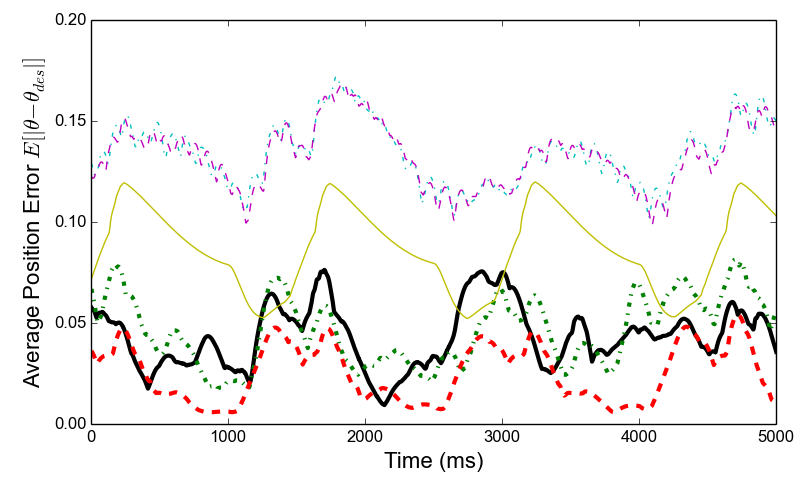}
\figuretitle{Low friction ($\mu = 0.1$), compliant surface}
\includegraphics[width=\linewidth]{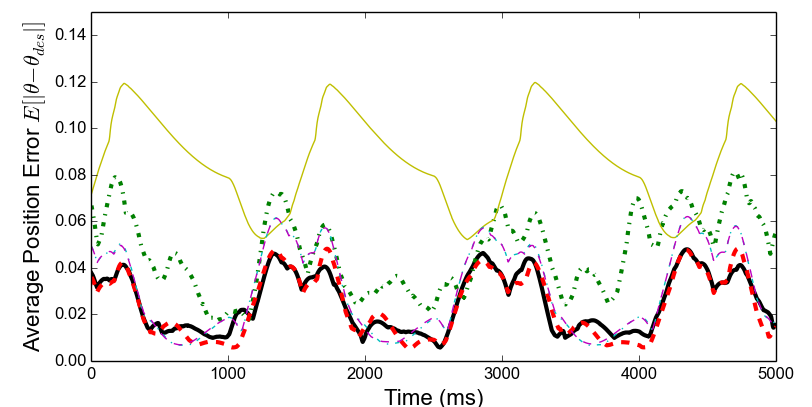} 
%\end{tabular}
\caption{\label{fig:error-plot}\emph{Average position error for all joints} ($E[|\theta-\theta_{des}|]$) over time while the quadruped performs a trotting gait.}
\end{figure}

\begin{table*}[htpb]
\centering
\figuretitle{Trajectory Tracking Error}
\medskip
\begin{tabular}{cc}
\begin{tabular}{|l|c|c|}
\hline
\multicolumn{3}{|c|}{Rigid, Low Friction}\\
\hline
Controller & positional error& velocity error \\
\hline
\textbf{ID($t_{i}$)}$_{QP,\mu}$& 0.0310 & 1.9425 \\
\textbf{ID($t_{i}$)}$_{QP,\infty}$&      0.0483 & 2.6295 \\
\textbf{ID($t_{i}$)}$_{LCP,\mu}$&    \bf 0.0239 &  1.8386 \\
\textbf{ID($t_{i}$)}$_{LCP,\infty}$& - &    - \\
\textbf{PID}& 0.0895 &    \bf1.5569 \\
\textbf{ID($t_{i-1}$)}& 0.0325 &    1.7952 \\
\textbf{ID($t_{i-2}$)}& 0.0328 &    1.7830 \\
\hline
\end{tabular}
&
\begin{tabular}{|l|c|c|}
\hline
\multicolumn{3}{|c|}{Rigid, High Friction}\\
\hline
Controller & positional error& velocity error \\
\hline
\textbf{ID($t_{i}$)}$_{QP,\mu}$&   0.0486 &  2.2596 \\
\textbf{ID($t_{i}$)}$_{QP,\infty}$&   0.0654 &  2.5737 \\
\textbf{ID($t_{i}$)}$_{LCP,\mu}$& \bf0.0259 &    1.8784 \\
\textbf{ID($t_{i}$)}$_{LCP,\infty}$& 0.0260 &    1.8950  \\
\textbf{PID}& 0.0916 &    \bf1.4653 \\
\textbf{ID($t_{i-1}$)}& 0.1317 &    2.5585 \\
\textbf{ID($t_{i-2}$)}& 0.1316 &    2.5608 \\
\hline
\end{tabular}
\\
\begin{tabular}{|l|c|c|}
\hline
\multicolumn{3}{|c|}{Compliant, Low Friction}\\
\hline
Controller & positional error& velocity error \\
\hline
\textbf{ID($t_{i}$)}$_{QP,\mu}$& \bf 0.0217 & 2.0365 \\
\textbf{ID($t_{i}$)}$_{QP,\infty}$&      - & - \\
\textbf{ID($t_{i}$)}$_{LCP,\mu}$&    0.0219   & 2.0786 \\
\textbf{ID($t_{i}$)}$_{LCP,\infty}$& - &    - \\
\textbf{PID}& 0.0850  &  \bf1.5845\\
\textbf{ID($t_{i-1}$)}& 0.0265   & 1.8858\\
\textbf{ID($t_{i-2}$)}&  0.0267   & 1.8742\\
\hline
\end{tabular}
&
\begin{tabular}{|l|c|c|}
\hline
\multicolumn{3}{|c|}{Compliant, High Friction}\\
\hline
Controller & positional error& velocity error \\
\hline
\textbf{ID($t_{i}$)}$_{QP,\mu}$& 0.0342  &  2.9360 \\
\textbf{ID($t_{i}$)}$_{QP,\infty}$&  0.0446  &  3.9779 \\
\textbf{ID($t_{i}$)}$_{LCP,\mu}$& \bf 0.0226 &   2.1243 \\
\textbf{ID($t_{i}$)}$_{LCP,\infty}$& - &    - \\
\textbf{PID}& 0.0850  & \bf 1.5845 \\
\textbf{ID($t_{i-1}$)}&0.1270  &  4.4377 \\
\textbf{ID($t_{i-2}$)}&0.1270  & 4.2061 \\
\hline
\end{tabular}
\end{tabular}
\caption{ \label{table:error}Expected trajectory tracking error for quadrupedal locomotion (positional: mean magnitude of radian error for all joints over trajectory duration ($E[E[|\theta - \theta_{des}|]]$), velocity: mean magnitude of radians/second error for all joints over trajectory duration ($E[E[|\dot{\theta} - \dot{\theta}_{des}|]]$)) of inverse dynamics controllers (\textbf{ID(..)}) and baseline (\textbf{PID}) controller.}
\end{table*}

\subsection{Smoothness of torque commands}

\begin{figure}[h!]
\figuretitle{Torque Chatter}
\includegraphics[width=\linewidth]{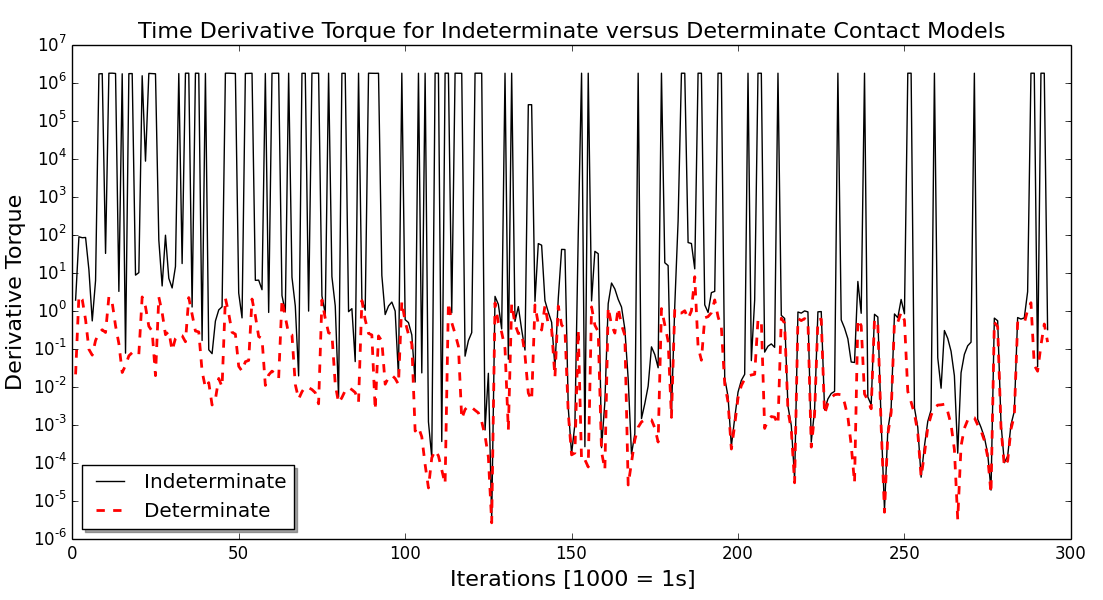}
\caption{Time derivative torque when using the inverse dynamics method (\textbf{ID($t_i$)}$_{QP,\mu}$) with means for mitigating torque chatter from indeterminate contact (red/dotted) vs. no such approach (black/solid).}
\label{fig:torque}
\end{figure}

 \begin{table}[H]
\figuretitle{Torque Smoothness}
\medskip
\centering
\begin{tabular}{|l|l|l|}
\hline
Controller & $E[|\Delta \tau|]$ & $E[|\tau|]$  \\
\hline
\textbf{ID($t_{i}$)}$_{QP,\mu}$& \bf 52.5718 &  \bf  0.2016 \\
\textbf{ID($t_{i}$)}$_{QP,\infty}$& 245.5117 & 0.4239\\
\textbf{ID($t_{i}$)}$_{LCP,\mu}$&  677.9693 & 0.8351\\
\textbf{ID($t_{i}$)}$_{LCP,\infty}$&351.1803 &  0.4055\\
\textbf{ID($t_{i-1}$)} & 974.3312 & 0.9918\\
\textbf{ID($t_{i-2}$)} &  17528.0 & 2.7416\\
\textbf{PID} & 100.1571 &    0.3413 \\
\hline
\end{tabular}
\caption{\label{table:torque} Average derivative torque magnitude (denoted $E[|\Delta \tau|]$) and average torque magnitude (denoted $E[|\tau|]$) for all controllers.}
\end{table}

 Figure~\ref{fig:torque} shows the effects of an indeterminate contact model on torque smoothness.  Derivative of torque commands are substantially smaller when incorporating a method of mitigating chatter in the inverse dynamics-derived joint torques.  We observe a five order of magnitude reduction in the maximum of the time derivative torque when using a torque smoothing stage with the Drumwright-Shell contact model.  Controller \textbf{ID($t_{i}$)}$_{LCP,\mu}$ is the only presented controller unable to mitigate torque chatter (seen in the ``indeterminate'' case in Figure~\ref{fig:torque}) and therefore produces the worst performance from the presented inverse dynamics methods.  Though it demonstrates successful behavior in simulation, this method would likely not be suitable for use on a physical platform.  The reference inverse dynamics controllers (\textbf{ID($t_{i-1}$)} and \textbf{ID($t_{i-2}$)}) exhibit significant torque chatter also.  
 
	We measure the ``smoothness'' of torque commands as the mean magnitude of derivative torque over time. We gather from the data presented in Table~\ref{table:torque} that the two phase QP-based inverse dynamics controller (\textbf{ID($t_{i}$)}$_{QP,\mu}$) followed by the baseline controller (\textbf{PID}) are the most suitable for use on a physical platform.   Controller \textbf{ID($t_{i}$)}$_{QP,\mu}$ uses the lowest torque to locomote while also mitigating sudden changes in torque that may damage robotic hardware.  

\subsection{Verification of correctness of inverse dynamics}
	\label{section:verificaton}

	 We verify correctness of the inverse dynamics approaches by comparing the contact predictions against the reaction forces generated by the simulation. The comparison considers only the $\ell_1$-norm of normal forces, though frictional forces are coupled to the normal forces (so ignoring the frictional forces is not likely to skew the results).  We evaluate each experimental controller's contact force prediction accuracy given sticky and slippery frictional properties on rigid and compliant surfaces. 

The QP-based controllers are able to predict the contact normal force in simulation to a relative error between 12--30\%.\footnote{The QP-based inverse dynamics models use a contact model that differs from the model used within the simulator. When the simulation uses the identical QP-based contact model, prediction exhibits approximately 1\% relative error.}  The \textbf{ID($t_{i}$)}$_{LCP,\mu}$, \textbf{ID($t_{i}$)}$_{LCP,\infty}$ controllers demonstrated contact force prediction between 1.16-1.94\% relative error while predicting normal forces on a rigid surface (see Table~\ref{table:controller-sim}).  The QP based controllers performed as well on a compliant surface as they did on the rigid surfaces, while the performance of the \textbf{ID($t_{i}$)}$_{LCP,\mu}$, \textbf{ID($t_{i}$)}$_{LCP,\mu}$ controllers was substantially degraded on compliant surfaces.
		
	The LCP-based inverse dynamics models (\textbf{ID($t_{i}$)}$_{LCP,\mu}$ and \textbf{ID($t_{i}$)}$_{LCP,\infty}$) use a contact model that matches that used by the simulator.  Nevertheless no inverse dynamics predictions always match the measurements provided by the simulator. Investigation
determined that the slight differences are due to \1 occasional inconsistency
in the desired accelerations (we do not use the check described in Section~\ref{section:retrieving-forces}); \2 the approximation of the
friction cone by a friction pyramid in our experiments (the axes of the
pyramid do not necessarily align between the simulation and the inverse dynamics model); and \3 the regularization occasionally necessary to solve the LCP (inverse dynamics might require regularization while the simulation might not, or \emph{vice versa}).

\begin{table*}[h!]
\centering
\figuretitle{Contact Force Prediction Error}
\medskip
\begin{tabular}{cc}
\begin{tabular}{|l|c|c|}
\hline
\multicolumn{3}{|c|}{Rigid, Low Friction}\\
\hline
Controller & absolute error& relative error \\
\hline
\textbf{ID($t_{i}$)}$_{QP,\mu}$& 3.8009 N & 12.53\% \\
\textbf{ID($t_{i}$)}$_{QP,\infty}$&      8.4567 N & 22.26\% \\
\textbf{ID($t_{i}$)}$_{LCP,\mu}$&     0.9371 N &  1.94\% \\
\textbf{ID($t_{i}$)}$_{LCP,\infty}$& - &    - \\
\hline
\end{tabular}
&
\begin{tabular}{|l|c|c|}
\hline
\multicolumn{3}{|c|}{Rigid, High Friction}\\
\hline
Controller & absolute error& relative error \\
\hline
\textbf{ID($t_{i}$)}$_{QP,\mu}$&   13.8457 N &  27.48\% \\
\textbf{ID($t_{i}$)}$_{QP,\infty}$&   12.4153 N &  25.26\% \\
\textbf{ID($t_{i}$)}$_{LCP,\mu}$& 1.2768 N &    1.55\% \\
\textbf{ID($t_{i}$)}$_{LCP,\infty}$& 0.3572 N &    1.16 \% \\
\hline
\end{tabular}
\\
\begin{tabular}{|l|c|c|}
\hline
\multicolumn{3}{|c|}{Compliant, Low Friction}\\
\hline
Controller & absolute error& relative error \\
\hline
\textbf{ID($t_{i}$)}$_{QP,\mu}$& 7.8260 N & 17.12\% \\
\textbf{ID($t_{i}$)}$_{QP,\infty}$&      - & - \\
\textbf{ID($t_{i}$)}$_{LCP,\mu}$&     4.6385 N &  6.37\% \\
\textbf{ID($t_{i}$)}$_{LCP,\infty}$& - &    - \\
\hline
\end{tabular}
&
\begin{tabular}{|l|c|c|}
\hline
\multicolumn{3}{|c|}{Compliant, High Friction}\\
\hline
Controller & absolute error& relative error \\
\hline
\textbf{ID($t_{i}$)}$_{QP,\mu}$& 14.9225 N & 30.86\% \\
\textbf{ID($t_{i}$)}$_{QP,\infty}$&      15.1897 N & 30.66\% \\
\textbf{ID($t_{i}$)}$_{LCP,\mu}$&     12.4896 N &  24.00\% \\
\textbf{ID($t_{i}$)}$_{LCP,\infty}$& - &    - \\
\hline
\end{tabular}
\end{tabular}
\caption{ \label{table:controller-sim} Average contact force prediction error (summed normal forces) of inverse dynamics controllers \emph{vs.} measured reaction forces from simulation.  The quadruped exerts 47.0882 N of force against the ground when at rest under standard gravity.  Results marked with a ``-'' indicate that the quadruped was unable to complete the locomotion task before falling.}
\end{table*}

	\label{exp:controller-sim}

\subsection{Controller behavior}
	The presented data supports the utilization of the QP-based inverse dynamics model incorporating Coulomb friction, at least for purposes of control of existing physical hardware. We also observed that utilizing Coulomb friction in inverse dynamics leads to much more stable locomotion control on various friction surfaces.  The no-slip contact models proved to be more prone to predicting excessive tangential forces and destabilizing the quadruped while not offering much additional performance for trajectory tracking.  Accordingly, subsequent results for locomotion on a height map and controlling a fixed-base manipulator while grasping a box are reported only for \textbf{ID($t_{i}$)}$_{QP,\mu}$ which can be referred to more generally as ``the inverse dynamics controller with contact force prediction'' or \textbf{ID($t_{i}$)}.

\paragraph{Rigid non-planar surface:}

	Figure~\ref{fig:links-terrain} plots trajectory tracking performance of the locomoting quadruped on rigid terrain with variable friction (ranging between low and high
values of Coulomb friction for contacting materials as reported in literature).  Three reference controllers are compared against our inverse dynamics controller.  During this experiment only the ideal sensor controller \textbf{ID($t_{i-1}$)} consistently produced better positional tracking than our proposed controller (\textbf{ID($t_{i}$)}).  Our experimental controller reduced tracking error below that of error-feedback control alone by 19\%.

\begin{figure}[H]
\figuretitle{Trajectory Tracking: Quadruped}
\center
\figuretitle{Random friction, rigid heightmap}
\includegraphics[width=\linewidth]{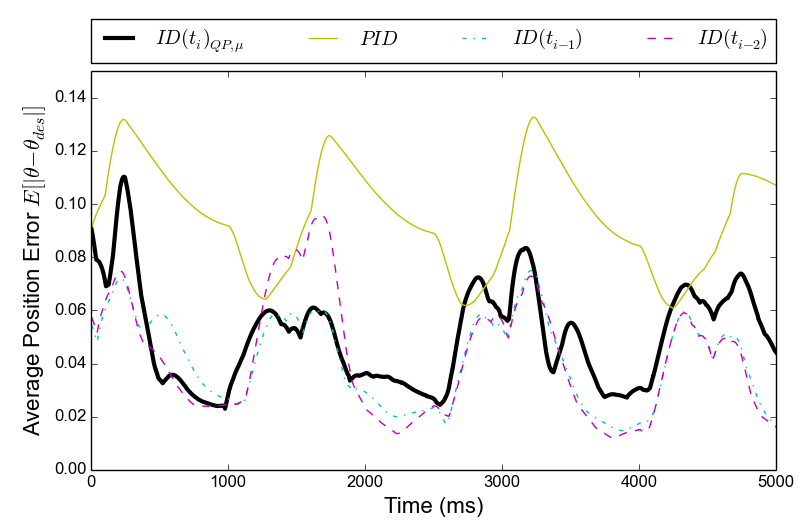} 
\vspace{-0.2in}
\caption{Joint trajectory tracking for a quadruped on a rigid heightmap with uniform random friction $\mu \sim \mathcal{U}(0.1,1.5)$.}
\vspace{-0.2in}
\label{fig:terrain-rand}.
\end{figure}

\subsection{Center-of-mass tracking performance}
\label{sec:performance}

The ability of the controller to track the quadruped's center-of-mass over a path is a meta metric, as one expects this metric to be dependent upon joint tracking accuracy. Figure~\ref{fig:terrain} shows that both the PID and the \textbf{ID($t_i$)} methods are able to track 
straight line paths fairly well. \textbf{ID($t_{i-1}$)}, which yielded better 
joint position tracking, does not track the center-of-mass as well. 
 \textbf{ID($t_{i-2}$)} results in worse tracking with respect to both joint position
 and center-of-mass position. We hypothesize that this discrepancy is due to
our observation that \textbf{ID($t_{i-1}$)} and \textbf{ID($t_{i-2}$)} yield significantly larger 
joint velocity tracking errors than the PID and \textbf{ID($t_i$)} controllers.

\begin{figure*}[h!]
\begin{tabular}{cc}
\includegraphics[width=0.5\linewidth]{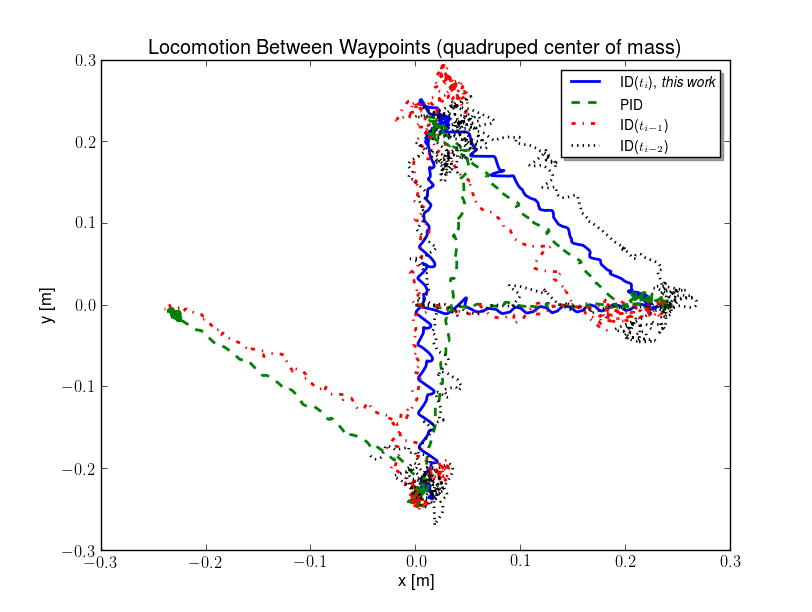} &
\includegraphics[width=0.5\linewidth]{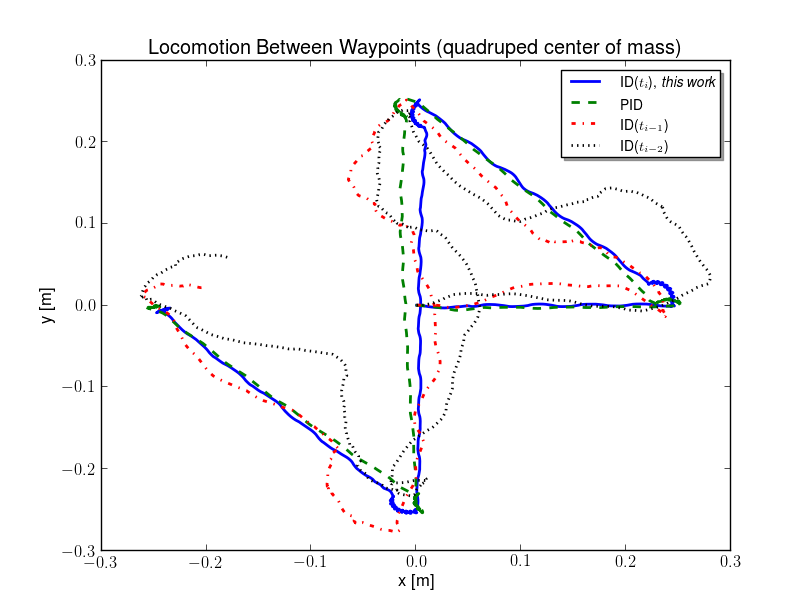} 
\end{tabular}
\caption{\label{fig:terrain}Center-of-mass path in the horizontal plane between waypoints over 30 seconds.  (left) high friction; (right) low friction. The quadruped is commanded
to follow straight line paths between points $\{(0,0),(0.25,0),(0,0.25),(0,-0.25),(-0.25,0)\}$.}
\end{figure*}

\subsubsection{Fixed base manipulator grasping a box}
	
	Trajectory tracking results for the fixed-base manipulator are presented in Figure~\ref{fig:manipulator-tracking}.  We observed large errors in the \textbf{PID} and \textbf{ID($t_{i-1}$)} controllers while grasping the box, the sensed force inverse dynamics controller was adversely affected when attempting to manipulate the sticky object, applying excessive forces while manipulating the box with high friction ($\mu = \infty$).  Both the \textbf{PID} and \textbf{ID($t_{i}$)}$_{QP,\mu}$ controllers dropped the box with low friction ($\mu = 1.0$) at about 1500 milliseconds. We observed the trajectory error quickly converge to zero after the inverse dynamics method dropped the grasped object, while the \textbf{PID} controller maintained a fairly high level of positional error.  The sensed contact inverse dynamics controller \textbf{ID($t_{i-1}$)} performed at the same accuracy as the predictive contact force inverse dynamics controller, and managed to not drop the box over the course of the three second experiment.

	Though the box slipped from the grasp of the inverse dynamics controlled manipulator, its tracking error did not increase substantially.  This demonstrates a capability of the controller to direct the robot through the task with intermittent contact transitions with heavy objects, while maintaining accuracy in performing its trajectory-following task. 	

\begin{figure}[H]
\center
\figuretitle{Trajectory Tracking: Manipulator}
\figuretitle{High friction ($\mu = \infty$), rigid surface}
\includegraphics[width=\linewidth]{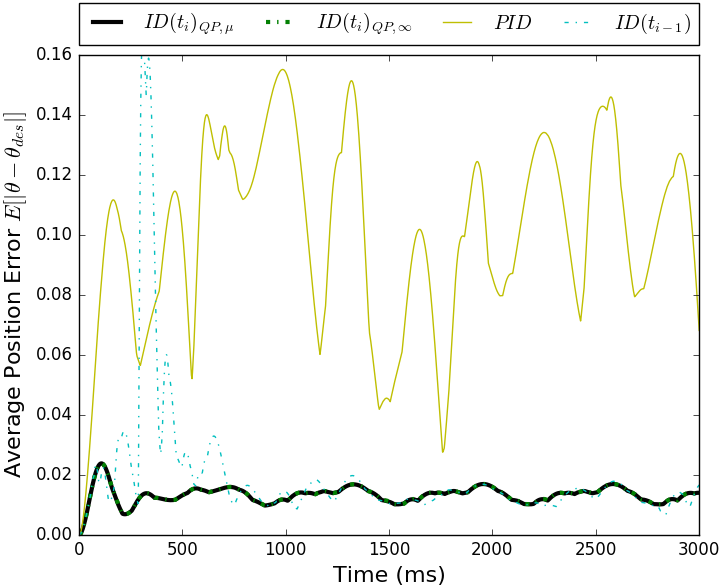} 
\figuretitle{Low friction ($\mu = 1.0$), rigid surface}
\includegraphics[width=\linewidth]{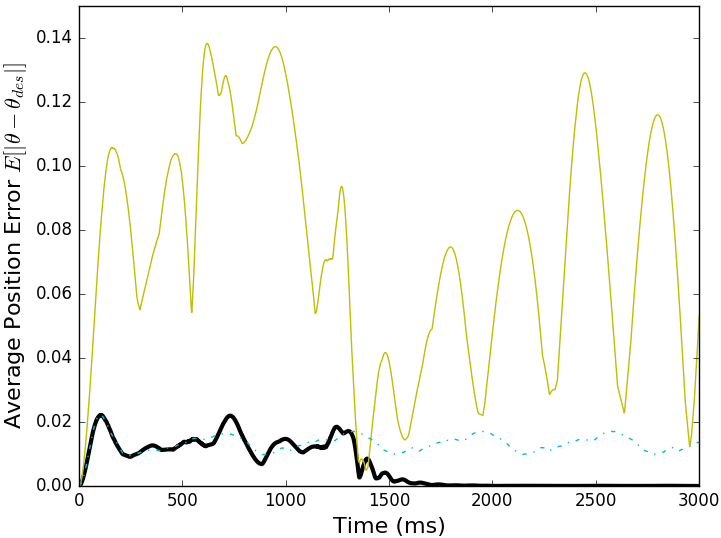} 
\vspace{-0.2in}
\caption{\label{fig:manipulator-tracking} Joint trajectory tracking for a fixed base manipulator grasping a heavy box (6000 $\frac{kg}{m^3}$) with friction: (top) $\mu = \infty$--- no-slip; and (bottom) $\mu = 1$.}
\vspace{-0.2in}
\end{figure}

\subsubsection{Running time experiments}
\label{section:timing}
\label{exp:speed}

\begin{figure}[H]
\begin{center}
\includegraphics[width=\linewidth]{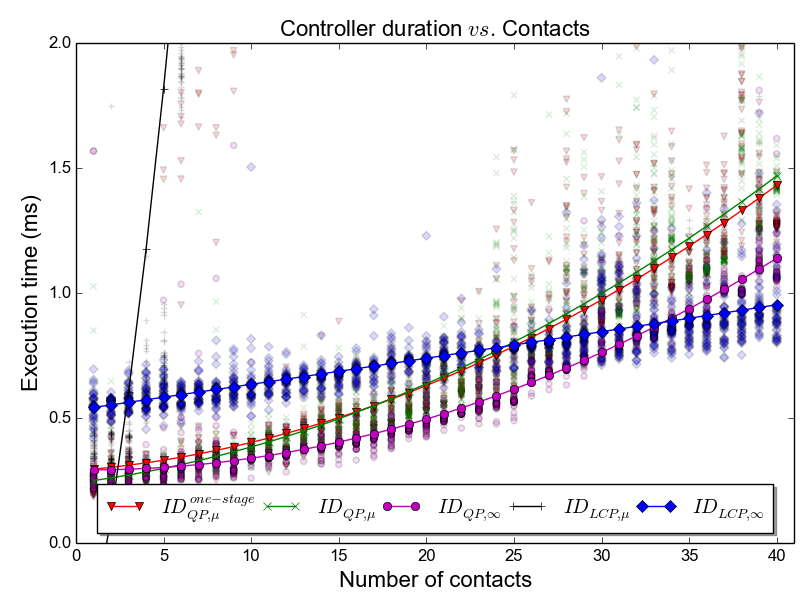}
\caption{\label{fig:timing-plot} Inverse dynamics controller runtimes for increasing numbers of contacts (Quadruped with spherical feet).}
\end{center}
\vspace{-0.2in}
\end{figure}

We measured the computation time for each controller during the quadruped experiments, artificially increasing the number of simultaneous contacts at each foot-ground interface.\footnote{Experiments were performed on a 2011 \software{MacBook Pro} Laptop with a 2.7 GHz Intel Core i7 CPU.} Figure~\ref{fig:timing-plot} shows that inverse dynamics method \textbf{ID($t_{i}$)}$_{LCP,\infty}$ scales linearly with additional contacts.  The fast pivoting algorithm (\textbf{ID($t_{i}$)}$_{LCP,\infty}$) can process in excess of 40 contacts while maintaining below a 1 ms runtime---capable of 1000 Hz control rate.  The experimental QP-based controllers: \textbf{ID($t_{i}$)}$_{QP,\infty}$, \textbf{ID($t_{i}$)}$_{QP,\mu}^\mathrm{one-stage}$, \textbf{ID($t_{i}$)}$_{QP,\mu}$ supported a control rate of 1000 Hz to about 30 total contacts when warm-starting the LCP solver with the previous solution.  Controller \textbf{ID($t_{i}$)}$_{LCP,\mu}$ did not support warm-starting or the fast pivoting algorithm and was only able to maintain around a 1 ms expected runtime for fewer than 4 contacts. The runtime for \textbf{ID($t_{i}$)}$_{LCP,\infty}$ was substantially higher than the QP-based model, despite a significantly reduced problem size, for fewer than approximately 30 points of contact.  This disparity 
is due to the high computational cost of Lines 6 and 10 in Algorithm~\ref{alg:find-indices}.

\subsection{Discussion of inverse dynamics based control for legged locomotion}

The inverse dynamics controller that predicts contact forces (\textbf{ID($t_{i}$)}$_{QP,\mu}$) performs well, at least in simulation, while mitigating destructive torque chatter. Over all tests, we observe the use of inverse dynamics control
with predicted contact forces, \emph{i.e.}, \textbf{ID($t_{i}$)}, can more closely track a joint 
trajectory than the alternatives---including inverse dynamics methods using sensed contact forces (even with perfect sensing) and PID control.  The inverse dynamics controller described in this work \textbf{ID($t_i$)} and the baseline PID controller,  were able to track center-of-mass position of quadrupeds with little deviation from the desired direction of motion, while the inverse dynamics controllers using sensed contact forces performed substantially worse at this task (as seen in Section~\ref{sec:performance}).  With these results in mind we find that contact force prediction has more potential for producing successful robot locomotion on terrain with varied surface properties.  We expect that as sensor technology and computational speeds improve, contact force prediction for inverse dynamics control will even more greatly outperform methods dependent on force sensors. 

\section{Conclusion}
We presented multiple, fast inverse dynamics methods---a method that assumes
no slip (extremely fast), a QP-based method without complementarity (very fast),
that same method with torque chattering mitigation (fast enough for 
real-time control loops at 1000Hz using current computational hardware on typical quadrupedal robots), and an
LCP-based method that enforces complementarity (fast). 
We showed that a right inverse exists for the rigid contact model with
complementarity, and we conducted 
asymptotic 
time complexity analysis for all inverse dynamics methods. Each method
is likely well suited to a particular application. For example, 
Section~\ref{section:results} found that the last of these methods yields
accurate contact force predictions (and therefore better joint trajectory
tracking) for virtual robots simulated with the Stewart-Trinkle/Anitescu-Potra contact models. Finally, we assessed performance---running times, joint
space trajectory tracking accuracy, and an indication of task execution
capability (for legged locomotion and manipulation)---under various contact
modeling assumptions.

\section{Acknowledgements}
This work was supported, in part, by NSF\\ \mbox{CMMI-110532} and
Open Source Robotics Foundation (under award NASA \mbox{NNX12AR09G}).

% !TEX root = AURO.tex

\begin{appendices}

\begin{landscape}
\section{. Scoring each inverse dynamics controller implementation}
\label{appendix:score-idyn}
The table in this section (Table~\ref{table:idyn-table}) scores each inverse dynamics controller implementation according to its qualitative performance over the course of each experiment.    

%\begin{tcfigure*}{Inverse dynamics use cases}
\begin{table}
\begin{center}
\hspace{-0.25in}
	\begin{tabular}{l | l l l l l}
		\toprule
		\bf Task   & \specialcell{No-slip (no impact) \\ \textbf{ID($t_{i}$)}$_{LCP,\infty}$  (Section~\ref{section:ID:no-slip}) } & \specialcell{No-slip (impact model) \\ \textbf{ID($t_{i}$)}$_{QP,\infty}$  (Section~\ref{section:approximate-idyn}) }& \specialcell{Coulomb friction (no impact) \\ \textbf{ID($t_{i}$)}$_{LCP,\mu}$  (Section~\ref{section:ID:Coulomb})} & \specialcell{Coulomb friction (impact model) \\ \textbf{ID($t_{i}$)}$_{QP,\mu}$  (Section~\ref{section:approximate-idyn}) }\\  
		\midrule
%		\multicolumn{5}{l}{\cellcolor{black!25}\bf 1.~(Over/fully-)defined contact constraints}  \\
%		\midrule
		\multirow{4}{1.6in}{\bf Biped walking / quadruped two-footed gait high friction terrain\\ $[0-2]$ contacts } 
			& \cellcolor{green!25}{\quad \textbf{PASS \dag}}					& \cellcolor{green!25}{\quad \textbf{PASS}} 							& \cellcolor{green!25}{\quad \textbf{PASS}} 							& \cellcolor{green!25}{\quad \textbf{PASS}}						\\
			& \gooditem continuous contact forces  		& \gooditem continuous contact forces  		& \gooditem continuous contact forces  		& \gooditem continuous contact forces  		\\
			& \gooditem distributed contact forces 	& \gooditem distributed contact forces	& \gooditem distributed contact forces	& \gooditem distributed contact forces 	\\
			& \gooditem fastest computation  				&  \baditem slower computation  				& \baditem slower computation  				& \baditem slowest computation  				\\
				\midrule
		\multirow{4}{1.6in}{\bf Biped walking / quadruped two-footed gait on low friction terrain\\ $[0-2]$ contacts } 
			& \cellcolor{red!25}{\quad \textbf{FAIL}} (slip and fall)				& \cellcolor{red!25}{\quad \textbf{FAIL}} (slip and fall)				& \cellcolor{green!25}{\quad \textbf{PASS \dag}} 							& \cellcolor{green!25}{\quad \textbf{PASS}} 							\\
			& \baditem excessive contact forces	 	& \baditem excessive contact forces		& \gooditem continuous contact forces  	& \gooditem continuous contact forces  		\\
			&								&								& \gooditem distributed contact forces	& \gooditem distributed contact forces 	\\
			& 						  		&						 		& \baditem slower computation  		& \baditem slowest computation  				\\
				\midrule
%		\multicolumn{5}{l}{\cellcolor{black!25}\bf 1.~Underdefined contact constraints}  \\
%				\midrule
		\multirow{4}{1.6in}{\bf Quadruped (or greater number of legs) walking on high friction terrain\\ $[2-n]$ contacts }
			& \cellcolor{green!25}{\quad \textbf{PASS}} 							& \cellcolor{green!25}{\quad \textbf{PASS \dag}}							& \cellcolor{red!25}{\quad \textbf{FAIL}} (torque chatter)						& \cellcolor{green!25}{\quad \textbf{PASS}} 							\\
			& \gooditem continuous contact forces  		& \gooditem continuous contact forces  		& \baditem discontinuous contact forces  		& \gooditem continuous contact forces  		\\
			& \baditem undistributed contact forces 	& \gooditem distributed contact forces 	& \baditem undistributed contact forces	& \gooditem distributed contact forces 	\\
			& \gooditem fastest computation  				&  \baditem slower computation  				&											& \baditem slowest computation  				\\
				\midrule
		\multirow{4}{1.6in}{\bf Quadruped (or greater number of legs) walking on low friction terrain\\ $[2-n]$ contacts }
			& \cellcolor{red!25}{\quad \textbf{FAIL}} (slip and fall)				& \cellcolor{red!25}{\quad \textbf{FAIL}} (slip and fall)				& \cellcolor{red!25}{\quad \textbf{FAIL}} (torque chatter)						& \cellcolor{green!25}{\quad \textbf{PASS \dag}} 							\\
			& \baditem excessive contact forces	 	& \baditem excessive contact forces		& \baditem discontinuous contact forces  		& \gooditem continuous contact forces  		\\
			& 										& 										& \baditem undistributed contact forces	& \gooditem distributed contact forces 	\\
			& 										& 										& 							  		& 							  		\\
				\midrule
		\multirow{4}{1.6in}{\bf Fixed base manipulator grasping a high friction object\\ $[2-n]$ contacts }
			& \cellcolor{green!25}{\quad \textbf{PASS}} 							& \cellcolor{green!25}{\quad \textbf{PASS \dag}}							& \cellcolor{red!25}{\quad \textbf{FAIL}} (torque chatter)						& \cellcolor{green!25}{\quad \textbf{PASS}} 							\\
			& \gooditem continuous contact forces  		& \gooditem continuous contact forces  		& \baditem discontinuous contact forces  		& \gooditem continuous contact forces  		\\
			& \baditem undistributed contact forces 	& \gooditem distributed contact forces 	& \baditem undistributed contact forces	& \gooditem distributed contact forces 	\\
			& \gooditem fastest computation  				&  \baditem slower computation  				&											& \baditem slowest computation  				\\
				\midrule
		 \multirow{4}{1.6in}{\bf Fixed base manipulator grasping a low friction object\\ $[2-n]$ contacts }
			& \cellcolor{red!25}{\quad \textbf{FAIL}} (slip and fall)				& \cellcolor{red!25}{\quad \textbf{FAIL}} (slip and fall)				& \cellcolor{red!25}{\quad \textbf{FAIL}} (torque chatter)						& \cellcolor{green!25}{\quad \textbf{PASS \dag}} 							\\
			& \baditem excessive contact forces	& \baditem excessive contact forces& \baditem discontinuous contact forces  		& \gooditem continuous contact forces  		\\
			& 							& 							& \baditem undistributed contact forces		& \gooditem distributed contact forces 	\\
			& 							& 							& 							  		& 							  		\\
			\bottomrule
	\end{tabular}
\end{center}
	\caption{\label{table:idyn-table} A table describing the behavior of each inverse dynamics controller implementation when used to control disparate robot morphologies through different tasks.  
	\newline If the robot performed the task without failing any of our criteria (no torque chatter, no falling) it is marked as a pass; Otherwise, the task will be marked as a failure for the reason noted in parenthesis. 
	\newline \textbf{\dag}: Indicates which inverse dynamics implementation we determined to be the best controller for the example task, prioritizing: \1 [critical] Successful performance of the task; \2 [critical] Mitigation of torque chatter (continuous contact forces); \3 [non-critical] Even distribution of contact forces (distributed contact forces); \4 [non-critical] Computation speed.}
\end{table}
%\end{tcfigure*}
 \end{landscape}
 
\section{Generalized contact wrenches}
\label{section:generalized-wrenches}
A contact wrench applied to a rigid body will take the form:
\begin{align}
\vect{q} \equiv \begin{bmatrix} \hat{\vect{q}} \\ \vect{r} \times \hat{\vect{q}} \end{bmatrix}
\end{align}
where $\hat{\vect{q}}$ is a vector in $\mathbb{R}^3$ and $\vect{r}$ is the vector from the center of mass of the rigid body to the point of contact (which we denote $\vect{p}$). For a multi-rigid body defined in $m$ minimal coordinates, a \emph{generalized contact wrench} $\mat{Q} \in \mathbb{R}^m$ for single point of contact $\vect{p}$ would take the form:
\begin{align}
\mat{Q} = \tr{\mat{J}}\vect{q}
\end{align}
where $\mat{J} \in \mathbb{R}^{6 \times m}$ is the manipulator Jacobian (see, \emph{e.g.},~\citealp{Sciavicco:2000am}) computed with respect to $\vect{p}$.

\section{Relationship between LCPs and MLCPs}
This section describes the \emph{mixed linear complimentarity problem} (MLCP) and its relationship to the ``pure'' LCP.
	
\label{section:LCPs}
	Assume the LCP ($\vect{r}, \mat{Q}$) for $\vect{r} \in \mathbb{R}^a$ and $\mat{Q} \in \mathbb{R}^{a \times a}$:
\begin{equation}
\begin{tabular}{cccc}
$\vect{w} = \mat{Q}\vect{z} + \vect{r}$ \quad & $\vect{w} \geq \vect{0}$  & $\vect{z} \geq \vect{0}$ & $\tr{\vect{z}}\vect{w}  = 0 $
\end{tabular}
\end{equation}
for unknown vectors $\vect{z}, \vect{w} \in \mathbb{R}^a$.  A mixed linear complementarity problem (MLCP) is defined by the matrices $\mat{A} \in \mathbb{R}^{p \times s}$, $\mat{C} \in \mathbb{R}^{p \times t}$, $\mat{D} \in \mathbb{R}^{r \times s}$, $\mat{B} \in \mathbb{R}^{r \times t}$, $\vect{x} \in \mathbb{R}^s$, $\vect{y} \in \mathbb{R}^t$, $\vect{g} \in \mathbb{R}^p$, and $\vect{h} \in \mathbb{R}^r$ (where $p=s$ and $r=t$) and is subject to the following constraints:
\begin{align}
\mat{A}\vect{x} + \mat{C}\vect{z} + \vect{g} & = \vect{0} \label{eqn:MLCP-begin} \\
\mat{D}\vect{x} + \mat{B}\vect{z} + \vect{h} & \geq \vect{0} \\
\vect{z} & \geq \vect{0} \\
\tr{\vect{z}}(\mat{D}\vect{x} + \mat{B}\vect{z} + \vect{h}) & = 0 \label{eqn:MLCP-end}
\end{align}
The $\vect{x}$ variables are unconstrained, while the $\vect{z}$ variables must be non-negative.  If $\mat{A}$ is non-singular, the unconstrained variables can be computed as:
\begin{align}
\vect{x} = -\inv{\mat{A}}(\mat{C}\vect{z} + \vect{g}) \label{eqn:MLCP-free}
\end{align}
Substituting $\vect{x}$ into Equations \ref{eqn:MLCP-begin}--\ref{eqn:MLCP-end} yields the pure LCP ($\vect{r}, \mat{Q}$):
\begin{align}
\mat{Q} & \equiv \mat{B} - \mat{D}\inv{\mat{A}}\mat{C} \label{eqn:MLCP-LCP1} \\
\vect{r} & \equiv \vect{h} - \mat{D}\inv{\mat{A}}\vect{g} \label{eqn:MLCP-LCP2}
\end{align}
A solution $(\vect{z}, \vect{w})$ to this LCP obeys the relationship $\mat{Q}\vect{z} + \vect{r} = \vect{w}$; given $\vect{z}$, $\vect{x}$ is determined via Equation~\ref{eqn:MLCP-free}, solving the MCLP.

\section{The Principal Pivoting Method for solving LCPs}
\label{section:pivoting}
	
  The Principal Pivot Method I~\citep{Cottle:1968,Murty:1988} (PPM), which solves LCPs with $P$-matrices (complex square matrices with fully non-negative principal minors~\citep{Murty:1988} that includes positive semi-definite matrices as a proper subset). The resulting algorithm limits the size of matrix solves and multiplications.

The PPM uses sets $\alpha$, $\overline{\alpha}$, $\beta$, and $\overline{\beta}$ for LCP variables $\vect{z}$ and $\vect{w}$. The first two sets correspond to the $\vect{z}$ variables while the latter two correspond to the $\vect{w}$ variables. The sets have the following properties for an LCP of order $n$:
\begin{enumerate}
\item $\alpha \cup \overline{\alpha} = \{ 1, \ldots, n \}$
\item $\alpha \cap \overline{\alpha} = \emptyset$
\item $\beta \cup \overline{\beta} = \{ 1, \ldots, n \}$
\item $\beta \cap \overline{\beta} = \emptyset$
\end{enumerate}
Of a pair of LCP variables, $(z_i, w_i)$, index $i$ will either be in $\alpha$ and $\overline{\beta}$ or $\beta$ and $\overline{\alpha}$. If an index belongs to $\alpha$ or $\beta$, the variable is a \emph{basic variable}; otherwise, it is a \emph{non-basic variable}. Using this set, partition the LCP matrices and vectors as shown below:
\begin{equation}
\begin{bmatrix}
\vect{w}_{\beta} \\
\vect{w}_{\overline{\beta}}
\end{bmatrix} =
\begin{bmatrix}
\mat{A}_{\beta \alpha} & \mat{A}_{\beta \overline{\alpha}} \\ 
\mat{A}_{\overline{\beta} \alpha} & \mat{A}_{\overline{\beta} \overline{\alpha}} 
\end{bmatrix}
\begin{bmatrix}
\vect{z}_{\alpha}\\
\vect{z}_{\overline{\alpha}} 
\end{bmatrix}
+
\begin{bmatrix}
\vect{q}_{\beta}\\
\vect{q}_{\overline{\beta}}
\end{bmatrix} \nonumber
\end{equation}
Isolating the basic and non-basic variables on different sides yields:
\begin{align}
\begin{bmatrix}
\vect{z}_{\overline{\alpha}} \\
\vect{w}_{\overline{\beta}}
\end{bmatrix} = &
\begin{bmatrix}
-\mat{A}_{\beta \overline{\alpha}}\mat{A}_{\beta \alpha} &
\inv{\mat{A}}_{\beta \overline{\alpha}} \\
\mat{A}_{\overline{\beta} \alpha} - \mat{A}_{\overline{\beta} \overline{\alpha}}\inv{\mat{A}}_{\beta \overline{\alpha}}\mat{A}_{\beta \alpha} &
\mat{A}_{\overline{\beta} \overline{\alpha}}\inv{\mat{A}}_{\beta \overline{\alpha}}
\end{bmatrix}
\begin{bmatrix}
\vect{z}_\alpha \\
\vect{w}_\beta 
\end{bmatrix}
+ \ldots \nonumber \\
& \quad
\begin{bmatrix}
-\inv{\mat{A}}_{\beta \overline{\alpha}}\vect{q}_\beta \\
-\mat{A}_{\overline{\beta} \overline{\alpha}}\inv{\mat{A}}_{\beta \overline{\alpha}}\vect{q}_\beta + \vect{q}_{\overline{\beta}}
\end{bmatrix} \nonumber
\end{align}
If we set the values of the basic variables to zero, then solving for the values of the non-basic variables $\vect{z}_{\overline{\alpha}}$ and $\vect{w}_{\overline{\beta}}$ entails only computing the vector (repeated from above):
\begin{equation}
\begin{bmatrix}
-\inv{\mat{A}}_{\beta \overline{\alpha}}\vect{q}_\beta \\
-\mat{A}_{\overline{\beta} \overline{\alpha}}\inv{\mat{A}}_{\beta \overline{\alpha}}\vect{q}_\beta + \vect{q}_{\overline{\beta}}
\end{bmatrix}
\end{equation} 
	
	PPM~I operates in the following manner: \1 Find an index $i$ of a basic variable $x_i$ (where $x_i$ is either $w_i$ or $z_i$, depending which of the two is basic) such that $x_i < 0$; \2 swap the variables between basic and non-basic sets for index $i$ (\emph{e.g.}, if $w_i$ is basic and $z_i$ is non-basic, make $w_i$ non-basic and $z_i$ basic); \3 determine new values of $\vect{z}$ and $\vect{w}$; \4 repeat \1\hspace{.75mm}--\hspace{.1mm}\3 until no basic variable has a negative value. 

\section{Proof that removing linearly dependent equality constraints from the MLCP in Section~\ref{section:no-slip} does not alter the MLCP solution}
\label{section:mlcp-indep-constraints}
\begin{theorem}
The solution to the MLCP in Equations~\ref{eqn:no-slip-MLCP1} and~\ref{eqn:no-slip-MLCP2} without linearly dependent equality constraints removed from $\mat{A}$ is identical to the solution to the MLCP with reduced $\mat{A}$ matrix and unconstrained variables set to zero that correspond to the linearly dependent equality constraints.
\end{theorem}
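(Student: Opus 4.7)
The plan is to show that the reduced MLCP and the original MLCP are equivalent in the sense that (i) $\vect{f}_N$ and $\vect{w}_N$ (the complementary variables) take the same values, and (ii) given a solution of either, one can construct a solution of the other by zero-padding or by redistributing force contributions. The key algebraic tool is that Algorithm~\ref{alg:find-indices} removes a row $\tr{\mat{S}}_i$ exactly when it is linearly dependent on the rows already accumulated in $\mat{X} = \begin{bmatrix}\tr{\mat{P}} & \tr{\mat{S}}_{\mathcal{S}} & \tr{\mat{T}}_{\mathcal{T}}\end{bmatrix}$, so each removed row admits an explicit expansion $\tr{\mat{S}}_i = \sum_j c_{ij}\tr{\mat{P}}_j + \sum_{k\in\mathcal{S}} d_{ik}\tr{\mat{S}}_k + \sum_{l\in\mathcal{T}} e_{il}\tr{\mat{T}}_l$, and analogously for removed rows of $\mat{T}$.

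The first (forward) direction is almost immediate. Take a solution $(\after\vect{v}^*,\vect{\tau}^*,\vect{f}_S^*,\vect{f}_T^*,\vect{f}_N^*,\vect{w}_N^*)$ of the reduced MLCP and extend it by setting $f_{S_i}=0$ for every $i\notin\mathcal{S}$ and $f_{T_l}=0$ for every $l\notin\mathcal{T}$. The dynamics row (Equation~\ref{eqn:no-slip-MLCP1}, first block) and all retained equality rows hold by construction; the normal complementarity row is unchanged. The only nontrivial check is the removed equality $\mat{S}_i\after\vect{v}^*=0$: substituting the expansion of $\tr{\mat{S}}_i$ into $\mat{S}_i\after\vect{v}^*$ and using $\mat{P}\after\vect{v}^*=\dot{\vect{q}}_{\textrm{des}}$, $\mat{S}_{\mathcal{S}}\after\vect{v}^*=\vect{0}$, $\mat{T}_{\mathcal{T}}\after\vect{v}^*=\vect{0}$ collapses it to the scalar compatibility condition $\sum_j c_{ij}(\dot{\vect{q}}_{\textrm{des}})_j=0$, which is precisely the necessary condition for the original MLCP to admit any solution.

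The reverse direction is where the redistribution argument does the work. Given a solution of the original MLCP, for each removed index $i$ I rewrite the term $\tr{\mat{S}}_i f_{S_i}^*$ in the dynamics using the expansion above, so that it contributes $c_{ij}f_{S_i}^*$ to the $j$-th component of $\vect{\tau}$, $d_{ik}f_{S_i}^*$ to $f_{S_k}$ for $k\in\mathcal{S}$, and $e_{il}f_{S_i}^*$ to $f_{T_l}$ for $l\in\mathcal{T}$. Performing these absorptions for all removed indices yields new unconstrained variables $(\vect{\tau}',\vect{f}_S'|_\mathcal{S},\vect{f}_T'|_\mathcal{T})$ that, together with the unchanged $\after\vect{v}^*$, $\vect{f}_N^*$, and $\vect{w}_N^*$, satisfy the reduced MLCP; the retained equality constraints are untouched because the removed rows are not part of the reduced system, and the normal complementarity block involves none of the reshuffled variables.

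The main obstacle is the bookkeeping: I must verify that the absorption described in the reverse direction is internally consistent (the same retained Lagrange multiplier can be a target of multiple expansions), and that no extra terms leak into the normal-force block. Both points reduce to the fact that the expansion coefficients are those produced by expressing $\tr{\mat{S}}_i$ in the basis formed by the rows retained by Algorithm~\ref{alg:find-indices}; non-singularity of $\tr{\mat{X}}\inv{\mat{M}}\mat{X}$ after the algorithm terminates guarantees this basis is unique, so the redistribution is well-defined. Equivalence of the LCP outputs $\vect{f}_N$ and $\vect{w}_N$ then follows because these quantities are invariant under every manipulation used, completing the proof.
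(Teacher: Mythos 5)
Your proof is correct, and it is in fact more complete than the paper's own argument. The paper's proof runs in only one direction and coincides with your forward step: it partitions the equality rows into an independent block $\mat{U}$ and a dependent block $\mat{Z}\mat{U}$, zero-pads the multipliers of the dependent rows, and observes that $\mat{U}\after\vect{v} = \vect{0}$ forces $\mat{Z}\mat{U}\after\vect{v} = \vect{0}$; by writing a zero right-hand side for \emph{every} equality row it silently sidesteps the fact that the rows of $\mat{P}$ (which are always retained) carry the right-hand side $\dot{\vect{q}}_{\textrm{des}}$. Your forward direction makes that issue explicit as the compatibility condition $\sum_j c_{ij}(\dot{\vect{q}}_{\textrm{des}})_j = 0$, which is exactly the condition under which the full MLCP is solvable at all---it fails precisely in the paper's own ``splayed legs'' example of inconsistent desired velocities---so your account of when the zero-padding works is sharper than the paper's. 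Your reverse direction, absorbing each removed wrench $\tr{\mat{S}}_i f_{S_i}$ into the retained multipliers via the unique expansion in the retained basis (well-defined by the non-singularity of $\tr{\mat{X}}\inv{\mat{M}}\mat{X}$, and harmless because the expansion never involves rows of $\mat{N}$ and the redistributed variables $\vect{\tau}$, $\vect{f}_S$, $\vect{f}_T$ all sit in the unconstrained block of the MLCP), has no counterpart in the paper, which never argues that a solution of the full system induces one of the reduced system. What the extra direction buys is a genuine two-way equivalence in the physically meaningful quantities $(\after\vect{v}, \vect{f}_N, \vect{w}_N)$ rather than a one-way containment, which is what the word ``identical'' in the theorem statement really demands once one notes that the tangential multipliers of the full system are non-unique; what the paper's shorter argument buys is brevity, at the cost of leaving both the compatibility condition and the converse implicit.
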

\begin{proof}
Assume that $\mat{U}$ is a matrix with rows consisting of a set of linearly independent vectors $\{ \vect{u}_1, \ldots, \vect{u}_n \}$, where $n \in \mathbb{N}$.  Each of these vectors comes from a row of $\mat{P}$, $\mat{S}$, or $\mat{T}$. Assume $\mat{W}$ is a matrix with rows consisting of vectors $\{ \vect{w}_1, \ldots, \vect{w}_m \}$, each of which is a linear combination of the rows of $\mat{U}$, for $m \in \mathbb{N}$. $\mat{U}$ and $\mat{W}$ are related in the following way: $\mat{Z} \cdot \mat{U} = \mat{W}$, for some matrix $\mat{Z}$. The MLCP from Equations~\ref{eqn:no-slip-MLCP1}~and~\ref{eqn:no-slip-MLCP2} can then be rewritten as:
 \begin{align}
&\hspace{-1.5mm}\begin{bmatrix}
\mat{M} & -\tr{\mat{U}} & -\tr{(\mat{Z}\mat{U})} & -\tr{\mat{N}} \\
\mat{U} & \mat{0} & \mat{0} & \mat{0}  \\
\mat{Z}\mat{U} & \mat{0} & \mat{0} & \mat{0}  \\
\mat{N} & \mat{0} & \mat{0} & \mat{0}  \\
\end{bmatrix}
\begin{bmatrix}
\after \vect{v} \\
\vect{f}_U \\
\vect{0} \\
\vect{f}_N \\
\end{bmatrix}
\hspace{-1mm} + \hspace{-1mm}
\begin{bmatrix}
\vect{\kappa} \\
\vect{0} \\
\vect{0} \\
\frac{\before \vect{\phi}}{\Delta t}
\end{bmatrix} \hspace{-1mm}
= \hspace{-1mm}
\begin{bmatrix}
\vect{0} \\
\vect{0} \\
\vect{0} \\
\vect{w}_N
\end{bmatrix} \\
& \vect{f}_N \ge \vect{0}, \vect{w}_N \ge \vect{0}, \tr{\vect{f}}_N \vect{w}_N = 0 
\end{align}
\normalsize
where $\vect{f}_U$ are unconstrained variables that correspond to the 
linearly independent equality constraints. 
Note that the value of $\vect{0}$ is assigned to the variables corresponding to the linearly dependent equality constraints. Since values for $\after \vect{v}$, $\vect{f}_U$, and $\vect{f}_N$ that satisfy the equations above require $\mat{U} \after \vect{v} = \vect{0}$, the constraint $\mat{Z}\mat{U} \after \vect{v} = \vect{0}$ is automatically satisfied. \qed
\end{proof}

\section{Proof that no more than $m$ positive force magnitudes need be applied along contact normals to a $m$ degree of freedom multibody to solve contact model constraints}
	\label{thm:maxcard}
This proof will use the matrix of generalized contact wrenches, $\mat{N}\in \mathbb{R}^{n \times m}$ (introduced in Section~\ref{sec:jacobian-evaluation}), and $\mat{M} \in \mathbb{R}^{m \times m}$, the generalized inertia matrix for the multi-body. $\vect{z}_I$ is the vector of contact force magnitudes and consists of strictly positive values.

Assume we permute and partition the rows of $\mat{N}$ into $r$ linearly independent and $n-r$ linearly dependent rows, denoted by indices $I$ and $D$, respectively, as follows:
\begin{equation}
\mat{N} = \begin{bmatrix}\mat{N}_I \\ \mat{N}_D \end{bmatrix}
\end{equation}
Then the LCP vectors $\vect{q} = \mat{N}\vect{v}$, $\vect{z} \in \mathbb{R}^n$, and $\vect{w} \in \mathbb{R}^n$ and LCP matrix $\mat{Q}=\mat{N}\inv{\mat{M}}\tr{\mat{N}}$ can be partitioned as follows:
\begin{equation}
\begin{bmatrix} \mat{Q}_{II} & \mat{Q}_{ID} \\ \mat{Q}_{DI} & \mat{Q}_{DD} \end{bmatrix}\begin{bmatrix}\vect{z}_I \\ \vect{z}_D \end{bmatrix} + \begin{bmatrix}\vect{q}_I \\ \vect{q}_D \end{bmatrix} = \begin{bmatrix}\vect{w}_I \\ \vect{w}_D \end{bmatrix}
\end{equation}
Given some matrix $\boldsymbol{\gamma} \in \mathbb{R}^{(n-r) \times r}$, it is the case that $\mat{N}_{D} = \boldsymbol{\gamma}\mat{N}_{I}$, and therefore that $\mat{Q}_{DI} =  \boldsymbol{\gamma}\mat{N}_I\inv{\mat{M}}\tr{\mat{N}_{I}}$, $\mat{Q}_{ID} =  \mat{N}_I\inv{\mat{M}}\tr{\mat{N}_{I}}\tr{\boldsymbol{\gamma}}$ (by symmetry), \newline $\mat{Q}_{DD} = \boldsymbol{\gamma}\mat{N}_I\inv{\mat{M}}\tr{\mat{N}_{I}}\tr{\boldsymbol{\gamma}}$, and $\vect{q}_D = \boldsymbol{\gamma}\mat{N}_I\vect{v}$.

\begin{lemma}
\label{lemma:rank}
Since $\rank{\mat{N}\mat{M}} \leq \min\left(\rank{\mat{N}},\rank{\mat{M}}\right)$, the number of positive components of $\vect{z}_I$ can not be greater than rank$(\mat{N})$.
\end{lemma}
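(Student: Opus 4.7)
The plan is to observe that the lemma is essentially a dimension-counting statement that follows directly from the construction of the partition, with the cited rank inequality serving only to situate the bound in context. By how $I$ was defined — as the indices of a maximal linearly independent subset of rows of $\mat N$ — we have $|I| = \rank(\mat N)$ and hence $\vect z_I \in \mathbb{R}^{\rank(\mat N)}$. Since the number of strictly positive entries of any vector is trivially bounded above by its dimension, $\vect z_I$ has at most $\rank(\mat N)$ positive components. So first I would simply invoke this dimensional observation; no further machinery is required for the lemma itself.

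The second step is to spell out the role of the rank inequality $\rank(\mat N \mat M) \leq \min(\rank(\mat N),\rank(\mat M))$. Applied to the LCP matrix $\mat Q = \mat N \inv{\mat M}\tr{\mat N}$, and using the fact that $\mat M$ is positive definite so that $\rank(\inv{\mat M}) = m$, this yields $\rank(\mat Q) \leq \min(\rank(\mat N), m) \leq m$. In other words, $\rank(\mat N)$ is itself at most $m$, so the lemma's bound on positive components of $\vect z_I$ refines to $m$ — which is precisely the ambient degree-of-freedom count targeted by the enclosing theorem. This is the only substantive content beyond the dimensional argument, and it is a one-line application of the stated rank inequality.

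The main obstacle, which lies outside this lemma but is set up by it, is the companion step for the enclosing theorem: showing that the dependent components $\vect z_D$ can be eliminated without loss of generality, so that the count of positive components of the full vector $\vect z$ — not merely of $\vect z_I$ — is also bounded by $\rank(\mat N) \leq m$. The natural identity to exploit is $\mat N_D = \boldsymbol{\gamma}\mat N_I$, which gives $\tr{\mat N}\vect z = \tr{\mat N_I}(\vect z_I + \tr{\boldsymbol{\gamma}}\vect z_D)$, so the generalized contact wrench produced by any $\vect z \geq \vect 0$ can be reproduced by a vector supported only on $I$. The delicate part is preserving non-negativity of the absorbed magnitudes $\vect z_I + \tr{\boldsymbol{\gamma}}\vect z_D$ while maintaining complementarity with $\vect w$; this typically requires a Carathéodory-style reduction on the convex cone generated by the columns of $\tr{\mat N}$, rather than the naive substitution. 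That complication, however, does not affect the lemma as stated, which is immediate from dimension alone.
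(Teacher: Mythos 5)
Your argument is correct, but it takes a genuinely different route from the paper's own proof. The paper devotes its entire proof to establishing the premise of the lemma --- the product-rank inequality rank($\mat{N}\mat{M}$) $\leq \min(\textrm{rank}(\mat{N}),\textrm{rank}(\mat{M}))$ --- by noting that linear dependencies among the columns of $\mat{M}$ carry over, with identical coefficients, to the columns of $\mat{N}\mat{M}$, and then applying the same argument to the transposes; it then asserts the claim about $\vect{z}_I$ without spelling out how the inequality delivers the bound on positive components. You instead go straight at the conclusion: since $I$ indexes a maximal linearly independent subset of the rows of $\mat{N}$, $|I| = \textrm{rank}(\mat{N})$, so $\vect{z}_I$ has exactly rank($\mat{N}$) entries and trivially no more than that many positive ones. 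Your dimension count makes explicit the logical step the paper leaves implicit (and is arguably the honest content of the lemma), while the paper's version has the complementary virtue of actually justifying the rank inequality it cites, which you take as given, and which supplies the chain rank($\mat{N}$) $\leq m$ that the enclosing appendix needs. Your closing remarks about eliminating $\vect{z}_D$ correctly identify the remaining work, though note that the paper's subsequent theorem does not need a Carath\'eodory-style reduction on the cone generated by the columns of $\tr{\mat{N}}$: it simply sets $\vect{z}_D = \vect{0}$ and verifies complementarity directly via $\vect{w}_D = \boldsymbol{\gamma}\vect{w}_I = \vect{0}$, so the delicate absorption of $\tr{\boldsymbol{\gamma}}\vect{z}_D$ into $\vect{z}_I$ that you anticipate never arises.
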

\begin{proof}
The columns of $\mat{N}\mat{M}$ have $\mat{N}$ multiplied by each column of $\mat{M}$, \emph{i.e.}, $\mat{N}\mat{M} = \begin{bmatrix} \mat{N}\vect{m}_1 & \mat{N}\vect{m}_2 & \ldots & \mat{N}\vect{m}_m \end{bmatrix}$.  Columns in $\mat{M}$ that are linearly dependent will thus produce columns in $\mat{N}\mat{M}$ that are linearly dependent (with precisely the same coefficients). Thus, rank($\mat{N}\mat{M}$) $\leq$ rank($\mat{M}$). Applying the same argument to the transposes produces rank($\mat{N}\mat{M}$) $\leq$ rank($\mat{N}$), thereby proving the claim. \qed
\end{proof}
We now show that no more positive force magnitudes are necessary to solve the LCP in the case that the number of positive components of $\vect{z}_I$ is equal to the rank of $\mat{N}$.

\begin{theorem}
If $(\vect{z}_I = \vect{a}, \vect{w}_I = \vect{0})$ is a solution to the LCP $(\vect{q}_I, \mat{Q}_{II})$, then $(\begin{bmatrix}\tr{\vect{z}_I} = \tr{\vect{a}} & \tr{\vect{z}_D} = \tr{\vect{0}}\end{bmatrix}^{\mathsf{T}}, \vect{w} = \vect{0})$ is a solution to the LCP $(\vect{q}, \mat{Q})$.
\end{theorem}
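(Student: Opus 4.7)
The plan is to verify, directly, that the proposed pair satisfies the three defining conditions of the LCP $(\vect{q},\mat{Q})$: (i) non-negativity of $\vect{z}$, (ii) non-negativity of $\vect{w}$ together with the linear relation $\vect{w}=\mat{Q}\vect{z}+\vect{q}$, and (iii) complementarity $\tr{\vect{z}}\vect{w}=0$. Conditions (i) and (iii) are essentially for free: $\vect{z}_I=\vect{a}\ge\vect{0}$ comes from the hypothesis, $\vect{z}_D=\vect{0}\ge\vect{0}$ by choice, and since we propose $\vect{w}=\vect{0}$ the complementarity condition $\tr{\vect{z}}\vect{w}=0$ holds trivially. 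All of the content therefore lies in showing that $\mat{Q}\vect{z}+\vect{q}=\vect{0}$ block by block.

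I would split this linear equation into its $I$ and $D$ row blocks and handle them separately. For the $I$-block, substituting $\vect{z}_D=\vect{0}$ collapses $\mat{Q}_{II}\vect{z}_I+\mat{Q}_{ID}\vect{z}_D+\vect{q}_I$ to $\mat{Q}_{II}\vect{a}+\vect{q}_I$, which equals $\vect{0}$ by the assumed solution of the smaller LCP $(\vect{q}_I,\mat{Q}_{II})$ (recall $\vect{w}_I=\vect{0}$ in that solution). For the $D$-block, I would invoke the linear dependence relation $\mat{N}_D=\boldsymbol{\gamma}\mat{N}_I$, which the excerpt already uses to derive $\mat{Q}_{DI}=\boldsymbol{\gamma}\mat{Q}_{II}$ and $\vect{q}_D=\boldsymbol{\gamma}\mat{N}_I\vect{v}=\boldsymbol{\gamma}\vect{q}_I$. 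Substituting $\vect{z}_D=\vect{0}$ into $\mat{Q}_{DI}\vect{z}_I+\mat{Q}_{DD}\vect{z}_D+\vect{q}_D$ yields $\boldsymbol{\gamma}\mat{Q}_{II}\vect{a}+\boldsymbol{\gamma}\vect{q}_I=\boldsymbol{\gamma}(\mat{Q}_{II}\vect{a}+\vect{q}_I)$, and the bracketed quantity is exactly the $I$-block residual already shown to be zero. Hence $\vect{w}_D=\vect{0}$ as well.

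There is no serious obstacle to this argument; the entire proof reduces to exploiting the factorization $\mat{Q}_{DI}=\boldsymbol{\gamma}\mat{Q}_{II}$ and $\vect{q}_D=\boldsymbol{\gamma}\vect{q}_I$ that follows from $\mat{N}_D=\boldsymbol{\gamma}\mat{N}_I$. The only subtlety worth flagging explicitly is that the relation $\vect{q}_D=\boldsymbol{\gamma}\vect{q}_I$ uses the specific form $\vect{q}=\mat{N}\vect{v}$ stated just before the theorem (rather than some general right-hand side), so I would state that dependence at the beginning of the proof and then let the block-wise verification fall out in two short lines. Combined with Lemma~\ref{lemma:rank}, which bounds the cardinality of the support of $\vect{z}_I$ by $\rank(\mat{N})\le m$, this theorem delivers the desired conclusion that no more than $m$ strictly positive normal contact force magnitudes are ever needed, which is the fact Algorithm~\ref{alg:PPM} exploits to achieve its $O(m^3+nm^2)$ running time.
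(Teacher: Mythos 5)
Your proof is correct and follows essentially the same route as the paper's: both arguments reduce the $D$-block residual to $\boldsymbol{\gamma}$ applied to the $I$-block residual (equivalently, $\boldsymbol{\gamma}\vect{w}_I = \boldsymbol{\gamma}\vect{0}$) via the linear-dependence relation $\mat{N}_D = \boldsymbol{\gamma}\mat{N}_I$, with the remaining LCP conditions holding trivially. Your explicit flagging of the dependence on $\vect{q} = \mat{N}\vect{v}$ is a small but worthwhile clarification over the paper's version.
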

\begin{proof}
For $(\begin{bmatrix}\tr{\vect{z}_I} = \tr{\vect{a}} & \tr{\vect{z}_D} = \tr{\vect{0}}\end{bmatrix}^{\mathsf{T}}, \vect{w} = \vect{0})$ to be a solution to the LCP $(\vect{q}, \mat{Q})$, six conditions must be satisfied:
\begin{enumerate}
\item $\vect{z}_I \geq \vect{0}$
\item $\vect{w}_I \geq \vect{0}$
\item $\tr{\vect{z}_I}\vect{w}_I = 0$
\item $\vect{z}_D \geq \vect{0}$
\item $\vect{w}_D \geq \vect{0}$
\item $\tr{\vect{z}_D}\vect{w}_D = 0$
\end{enumerate}
Of these, \1\,, \4, and \6 are met trivially by the assumptions of the theorem. Since $\vect{z}_D = \vect{0}$, $\mat{Q}_{II}\vect{z}_I + \mat{Q}_{ID}\vect{z}_D + \vect{q}_I = \vect{0}$, and thus $\vect{w}_I = \vect{0}$, thus satisfying \2 and \3. Also due to $\vect{z}_D = \vect{0}$, it suffices to show for \5 that $\mat{Q}_{DI}\vect{z}_I + \vect{q}_D \geq \vect{0}$. From above, the left hand side of this equation is equivalent to $\boldsymbol{\gamma}(\mat{N}_I\inv{\mat{M}}\tr{\mat{N}_I}\vect{a} + \mat{N}_I\vect{v})$, or $\boldsymbol{\gamma} \vect{w}_I$, which itself is equivalent to $\boldsymbol{\gamma} \vect{0}$. Thus, $\vect{w}_D = \vect{0}$. \qed
\end{proof}

\end{appendices}

\bibliographystyle{harvard}
\bibliography{new}

\end{document}